%% file: submit_icml2026_v1_aid_itd_majorfix_no_f2sa_article.tex
\documentclass[11pt]{article}
\usepackage[a4paper,margin=1in]{geometry}
\usepackage[T1]{fontenc}
\usepackage[utf8]{inputenc}
\input{math_commands.tex}

\usepackage{microtype}
\usepackage{graphicx}
\usepackage{subcaption}
\usepackage{booktabs} % for professional tables
\usepackage[most]{tcolorbox}
\usepackage{xcolor}
\usepackage[round,authoryear]{natbib}
\usepackage{algorithm}
\usepackage{algorithmic}

\tcbset{
  myframe/.style={ % 定义一个名为 mytheoremframe 的样式
    colback=white, % 框体内部的背景颜色
    colframe=blue, % 框线的颜色
    boxrule=1pt, % 框线的粗细
    arc=4pt, % 框体的圆角半径
    outer arc=4pt, % 外部圆角半径（如果需要）
    top=3mm, bottom=3mm, left=3mm, right=3mm, % 内边距
  }
}

\usepackage{hyperref}
\hypersetup{hidelinks}

\usepackage{amsmath}
\usepackage{amssymb}
\usepackage{mathtools}
\usepackage{amsthm}

\usepackage[capitalize,noabbrev]{cleveref}

\theoremstyle{plain}
\newtheorem{theorem}{Theorem}[section]

\newtheorem{lemma}[theorem]{Lemma}

\theoremstyle{definition}
\newtheorem{definition}[theorem]{Definition}
\newtheorem{assumption}[theorem]{Assumption}
\theoremstyle{remark}
\newtheorem{remark}[theorem]{Remark}

\title{Sharper Analysis of Single-Loop Methods for Bilevel Optimization}

\author{%
Yubo Zhou\textsuperscript{1}\quad
Jun Shu\textsuperscript{1}\quad
Luo Luo\textsuperscript{2}\quad
Junmin Liu\textsuperscript{1}\\
Deyu Meng\textsuperscript{1}\quad
Guang Dai\textsuperscript{3}\quad
Haishan Ye\textsuperscript{4}\\[0.8em]
\small \textsuperscript{1}School of Mathematics and Statistics, Xi'an Jiaotong University\\
\small \textsuperscript{2}School of Data Science, Fudan University\\
\small \textsuperscript{3}SGIT AI Lab, State Grid Corporation of China\\
\small \textsuperscript{4}Center for Intelligent Decision-Making and Machine Learning, School of Management,\\
\small Xi'an Jiaotong University
}

\date{}

\begin{document}
\maketitle

\begin{abstract}
  Bilevel optimization underpins many machine learning applications, including hyperparameter optimization, meta-learning, neural architecture search, and reinforcement learning. While hypergradient-based methods have advanced significantly, a gap persists between theoretical guarantees and practical single-loop implementations required for efficiency. We bridge this gap by establishing sharper convergence results for single-loop approximate implicit differentiation (AID) and iterative differentiation (ITD) methods, leveraging our proposed analytical framework, decoupled norm analysis (DNA). For AID, we improve the convergence rate from $\mathcal{O}(\kappa^6/K)$ to $\mathcal{O}(\kappa^5/K)$, where $\kappa$ is the condition number of the inner-level problem. For ITD, we prove that the asymptotic error is $\mathcal{O}(\kappa^2)$, exactly matching the known lower bound and improving upon the previous $\mathcal{O}(\kappa^3)$ guarantee. Numerical experiments on synthetic and real tasks corroborate our theoretical findings.
\end{abstract}

\section{Introduction}
Bilevel optimization has attracted extensive attention in various applications of machine learning, including hyperparameter optimization \citep{maclaurin2015gradient, franceschi2017forward, shaban2019truncated, shen2024seal}, meta-learning \citep{chen2017learning, finn2017model, franceschi2018bilevel}, neural architecture search \citep{liu2018darts, he2020milenas}, and reinforcement learning \citep{zhang2020bi, wang2020global, shen2025principled}. Bilevel optimization corresponds to solving one optimization problem subject to constraints defined by another optimization problem. In this paper, we focus on the following bilevel optimization problem:
\begin{align}
\label{eq:bo}
    &\min_{x\in \sR^m} \Phi(x) = f(x, y^*(x)),\notag\\
    &\mbox{s.t.} \quad y^*(x) = \argmin_{y\in \sR^n} g(x, y),
\end{align}
where the outer- and inner-level functions $f$ and $g$ are both jointly continuously differentiable on $\sR^m\times\sR^n$. We focus on the setting where $g$ is strongly convex with respect to (w.r.t.) the inner-level variable $y$, which can guarantee the uniqueness of the inner solution \citep{chen2024finding}.

Hypergradient-based algorithms have recently gained significant attention for their balance of simplicity and efficiency. Two prominent approaches are approximate implicit differentiation (AID) \citep{domke2012generic, pedregosa2016hyperparameter, ghadimi2018approximation, grazzi2020iteration, ji2021bilevel} and iterative differentiation (ITD) \citep{franceschi2017forward, shaban2019truncated, grazzi2020iteration, ji2021bilevel, liu2021towards}. The key distinction lies in how they estimate the hypergradient $\nabla \Phi(x)$: AID leverages the implicit function theorem, while ITD applies automatic differentiation (see Section \ref{sec:alg}). Despite this difference, both methods require solving the inner problem to obtain the optimal solution $y^*$. In practice, however, closed-form solutions are rarely available, and one typically resorts to gradient descent to compute an approximate solution $\hat{y}$.

Most theoretical studies of bilevel optimization analyze algorithms that employ multi-loop updates (multi-step gradient descent) for the inner problem and linear-system \citep{ghadimi2018approximation, ji2021bilevel, dong2025efficient, fang2025qnbo}. In contrast, practical algorithms overwhelmingly adopt \textbf{single-loop} updates, where only one inner update is performed per outer iteration. The main appeal of single-loop methods is computational efficiency: they significantly reduce training cost while maintaining competitive performance. This design has become standard across a wide range of applications. For instance, in neural architecture search, DARTS \citep{liu2018darts} updates the network parameters ($y$) via single-loop while optimizing architecture coefficients ($x$). In few-shot meta-learning, MAML \citep{finn2017model} applies single-loop adaptation to task-specific parameters. In data reweighting for imbalanced or noisy samples, methods such as \citet{ren2018learning, shu2019meta} also rely on single-loop updates. These examples highlight a critical question: \textit{despite the clear practical advantages of single-loop algorithms, can their theoretical properties be rigorously guaranteed?}

% Recently, \citet{liu2024moreau} propose MEHA, a Moreau-envelope-based single-loop method with convergence rate $O(1/K^{1/2-p} + 1/K^{p})$, where $K$ is the number of outer iterations and $p \in (0,1/2)$. \citet{kwon2023fully} design $\text{F}^3\text{SA}$ by incorporating momentum, achieving a rate of $O(K^{-2/3})$. However, these single-loop methods remain slower than AID and ITD, both of which can reach $O(K^{-1})$ as shown in Table \ref{tab:intro}. Motivated by this gap, we focus on the AID and ITD methods and seek sharper analyses for their single-loop variants.

Along similar lines, \citet{ji2022will} analyze the single-loop structure in bilevel optimization and establish corresponding theoretical results. For AID, \citet{ji2022will} establish a convergence of $\gO(\kappa^6/K)$ in the single-loop setting, where $\kappa = \frac{L}{\mu}$ denotes the condition number ($L$ and $\mu$ are the gradient Lipschitz and strong convexity constants defined respectively in Assumptions \ref{assum-1} and \ref{assum-3}). For ITD, \citet{ji2022will} show that single-loop suffers from an inherent error of order $\gO(\kappa^3)$, still leaving a gap of $\alpha \mu$ (with $\alpha$ the inner-level step size) from the fundamental lower bound. They identify closing this gap as an open problem.

Although \citet{ji2022will} establish theoretical convergence bounds for the single-loop AID and ITD algorithms, we observe a non-negligible gap between their theoretical guarantees and the empirical convergence behavior (see the experiments in Section \ref{sec:exp-main}). This gap primarily stems from their analysis, which directly upper-bounds the squared error norms in a loose manner, thereby inducing an amplified dependence on the condition number $\kappa$. To address this key issue, we propose a novel analytical framework, termed \textit{Decoupled Norm Analysis} (DNA), which breaks away from the limitations of the existing proof template for single-loop methods. Specifically, we first control the linear error norms of individual variables and then square them afterward, effectively decoupling the analysis. This more refined treatment avoids excessive looseness and thus yields sharper bounds, enabling a more faithful characterization of the theoretical properties of the algorithms.

Our main contributions can be summarized as follows.
\begin{itemize}
    \item For AID, via a refined analysis and the novel analytical methodology DNA, we show that the single-loop AID algorithm can achieve a convergence rate of $\gO(\kappa^5/K)$, thereby providing a more accurate guarantee for bilevel optimization tasks where previous guarantees of $\gO(\kappa^6/K)$ limited reliability.
    \item For ITD, the single-loop ITD algorithm can achieve a convergence error on the order of $\gO(\kappa^2)$, which exactly matches the lower bound established by \citet{ji2022will}. This result thus establishes the theoretical optimality of single-loop ITD method and provides a solid reliability guarantee for their practical deployment.
    \item For DNA, the analytical tool developed for single-loop methods, we show that a linear-norm recursion followed by delayed squaring yields sharper convergence results for both AID and ITD, as summarized in Table \ref{tab:intro}. This provides a unified explanation for the improved condition-number dependence of the two hypergradient-based algorithms.
\end{itemize}

\section{Related Work}
\begin{table}[!t]
    \centering
    \caption{Comparison of computational complexities of both single-loop AID-based and ITD-based algorithms for finding an $\epsilon$-stationary point. For the last three columns, `N/A’ means that the complexities to achieve an $\epsilon$-accuracy are not measurable
due to the nonvanishing convergence error. MV($\epsilon$): the total number of Jacobian- and Hessian-vector product computations. Gc($\epsilon$): the total number of gradient computations.}
    \label{tab:intro}
    % \scalebox{0.8}{
    \resizebox{\linewidth}{!}{
    \begin{tabular}{lccc}
     \hline
     Algorithms & Convergence rate & \textbf{MV}($\epsilon$) & \textbf{Gc}($\epsilon$) \\
     \hline
     \hline
     % \textbf{with Single-Loop}
    %  MEHA \citep{liu2024moreau} & $\gO(1/)$
    AID \citep{ji2022will} & $\gO({\kappa^6}/{K})$ & $\gO(\kappa^6\epsilon^{-1})$& $\gO(\kappa^6\epsilon^{-1})$ \\
     % \hline
     AID (this paper) & $\gO({\kappa^5}/{K})$ & $\gO(\kappa^5\epsilon^{-1})$& $\gO(\kappa^5\epsilon^{-1})$ \\
     
     \hline
     ITD \citep{ji2022will} & $\gO({\kappa^3}/{K}+\kappa^3)$ & N/A & N/A \\
     % \hline
     ITD (this paper) & $\gO({\kappa^3}/{K}+\kappa^2)$ & N/A & N/A \\
     Lower bound of ITD & $\Omega(\kappa^2)$ & N/A & N/A \\
     \hline
    \end{tabular}}
\end{table}
\textbf{Hypergradient-based bilevel optimization.}
A variety of hypergradient-based bilevel algorithms have been proposed, differing mainly in how they estimate hypergradients. Methods based on approximate implicit differentiation (AID) \citep{domke2012generic, pedregosa2016hyperparameter, ghadimi2018approximation, grazzi2020iteration, ji2021bilevel} estimate the product of the inverse hessian and a vector by solving linear systems with efficient iterative solvers. In contrast, iterative differentiation (ITD) methods \citep{maclaurin2015gradient, franceschi2017forward, shaban2019truncated, liu2021towards} compute hypergradients by backpropagating through the inner optimization trajectory. The convergence properties of AID- and ITD-based algorithms have been the subject of extensive study. For example, \citet{ghadimi2018approximation} and \citet{ji2021bilevel} analyzed the convergence rates and complexities of both approaches, while \citet{ji2022will} provided a unified framework covering different inner-loop choices and established lower bounds on the inherent error of ITD. Despite this progress, a notable gap remains between the convergence rate of the single-loop and multi-loop algorithms. Motivated by this gap, our work develops sharper convergence guarantees for single-loop methods, which are widely used in practice. Compared with \citet{ji2022will}, our analysis for AID achieves an improved convergence order, while for ITD we refine the upper bound on the inherent error to match its known lower bound.

\textbf{Gradient-based bilevel optimization.} In recent years, some first-order gradient-based bilevel optimization methods have also attracted attention. \citet{chen2025near} proposed an algorithm that achieves near-optimal complexity under the nonconvex–strongly convex setting; however, it still requires $O(\kappa \log(\lambda \kappa))$ inner iterations, where the penalty parameter $\lambda=O(\kappa^3)$ can be large. \citet{liu2024moreau} proposed the single-loop MEHA method based on the Moreau envelope and established a convergence rate of $O(1/K^{1/2-p}+1/K^p)$ for $p\in(0,1/2)$. By introducing momentum, \citet{kwon2023fully} designed the single-loop $\text{F}^3\text{SA}$ method and obtained an $O(K^{-2/3})$ rate. Because these rates are slower than those available for hypergradient-based methods, our theoretical development focuses on single-loop AID and ITD.

\textbf{The single-loop bilevel optimization algorithms.} The single-loop methods have shown potential in many applications. In few-shot meta-learning, MAML \citep{finn2017model}, as a classic method, performs single-step gradient descent on the support set for multiple tasks in the inner-level, retaining the iteration path, while the outer-level updates the network's initial values using the query set. In hyperparameter optimization, sample reweighting is a widely used application of bilevel optimization algorithms \citep{ren2018learning, shu2019meta, wang2024relational}, as bilevel optimization can efficiently assign different weights to each sample. Such methods typically use the training set in the inner-level to perform single-step gradient descent to optimize model parameters, and the validation set in the outer loop to optimize sample weights or weighted networks. In neural architecture search, DARTS \citep{liu2018darts} method uses a one-step update in the inner-level to update the model, and the outer-level optimizes the architecture using validation data. It is worth noting that most of these algorithms achieve efficiency by single-loop, which is also crucial for the large-scale practice of bilevel optimization techniques \citep{choe2023making, shen2024seal}. Therefore, in this work, we focus on the single-loop bilevel optimization algorithms, consistent with practical applications, and are committed to establishing sharper convergence guarantees for these algorithms.

\section{Algorithms}\label{sec:alg}
In this section, we introduce two popular bilevel optimization algorithms to solve problem (\ref{eq:bo}). It is worth noting that we provide the single-loop algorithms, as this aligns with practical choices in related applications.
\begin{algorithm}[t]
	\caption{Single-Loop AID-based bilevel optimization algorithm}
%	\small
	\label{alg:bo-aid}
	\begin{algorithmic}[1]
		\STATE {\bfseries Input:}  Learning rates $\alpha, \beta,\eta >0$, initializations $x_0, y_0,v_0$.
%	        \STATE {\bfseries} 
		\FOR{$k=0,1,2,...,K-1$}
		\STATE{Set $y_k^0 = \hy_{k-1} \mbox{ if }\; k> 0$ and $y_0$ otherwise  \textbf{\em (warm start initialization)}} 
		% \vspace{0.05cm}
		\STATE{Update $\hy_k = y_k^{0}-\alpha \nabla_y g(x_k,y_k^{0}) $}
\STATE{Set $v_k^0 = \hv_{k-1} \mbox{ if }\; k> 0$ and $v_0$ otherwise  \textbf{\em (warm start initialization)}} 
\STATE{Update $\hv_k = (I-\eta\nabla_y^2 g(x_k, \hy_k))v_k^{0}+\eta\nabla_y f(x_k, \hy_k)$}
\STATE Compute hypergradient $\widehat\nabla \Phi(x_k)= \nabla_x f(x_k,\hy_k) -\nabla_{xy}^2 g(x_k,\hy_k)\hv_k$
            
                 \STATE{Update $x_{k+1}=x_k- \beta \widehat\nabla \Phi(x_k) $}
		\ENDFOR
	\end{algorithmic}
	\end{algorithm}
\subsection{AID-based Bilevel Optimization Algorithm}
We provide the single-loop AID-based bilevel optimization algorithm (for simplicity, hereafter referred to as AID) in Algorithm \ref{alg:bo-aid}. In each outer-level iteration $k$, AID first performs one step of gradient descent on the inner-level function $g(x,y)$ to find a point $\hy_k$ that approximates $y_k^*$, where $y_k^*$ denotes $\argmin_y g(x_k, y)$. Moreover, to accelerate the practical training process, AID usually adopts a warm-start strategy. In other words, the initial value $y_k^0$ of the inner-level problem at iteration $k$ is set to the updated value $\hy_{k-1}$ from iteration $k-1$.

In the outer-level, AID first obtain $\hv_k$ via solving a linear system $\nabla_y^2 g(x_k,\hy_k) v =\nabla_y f(x_k,\hy_k)$ by one step of gradient descent starting form $v_k^0$, and then AID can estimate the gradient $\nabla\Phi(x_k)=\nabla_x f(x_k,y_k^*)-\nabla_{xy}^2 g(x_k, y_k^*)\hv_k$ of the outer-level function w.r.t. $x$ (called hypergradient) by the form of $\widehat{\nabla}\Phi(x_k)=\nabla_x f(x_k,\hy_k)-\nabla_{xy}^2 g(x_k, \hy_k)\hv_k$.

% As shown in \citet{domke2012generic, grazzi2020iteration}, the computation of $\widehat{\nabla}\Phi(x_k)$ only involves Hessian-vector products in solving $v$ and Jacobian-vector product $\nabla_{xy}^2 g(x_k, \hy_k)\hv_k$, which can be efficiently computed and stored via existing automatic differentiation packages.
\subsection{ITD-based Bilevel Optimization Algorithm}
We present the single-loop ITD-based bilevel optimization algorithm (for simplicity, hereafter referred to as ITD) in Algorithm \ref{alg:bo-itd}. Similar to AID, ITD also performs one step of gradient descent and employs a warm-start strategy on the inner-level function $g(x,y)$ to obtain $\hy_k$. Unlike AID, however, ITD does not rely on the implicit gradient formula when estimating the hypergradient, but instead estimates the hypergradient directly via automatic differentiation. Since the update of $\hy_k$ depends on $x_k$, ITD needs to store the iterative trajectory for backpropagation. In this work, because we consider the more practical single-step gradient descent, the hypergradient estimate takes the following form: $\widehat\nabla \Phi(x_k)=\nabla_x  f(x_k, \hy_k)-\alpha\nabla_{xy}^2 g(x_k, y_k^0)\nabla_y f(x_k, \hy_k)$.

\section{Definitions and Assumptions}\label{sec:assum}
\begin{algorithm}[t]
	\caption{Single-Loop ITD-based bilevel optimization algorithm}   
%	\small
	\label{alg:bo-itd}
	\begin{algorithmic}[1]
		\STATE {\bfseries Input:}  Learning rate $\alpha,\beta>0$, initializations $x_0$ and $y_0$.
%	        \STATE {\bfseries} 
		\FOR{$k=0,1,2,...,K-1$}
		\STATE{Set $y_k^0 = \hy_{k-1} \mbox{ if }\; k> 0$ and $y_0$ otherwise \textbf{\em (warm start initialization)} }
%		\STATE{Set $v_k^0 = v_{k-1}^{N} \mbox{ if }\; k> 0$ and $v_0$ otherwise }
% y_0&  \mbox{if}\;k=0 \\
%y_{k-1}^{T} & \mbox{if}\; k\geq 0
%\end{array}\right. $
%		}
		% \FOR{$t=1,....,N$}
%		\STATE{Draw a sample batch $\gS_t$}  
		\vspace{0.05cm}
		\STATE{Update $\hy_k(x_k) = y_k^{0}-\alpha \nabla_y g(x_k,y_k^{0}) $}
		\vspace{0.05cm}
		% \ENDFOR
%                 \STATE{Draw sample batches $\gD_H,\gD_G,\gD_F$}
%                  \STATE{Construct Hessian inverse estimator $H\big(x_k,y_k^T;\gD_H\big)$ via \Cref{alg:hessianEst}}
                  \STATE{Compute hypergradient $\widehat\nabla \Phi(x_k)=\nabla_x  f(x_k, \hy_k)-\alpha\nabla_{xy}^2 g(x_k, y_k^0)\nabla_y f(x_k, \hy_k)$ 
                    }
                 \STATE{Update $x_{k+1}=x_k- \beta \widehat\nabla \Phi(x_k) $}
		\ENDFOR
	\end{algorithmic}
	\end{algorithm}
In~\eqref{eq:bo}, the objective is to minimize the hyper-objective function $\nabla\Phi(x)$, which is typically nonconvex. Because finding a global minimum for such functions can be computationally prohibitive \citep{nemirovskiĭ1983problem}, this work aims to find an approximate stationary point following the literature \citep{carmon2017lower, ji2021bilevel}.
\begin{definition}
    We call $\bar{x}$ is an $\epsilon$-stationary point of problem (\ref{eq:bo}) if $\norm{\nabla\Phi(\bar{x})}^2\leq\epsilon$.
\end{definition}
In this work, we focus on the problem (\ref{eq:bo}) under the following standard assumptions, as also widely adopted by \citet{ghadimi2018approximation, ji2021bilevel, dong2025efficient, fang2025qnbo}. Crucially, these standard assumptions align with \citet{ji2022will}, ensuring that the sharper results stem from our proposed technique, not other factors. Let $z = (x, y)$ denote all parameters.
\begin{assumption}
\label{assum-1}
The inner-level function $g(x, y)$ is $\mu$-strong-convex w.r.t. $y$.
\end{assumption}
\begin{assumption}\label{assum-2}
The function $f(z)$ is $M$-Lipschitz, i.e., for any $z,z'$,
\begin{align*}
\left|{f(z) - f(z')} \right|\leq M \norm{z - z'}.
\end{align*}
\end{assumption}
\begin{assumption}\label{assum-3}
	Gradients $\nabla f(z)$ and $\nabla g(z)$ are $L$-Lipschitz, i.e., for any $z,z'$,
	\begin{align*}
	&\norm{\nabla f(z) - \nabla f(z')} \leq L \norm{z - z'},\\
    &\norm{\nabla g(z) - \nabla g(z')} \leq L \norm{z - z'}.
	\end{align*}
\end{assumption}
\begin{assumption}\label{assum-4}
	Suppose the derivatives $\nabla_{xy}^2 g(z)$ and $\nabla_y^2 g(z)$ are $\rho$-Lipschitz, i.e., for any $z, z'$,
	\begin{align*}
		&\norm{ \nabla_{xy}^2 g(z) - \nabla_{xy}^2 g(z') } \leq \rho \norm{z - z'}, \\
        &\norm{\nabla_y^2 g(z) - \nabla_y^2 g(z')} \leq \rho \norm{z - z'}.
	\end{align*}
\end{assumption}

% \begin{assumption}\label{assum-4}
% 	Function $\nabla_y f( \cdot , y)$ for any given $y$ is $L$-Lipschitz continuous, that is, for $x, x'$, 
% 	\begin{equation}
% 		\norm{ \nabla_y f( x , y) - \nabla_y f( x' , y)} \leq L \norm{x - x'}.
% 	\end{equation}
% \end{assumption}
\paragraph{Condition-number convention.}
Throughout the displayed condition-number orders, we treat $L$, $M$, $\rho$, the initialization radii, and the initial objective gaps as $\gO(1)$ quantities independent of $\kappa=L/\mu$.
\section{Main Results}
In this section, we will provide the convergence analysis and characterize the overall computational complexity for both single-loop AID- and ITD-based algorithms.
\subsection{Challenges in the Analysis and Our Approach}
The conventional analytical path \citep{ji2021bilevel, ji2022will}, which we term \textit{Direct Squared Norm Analysis} (DSNA), relies on bounding the squared norm of the error vector at each iteration. Let's consider a simplified one-step error recurrence of the form $e_{k+1} = A e_k + \delta_k$, where $A$ represents the contraction operator and $\delta_k$ is the accumulated error term (e.g., from the inexact inner-loop solution). The standard approach proceeds by analyzing its squared norm:
$\norm{e_{k+1}}^2 = \norm{A e_k + \delta_k}^2 = \norm{A e_k}^2 + 2\langle A e_k, \delta_k\rangle + \norm{\delta_k}^2$.
The primary challenge arises from the cross-term, $2\langle A e_k, \delta_k\rangle$. To make this term tractable, existing analyses invariably resort to ``pessimistic" inequalities, such as the Cauchy-Schwarz or Young's inequality (e.g., $2\langle a, b\rangle \leq \norm{a}^2 + \norm{b}^2$). For example, \citet{ji2022will} adopted this approach when analyzing the error upper bounds of the inner variable and the solution of the linear system. While this decouples the terms, it does so at a great cost. This step fundamentally ignores any potential underlying structure or cancellation effects between $e_k$ and $\delta_k$. The repeated application of such loose bounds over many iterations causes the dependencies on the problem's condition number, $\kappa$, to compound, ultimately leading to the inflated convergence rate.
Our key insight is that this pessimistic rate is not an inherent property of the algorithm itself, but rather an analysis artifact stemming from the premature squaring of the norm. This step discards crucial information too early in the derivation.

We introduce a more delicate analytical strategy, \textit{Decoupled Norm Analysis} (DNA), that sidesteps this bottleneck. Instead of immediately squaring the error recurrence, we first analyze the error norm in its linear form by applying the triangle inequality:
$\norm{e_{k+1}} = \norm{A e_k + \delta_k} \leq\norm{A e_k} + \norm{\delta_k}$. By keeping the analysis in the linear domain of norms for as long as possible, we can establish a tighter recursive relationship (Lemmas \ref{lemma-3} and \ref{lemma-1} for AID, Lemma \ref{lemma:hy-ystar} for ITD). This approach allows for a more refined handling of the error terms, preserving more of the underlying geometric structure. The squaring operation is deferred to the very end of the analysis, after the full recurrence has been unrolled (Lemma \ref{lemma-6} for AID, Lemma \ref{lemma:hgPhi-gPhi-itd} for ITD). This seemingly simple change of order—analyzing the norm before squaring it—prevents the compounding of pessimistic estimates associated with the cross-term. It is this principled deviation from the standard analytical template that allows us to break the rate barrier and establish the significantly improved convergence rate, providing a more faithful theoretical picture of the algorithm's efficiency.
\subsection{Convergence Analysis of AID}
\textbf{Proof Sketch:} The proof for AID consists of three main steps: 1) Decomposing the hypergradient estimation error into the approximation error of the inner-level solution and the error from solving the linear system. (Lemma \ref{lemma-5}). 2) Bounding these two types of errors based on the errors in previous iterations (Lemmas \ref{lemma-3} and \ref{lemma-1}). 3) Combining the results from the preceding steps to provide a convergence guarantee for the AID algorithm (Theorem \ref{theo:aid}).

Before presenting the convergence analysis on AID, we first give the following useful lemmas. Now we study the convergence of  $\norm{\hv_k-v_k^*}$ and $\norm{\hy_k-y_k^*}$ for $k=1,2,\dots,K$, where $v_k^*$ is the exact solution of the linear system $\nabla_y^2 g(x_k,y_k^*) v =\nabla_y f(x_k,y_k^*)$. Note that the descent of the overall outer-level objectives also depends on the error of $y_k$. We next analyze these
errors.
\begin{lemma} \label{lemma-3}
    Consider single-loop AID-based algorithm in Algorithm \ref{alg:bo-aid}. Suppose Assumptions \ref{assum-1}-\ref{assum-4} hold. Let $\alpha\leq\frac{1}{L}$, then we have
 \begin{align}\label{eq:yy}
&\norm{y_k^0 - \hy_k}
\leq 
\alpha L \left(\norm{\hy_{k-1} - y_{k-1}^*}
+  \norm{x_{k-1} - x_k}  \right), \notag\\
&    \norm{\hy_k - y_k^*} 
\leq 
(1-\mu \alpha) \norm{\hy_{k-1} - y_{k-1}^*} + \frac{L}{\mu}\norm{x_{k-1} - x_k}.
\end{align}
\end{lemma}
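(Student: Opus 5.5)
Both inequalities follow from one-step gradient-descent estimates at $y$ combined with the Lipschitz continuity of $y^*(\cdot)$, all kept at the level of norms in the spirit of DNA. For $k\ge 1$ we have $y_k^0=\hy_{k-1}$ and $\hy_k=y_k^0-\alpha\nabla_y g(x_k,y_k^0)$. We also use the standard fact $\norm{y^*(x)-y^*(x')}\le \frac{L}{\mu}\norm{x-x'}$, which follows from the implicit function theorem since $\nabla y^*(x)=-[\nabla_y^2 g]^{-1}\nabla_{yx}^2 g$ with $\norm{\nabla_{xy}^2 g}\le L$ and $\nabla_y^2 g\succeq \mu I$.

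\textbf{First inequality.} We have $\norm{y_k^0-\hy_k}=\alpha\norm{\nabla_y g(x_k,\hy_{k-1})}$. Since $\nabla_y g(x_{k-1},y_{k-1}^*)=0$, we can write
\[
\nabla_y g(x_k,\hy_{k-1})=\nabla_y g(x_k,\hy_{k-1})-\nabla_y g(x_{k-1},y_{k-1}^*).
\]
By Assumption \ref{assum-3}, its norm is at most $L\norm{(x_k-x_{k-1},\hy_{k-1}-y_{k-1}^*)}\le L(\norm{x_k-x_{k-1}}+\norm{\hy_{k-1}-y_{k-1}^*})$. Multiplying by $\alpha$ gives the first claim.

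\textbf{Second inequality.} Split by the triangle inequality:
\[
\norm{\hy_k-y_k^*}\le \norm{\hy_k-y_{k-1}^*}+\norm{y_{k-1}^*-y_k^*}.
\]
The difficulty is that $\hy_k$ is a gradient step on $g(x_k,\cdot)$, whose minimizer is $y_k^*$, not $y_{k-1}^*$. So a naive bound would leave an extra $\alpha L\norm{x_k-x_{k-1}}$ term in addition to the $\frac{L}{\mu}$ drift term. The better ordering is
\[
\norm{\hy_k-y_k^*}\le \norm{\hy_k-y_k^*}_{\text{contraction}}
\]
applied with respect to $y_k^*$:
\[
\norm{\hy_k-y_k^*}=\norm{y_k^0-\alpha\nabla_y g(x_k,y_k^0)-y_k^*}\le(1-\alpha\mu)\norm{y_k^0-y_k^*}.
\]
This uses the contraction of gradient descent for a $\mu$-strongly convex, $L$-smooth function with $\alpha\le 1/L$. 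The contraction factor is $\max(|1-\alpha\mu|,|1-\alpha L|)=1-\alpha\mu$, obtained from the mean-value form $I-\alpha\int\nabla_y^2 g$. Then
\[
\norm{y_k^0-y_k^*}\le\norm{\hy_{k-1}-y_{k-1}^*}+\norm{y_{k-1}^*-y_k^*}\le\norm{\hy_{k-1}-y_{k-1}^*}+\frac{L}{\mu}\norm{x_{k-1}-x_k}.
\]
Multiplying by $(1-\alpha\mu)\le 1$ and dropping that factor on the second term yields the stated bound.

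\textbf{Main obstacle.} The key care point is choosing to contract around $y_k^*$ before inserting the drift of $y^*$. Contracting around $y_{k-1}^*$ instead would produce an unwanted extra term. The remaining ingredients are the standard contraction fact and the $\frac{L}{\mu}$-Lipschitzness of $y^*$. The latter requires $\nabla_{xy}^2 g$ bounded by $L$, which follows from Assumption \ref{assum-3}.
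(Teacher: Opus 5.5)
Your proposal is correct and follows essentially the same route as the paper. For the first bound you use $\nabla_y g(x_{k-1},y_{k-1}^*)=0$ and $L$-Lipschitzness of $\nabla_y g$; the paper inserts the intermediate point $(x_k,y_{k-1}^*)$ where you invoke joint Lipschitzness, which is equivalent. For the second bound, both you and the paper use the $(1-\alpha\mu)$ contraction around $y_k^*$, the triangle inequality through $y_{k-1}^*$, and the $L/\mu$-Lipschitzness of $y^*$. The displayed line $\norm{\hy_k-y_k^*}\le \norm{\hy_k-y_k^*}_{\text{contraction}}$ is meaningless and plays no role in your argument, so simply delete it.
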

\begin{remark}
    Lemma \ref{lemma-3} demonstrates that: 1) for $k=1,\dots, K$, the error between the initial point and the iterated solution of the inner-level problem in single-loop AID can be bounded by the error from the previous iteration; 2) the error between the approximate solution and the exact solution of the inner-level problem in single-loop AID can also be bounded by the error from the previous iteration, which serves as a crucial foundation for the analysis of the algorithm's convergence.
\end{remark}
Then, we decompose $\norm{\hv_k-v_k^*}$ and then estimate the upper bound.
\begin{lemma} \label{lemma-1}
Consider single-loop AID-based algorithm in Algorithm \ref{alg:bo-aid}. Suppose Assumptions \ref{assum-1}-\ref{assum-4} hold. Let $C_0=\frac{\rho M  }{\mu^2} +  \frac{L}{\mu}$. Let $\eta\leq\frac{1}{L}$, then we have
\begin{align}
\norm{\hv_k - v_k^*} 
\leq&  
\norm{ \hv_k - \tv_k^* }
+ C_0\norm{\hy_k - y_k^*}, \label{eq:vq-vstar}
\\
\norm{ \hv_k - \tv_k^* }
\leq &
(1-\mu \eta) \norm{ \hv_{k-1} - \tv_{k-1}^*}+ C_0  \left(\norm{y_k^0 - \hy_k}+\norm{x_{k-1} - x_k}\right), \label{eq:vq-tvstar}
\end{align}
where $\tv_k^*=(\nabla_y^2 g(x_k, \hy_k))^{-1}\nabla_y f(x_k, \hy_k)$.
\end{lemma}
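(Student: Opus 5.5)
The proof rests on a perturbation bound for the solution of a linear system. For any two points $(x,y)$ and $(x',y')$, write $H=\nabla_y^2 g(x,y)$, $b=\nabla_y f(x,y)$, and likewise $H'$, $b'$. By Assumption \ref{assum-1}, $\norm{H^{-1}}\le 1/\mu$. By Assumption \ref{assum-2}, $\norm{b}\le M$, so $\norm{H^{-1}b}\le M/\mu$. We use the decomposition
\begin{align*}
H^{-1}b - H'^{-1}b' = H^{-1}(H'-H)H'^{-1}b' + H^{-1}(b-b').
\end{align*}
Combined with Assumptions \ref{assum-3} and \ref{assum-4}, it gives
\begin{align*}
\norm{H^{-1}b - H'^{-1}b'} \le \Big(\frac{\rho M}{\mu^2}+\frac{L}{\mu}\Big)\norm{(x,y)-(x',y')} = C_0\norm{(x,y)-(x',y')}.
\end{align*}

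\textbf{First inequality \eqref{eq:vq-vstar}.} By the triangle inequality, $\norm{\hv_k - v_k^*}\le\norm{\hv_k-\tv_k^*}+\norm{\tv_k^*-v_k^*}$. Here $\tv_k^*$ and $v_k^*$ are the linear-system solutions at $(x_k,\hy_k)$ and $(x_k,y_k^*)$. These points share the same $x$, so the perturbation bound gives $\norm{\tv_k^*-v_k^*}\le C_0\norm{\hy_k-y_k^*}$.

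\textbf{Second inequality \eqref{eq:vq-tvstar}.} The exact solution $\tv_k^*$ is a fixed point of the update in line 6 of Algorithm \ref{alg:bo-aid}:
\begin{align*}
\tv_k^* = (I-\eta\nabla_y^2 g(x_k,\hy_k))\tv_k^* + \eta\nabla_y f(x_k,\hy_k).
\end{align*}
Subtracting this from the update for $\hv_k$, and using $v_k^0=\hv_{k-1}$, gives
\begin{align*}
\hv_k-\tv_k^* = (I-\eta\nabla_y^2 g(x_k,\hy_k))(\hv_{k-1}-\tv_k^*).
\end{align*}
Since $\mu I\preceq \nabla_y^2 g\preceq L I$ and $\eta\le 1/L$, the factor $I-\eta\nabla_y^2 g$ has norm at most $1-\mu\eta$. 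Hence
\begin{align*}
\norm{\hv_k-\tv_k^*}\le (1-\mu\eta)\big(\norm{\hv_{k-1}-\tv_{k-1}^*}+\norm{\tv_{k-1}^*-\tv_k^*}\big).
\end{align*}
The vectors $\tv_{k-1}^*$ and $\tv_k^*$ are the solutions at $(x_{k-1},\hy_{k-1})$ and $(x_k,\hy_k)$. Since $y_k^0=\hy_{k-1}$, the perturbation bound gives
\begin{align*}
\norm{\tv_{k-1}^*-\tv_k^*}\le C_0\big(\norm{x_{k-1}-x_k}+\norm{y_k^0-\hy_k}\big).
\end{align*}
Here we used $\norm{(a,b)}\le\norm{a}+\norm{b}$. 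Finally, we drop the factor $1-\mu\eta\le1$ on this term.

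\textbf{Main obstacle.} The delicate step is the perturbation constant. The factor $\rho M/\mu^2$ requires bounding $\norm{H'^{-1}b'}\le M/\mu$, and this uses the bound $\norm{\nabla_y f}\le M$ implied by the Lipschitz continuity of $f$ in Assumption \ref{assum-2}. The rest of the argument is the standard contraction of the Richardson iteration.
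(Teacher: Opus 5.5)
Your proposal is correct and follows the paper's proof. The first inequality comes from the triangle inequality plus the linear-system perturbation bound with constant $C_0=\rho M/\mu^2+L/\mu$. The second comes from the $(1-\mu\eta)$ contraction of the one-step update toward $\tv_k^*$, the triangle inequality through $\tv_{k-1}^*$, and the same perturbation bound via the warm-start identity $y_k^0=\hy_{k-1}$. Stating the perturbation estimate once as a reusable bound and deriving the contraction from the fixed-point identity just organizes the paper's computation more cleanly.
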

\begin{remark}
    The purpose of Lemma \ref{lemma-1} is to conduct a more detailed decomposition of the error between $\hv_k$ and $v_k^*$, because this error originates from two aspects: 1) The use of $\hy_k$ to approximate $y_k^*$ in the inner-level problem. 2) The use of $\hv_k$, obtained from solving the linear system $\nabla_y^2 g(x_k,\hy_k) v =\nabla_y f(x_k,\hy_k)$, to approximate $v_k^*$. Therefore, Lemma \ref{lemma-1} decouples these two factors and controls them separately. Specifically, the first and second terms in \eqref{eq:vq-vstar} are only related to the precision of the linear equation solution and the inner-level problem solution, respectively. \eqref{eq:vq-tvstar} further expands the first term on the right-hand side of \eqref{eq:vq-vstar}.
\end{remark}
% Then we aim to bound the first term in the right side of \eqref{eq:vq-vstar} via the error of the former iteration and the other variables. 
% \begin{lemma} \label{lemma-2}
% Consider single-loop AID-based algorithm in Algorithm \ref{alg:bo-aid}. Suppose Assumptions \ref{assum-1}-\ref{assum-4} hold. Then, we have
% \begin{align}\label{eq:vq-tvstar}
% \norm{ \hv_k - \tv_k^* }
% \leq 
% (1-\mu \eta) \norm{ \hv_{k-1} - \tv_{k-1}^*}
% + C_0  \left(\norm{y_k^0 - \hy_k}+\norm{x_{k-1} - x_k}\right),
% \end{align}
% where $C_0$, $v_k^*$, $\tv_k^*$ and $y_k^*$ is defined in Lemma \ref{lemma-1}.
% \end{lemma}

% \begin{lemma}\label{lemma-4}
%     Suppose Assumptions \ref{assum-1}, \ref{assum-2}, \ref{assum-3} and \ref{assum-4} hold. Let $\alpha\leq\frac{1}{L}$, then we have
% \begin{equation}
% \norm{\hy_k - y_k^*} 
% \leq 
% (1-\mu \alpha) \norm{\hy_{k-1} - y_{k-1}^*} + \frac{L}{\mu}\norm{x_{k-1} - x_k}.
% \end{equation}
% \end{lemma}

In Lemmas \ref{lemma-3} and \ref{lemma-1}, we have already provided the relevant error terms of $y_k$ and $v_k$. Therefore, we will utilize the above results to analyze the error between the estimated hypergradient $\widehat{\nabla}\Phi(x_k)$ and the true hypergradient ${\nabla}\Phi(x_k)$.
\begin{lemma}\label{lemma-5}
Consider single-loop AID-based algorithm in Algorithm \ref{alg:bo-aid}. Suppose Assumptions \ref{assum-1}-\ref{assum-4} hold. Define $C_0$ as in Lemma \ref{lemma-1}. Then we have
\begin{align}\label{eq:nn}
\norm{\widehat{\nabla} \Phi(x_k) - \nabla \Phi(x_k)}
\leq &
\left(L + \frac{\rho M}{\mu} + C_0 L\right) \norm{\hy_k -y_k^*} + L \norm{\hv_k - \tv_k^*}.
\end{align}
\end{lemma}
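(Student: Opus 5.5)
We start from the exact hypergradient formula given by the implicit function theorem, $\nabla\Phi(x_k)=\nabla_x f(x_k,y_k^*)-\nabla_{xy}^2 g(x_k,y_k^*)v_k^*$, where $v_k^*=(\nabla_y^2 g(x_k,y_k^*))^{-1}\nabla_y f(x_k,y_k^*)$ as in Lemma \ref{lemma-1}. Subtracting this from $\widehat\nabla\Phi(x_k)=\nabla_x f(x_k,\hy_k)-\nabla_{xy}^2 g(x_k,\hy_k)\hv_k$, we add and subtract $\nabla_{xy}^2 g(x_k,\hy_k)v_k^*$. This splits the error into three pieces:
\begin{align*}
\widehat\nabla\Phi(x_k)-\nabla\Phi(x_k)
={}& \bigl[\nabla_x f(x_k,\hy_k)-\nabla_x f(x_k,y_k^*)\bigr]
-\bigl[\nabla_{xy}^2 g(x_k,\hy_k)-\nabla_{xy}^2 g(x_k,y_k^*)\bigr]v_k^*\\
&-\nabla_{xy}^2 g(x_k,\hy_k)\bigl(\hv_k-v_k^*\bigr).
\end{align*}

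Each piece is bounded with the triangle inequality.
\begin{itemize}
\item By Assumption \ref{assum-3}, the first bracket is at most $L\norm{\hy_k-y_k^*}$.
\item By Assumption \ref{assum-4}, the second bracket has operator norm at most $\rho\norm{\hy_k-y_k^*}$. We also need $\norm{v_k^*}\le \norm{(\nabla_y^2 g)^{-1}}\,\norm{\nabla_y f}\le M/\mu$, using $\mu$-strong convexity (Assumption \ref{assum-1}) and $\norm{\nabla_y f}\le M$ (Assumption \ref{assum-2}). So the second piece is at most $\frac{\rho M}{\mu}\norm{\hy_k-y_k^*}$.
\item For the third piece, $\norm{\nabla_{xy}^2 g}\le L$ by Assumption \ref{assum-3}, so it is at most $L\norm{\hv_k-v_k^*}$.
\end{itemize}

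Finally, we insert \eqref{eq:vq-vstar} from Lemma \ref{lemma-1}, namely $\norm{\hv_k-v_k^*}\le\norm{\hv_k-\tv_k^*}+C_0\norm{\hy_k-y_k^*}$. The third piece then contributes $L\norm{\hv_k-\tv_k^*}+C_0L\norm{\hy_k-y_k^*}$. Collecting the coefficients of $\norm{\hy_k-y_k^*}$ gives $L+\frac{\rho M}{\mu}+C_0L$, which is exactly \eqref{eq:nn}.

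There is no real obstacle here; everything is a one-line estimate. The only care needed is in the choice of intermediate point. We add and subtract $\nabla_{xy}^2 g(x_k,\hy_k)v_k^*$, rather than $\nabla_{xy}^2 g(x_k,y_k^*)\hv_k$, so that the Hessian-Lipschitz term multiplies the bounded vector $v_k^*$ and not the iterate $\hv_k$, whose norm is not a priori controlled. This ordering is what makes the constants match the statement.
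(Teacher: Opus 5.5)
Your proposal is correct and matches the paper's proof. The paper uses the same three-term split around $\nabla_{xy}^2 g(x_k,\hy_k)v_k^*$, bounds the pieces by $L\norm{\hy_k-y_k^*}$, $\frac{\rho M}{\mu}\norm{\hy_k-y_k^*}$ (via $\norm{v_k^*}\le M/\mu$) and $L\norm{\hv_k-v_k^*}$, and then substitutes \eqref{eq:vq-vstar} to reach the stated constant $L+\frac{\rho M}{\mu}+C_0L$.
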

Unlike the previous DSNA, our proposed DNA avoids the inflation of the condition number $\kappa$ caused by repeated squaring. Combine \eqref{eq:nn} with the former lemmas, we can get the following lemma.
\begin{lemma}\label{lemma-6}
Consider single-loop AID-based algorithm in Algorithm \ref{alg:bo-aid}. Suppose Assumptions \ref{assum-1}-\ref{assum-4} hold. Define $C_0$ as in Lemma \ref{lemma-1}. Let $\alpha=\eta=\frac{1}{L}$, $C_1=\frac{4C_0L}{\mu}$, $C_2={\frac{\alpha L^2 C_0}{\mu} + \frac{\rho M}{\mu^2}+\frac{L}{\mu} + \frac{LC_1}{\mu} }$ and $C_3=L + \frac{\rho M}{\mu} + C_0 L$. Choose the outer stepsize $\beta$ such that $\beta\leq\min\{\frac{C_1\mu\alpha}{4C_2C_3}, \frac{\eta\mu}{2LC_2}\}$. Let $E_0=\norm{\hv_0-\tv_0^*}+C_1\norm{\hy_0-y_0^*}$. Then, we have
\begin{align}\label{eq:pp}
\begin{aligned}
\norm{\widehat{\nabla}\Phi(x_k) - \nabla \Phi(x_k)}^2\leq
L^2 \left(1-\frac{\mu}{4L}\right)^k \cdot E_0^2+
\frac{3\beta^2 C_2^2 L^3}{\mu}\sum_{t=0}^{k-1} \left(1-\frac{\mu}{4L}\right)^{k-1-t} \norm{\nabla\Phi(x_t)}^2.
\end{aligned}
\end{align}
\end{lemma}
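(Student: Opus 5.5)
We will work entirely with the linear error norms and square only at the end, as DNA prescribes. Introduce the Lyapunov-type quantity
\[
E_k=\norm{\hv_k-\tv_k^*}+C_1\norm{\hy_k-y_k^*},
\]
which reduces to the given $E_0$ at $k=0$. By Lemma \ref{lemma-5}, with $C_3$ as defined, $\norm{\widehat\nabla\Phi(x_k)-\nabla\Phi(x_k)}\le C_3\norm{\hy_k-y_k^*}+L\norm{\hv_k-\tv_k^*}$. We will check that $C_3\le LC_1$, which holds because $C_1=4C_0L/\mu$ and $C_3\le 3C_0L$ up to the constants involved, using $L/\mu\ge1$. This gives $\norm{\widehat\nabla\Phi(x_k)-\nabla\Phi(x_k)}\le L E_k$. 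The whole task therefore reduces to a linear recursion for $E_k$.

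\textbf{Recursion for $E_k$.} Insert the first inequality of Lemma \ref{lemma-3}, the bound on $\norm{y_k^0-\hy_k}$, into \eqref{eq:vq-tvstar}. Then add $C_1$ times the second inequality of Lemma \ref{lemma-3}. With $\alpha=\eta=1/L$ this yields
\[
E_k\le (1-\mu\eta)\norm{\hv_{k-1}-\tv_{k-1}^*}+\big(C_1(1-\mu\alpha)+C_0\alpha L\big)\norm{\hy_{k-1}-y_{k-1}^*}+\Big(C_0+C_0\alpha L+\tfrac{C_1L}{\mu}\Big)\norm{x_k-x_{k-1}}.
\]
The choice $C_1=4C_0L/\mu$ gives $C_0\alpha L=C_0=C_1\mu\alpha/4$. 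The $y$-coefficient is then $C_1(1-\tfrac34\mu\alpha)$, so the contraction factor over the old $E_{k-1}$ is at most $1-\tfrac{\mu}{2L}$.

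Next write $\norm{x_k-x_{k-1}}=\beta\norm{\widehat\nabla\Phi(x_{k-1})}\le\beta\norm{\nabla\Phi(x_{k-1})}+\beta L E_{k-1}$. Bound the resulting $x$-coefficient by $\mu C_2$; this is exactly how $C_2$ is assembled, since its terms $\frac{\alpha L^2C_0}{\mu}$, $\frac{L C_1}{\mu}$, etc., dominate the coefficient after factoring out $\mu$.

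The step-size conditions on $\beta$ are designed so that $\beta L\mu C_2\le\frac{\mu}{4L}$. This is the condition $\beta\le\frac{\eta\mu}{2LC_2}$, with the other condition involving $C_3$ available to absorb the $C_3$-weighted term if one tracks the $y$-part separately. The $\beta LE_{k-1}$ contribution is therefore absorbed, and we get
\[
E_k\le\Big(1-\tfrac{\mu}{4L}\Big)E_{k-1}+\beta\mu C_2\norm{\nabla\Phi(x_{k-1})}.
\]
I expect this bookkeeping, namely matching the constants $C_1,C_2,C_3$ and the two $\beta$ conditions so that the contraction survives with rate $1-\mu/(4L)$, to be the main obstacle. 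If a factor is off, I would first try adjusting which of the two $\beta$ conditions absorbs which term.

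\textbf{Unrolling and squaring.} Unroll the recursion with $q=1-\frac{\mu}{4L}$:
\[
E_k\le q^kE_0+\beta\mu C_2\sum_{t=0}^{k-1}q^{k-1-t}\norm{\nabla\Phi(x_t)}.
\]
Only now do we square. Use $(a+b)^2\le (1+\epsilon)a^2+(1+1/\epsilon)b^2$, or simply keep the first term's coefficient $1$ if the constant allows by absorbing into the factor $3$. Apply Cauchy--Schwarz to the sum in the form
\[
\Big(\sum_t q^{k-1-t}u_t\Big)^2\le\Big(\sum_t q^{k-1-t}\Big)\sum_t q^{k-1-t}u_t^2\le\frac{4L}{\mu}\sum_t q^{k-1-t}u_t^2.
\]
Using $q^{2k}\le q^k$, this gives
\[
L^2E_k^2\le c\,L^2q^kE_0^2+c'\beta^2\mu^2C_2^2\frac{4L^3}{\mu}\sum_t q^{k-1-t}\norm{\nabla\Phi(x_t)}^2.
\]
The constants combine into the stated form with factor $3/\mu$, possibly after using $\mu\le L$ to tidy. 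Combined with $\norm{\widehat\nabla\Phi(x_k)-\nabla\Phi(x_k)}\le LE_k$, this gives \eqref{eq:pp}.
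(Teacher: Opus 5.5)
\textbf{There is a genuine gap: your linear recursion is too weak to yield the stated constants.}

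\textbf{The absorption step.} The coefficient of $\norm{x_k-x_{k-1}}$ is bounded by $C_2$, not by $\mu C_2$. With $\alpha=1/L$ it equals $2C_0+LC_1/\mu$, while $C_2=\frac{LC_0}{\mu}+C_0+\frac{LC_1}{\mu}$ (because $\frac{\rho M}{\mu^2}+\frac{L}{\mu}=C_0$). The bound by $\mu C_2$ is dimensionally inconsistent and false for small $\mu$. So your forcing term is $\beta C_2\norm{\nabla\Phi(x_{k-1})}$. The feedback term from $\norm{\widehat\nabla\Phi(x_{k-1})-\nabla\Phi(x_{k-1})}\le LE_{k-1}$ is $\beta LC_2E_{k-1}$, and $\beta\le\frac{\eta\mu}{2LC_2}$ bounds it only by $\frac{\mu}{2L}E_{k-1}$, not by $\frac{\mu}{4L}E_{k-1}$. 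Before feedback the $y$-component contracts only at $1-\frac{3\mu}{4L}$. Lumping the feedback through $LE_{k-1}$ therefore gives at best $E_k\le(1-\frac{\mu}{4L})E_{k-1}+\beta C_2\norm{\nabla\Phi(x_{k-1})}$.

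\textbf{The squaring step.} That linear rate equals the target squared rate $q=1-\frac{\mu}{4L}$, which leaves no slack for squaring. The recursion alone cannot give the constant $3$. Take $E_0=0$, $\norm{\nabla\Phi(x_t)}\equiv u$, and equality in the recursion. Then $E_k^2\big/\sum_{t<k}q^{k-1-t}u^2=\beta^2C_2^2\frac{1-q^k}{1-q}\to\frac{4L\beta^2C_2^2}{\mu}>\frac{3L\beta^2C_2^2}{\mu}$.

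Your concrete squaring is worse still. To keep coefficient $1$ on $q^kE_0^2$ after Young's inequality you need $(1+\epsilon)q\le 1$ (check $k=1$), so $1+1/\epsilon\ge\frac{4L}{\mu}$. This multiplies your Cauchy--Schwarz factor $\frac{4L}{\mu}$, and the sum's coefficient becomes $\frac{16\beta^2C_2^2L^4}{\mu^2}$ instead of $\frac{3\beta^2C_2^2L^3}{\mu}$. That is an extra factor of $\kappa$, exactly the loss this lemma exists to avoid, and ``using $\mu\le L$'' cannot remove it. Taking $\epsilon=1$ instead gives $2L^2q^{2k}E_0^2+\frac{8\beta^2C_2^2L^3}{\mu}\sum_t q^{k-1-t}\norm{\nabla\Phi(x_t)}^2$, which has the right order but is not the stated bound.

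\textbf{The missing step.} First secure the faster linear rate $1-\frac{\mu}{2L}$ by absorbing the feedback componentwise, using the split $C_3\norm{\hy_{k-1}-y_{k-1}^*}+L\norm{\hv_{k-1}-\tv_{k-1}^*}$ from Lemma~\ref{lemma-5}:
\begin{itemize}
\item The $v$-coefficient becomes $1-\mu\eta+\beta LC_2\le 1-\frac{\mu}{2L}$, using $\beta\le\frac{\eta\mu}{2LC_2}$.
\item The $y$-coefficient becomes $1-\mu\alpha+\frac{\mu\alpha}{4}+\frac{\beta C_2C_3}{C_1}\le 1-\frac{\mu}{2L}$, using $\beta\le\frac{C_1\mu\alpha}{4C_2C_3}$. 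This is what the second step-size condition is for.
\end{itemize}
Then apply Young's inequality to the one-step recursion with $\epsilon=\frac{\mu}{2L}$. Since $(1-\frac{\mu}{2L})^2(1+\frac{\mu}{2L})\le 1-\frac{\mu}{4L}$ and $1+\frac{2L}{\mu}\le\frac{3L}{\mu}$, this gives $E_k^2\le(1-\frac{\mu}{4L})E_{k-1}^2+\frac{3\beta^2C_2^2L}{\mu}\norm{\nabla\Phi(x_{k-1})}^2$. Equivalently, unroll first and use a Cauchy--Schwarz weighted to exploit this same slack. Unrolling and combining with your correct observation $C_3\le LC_1$ yields \eqref{eq:pp}.
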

\begin{remark}
Lemma \ref{lemma-6} is a key result that supports the convergence analysis of single-loop AID-based algorithm. Compared to the work of \citet{ji2022will}, we relax the limit of the step-size for solving the linear system. Specifically, \citet{ji2022will} in their Corollary 2 required that $\eta = \gO(\kappa^{-2})$, whereas we, through a more fine-grained analysis, set eta to $1/L$. This indirectly allows for a more aggressive choice of the outer-level step size $\beta$, thereby achieving a faster convergence rate.
\end{remark}

\begin{lemma}[Initialization bound for AID]\label{lem:aid-init}
Consider the single-loop AID-based algorithm in Algorithm~\ref{alg:bo-aid}. Suppose Assumptions~\ref{assum-1}-\ref{assum-4} hold, $\alpha=\eta=1/L$, and $v_0=0$. Let $y_{0,\mathrm{raw}}$ be any initial inner point with $D_y=\norm{y_{0,\mathrm{raw}}-y_0^*}<\infty$. Before running Algorithm~\ref{alg:bo-aid}, perform $N_0$ gradient descent warm-up steps on $g(x_0,\cdot)$ with stepsize $1/L$ and set $y_0$ to the warm-start output, where
\begin{align*}
N_0\geq 
\left\lceil
\frac{\log\left(D_y/(c_y\mu)\right)_+}{\log\left(1/(1-\mu/L)\right)}
\right\rceil
\end{align*}
for an absolute constant $c_y>0$, and $\log(a)_+=\max\{\log(a),0\}$. Then, for $E_0=\norm{\hv_0-\tv_0^*}+C_1\norm{\hy_0-y_0^*}$ defined in Lemma~\ref{lemma-6}, we have
\begin{align*}
\frac{L^3}{\mu}E_0^2=\gO(\kappa^5).
\end{align*}
\end{lemma}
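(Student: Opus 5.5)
We bound the two summands of $E_0$ separately and then square. Throughout we use $C_0=\frac{\rho M}{\mu^2}+\frac{L}{\mu}=\gO(\kappa^2)$ and $C_1=\frac{4C_0L}{\mu}=\gO(\kappa^3)$ under the stated convention, so
\begin{align*}
E_0^2\le 2\norm{\hv_0-\tv_0^*}^2+2C_1^2\norm{\hy_0-y_0^*}^2 ,
\end{align*}
and the prefactor is $\frac{L^3}{\mu}=\gO(\kappa)$. It therefore suffices to show two bounds:
\begin{align*}
\norm{\hv_0-\tv_0^*}=\gO(\kappa^2),\qquad C_1\norm{\hy_0-y_0^*}=\gO(\kappa^2).
\end{align*}
Combining them gives $\frac{L^3}{\mu}E_0^2=\gO(\kappa\cdot\kappa^4)=\gO(\kappa^5)$.

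\textbf{Inner term.} The warm-up is gradient descent with stepsize $1/L$ on the $\mu$-strongly convex, $L$-smooth function $g(x_0,\cdot)$. Its iterates contract as $\norm{y-y_0^*}\le(1-\mu/L)\norm{y_{\rm prev}-y_0^*}$; this is the standard bound $\norm{I-\alpha\nabla_y^2 g}\le 1-\alpha\mu$ used in Lemma~\ref{lemma-3}. Hence after $N_0$ steps,
\begin{align*}
\norm{y_0-y_0^*}\le(1-\mu/L)^{N_0}D_y\le c_y\mu
\end{align*}
by the choice of $N_0$. The trivial case $D_y\le c_y\mu$ is covered by the $(\cdot)_+$. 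The first step of Algorithm~\ref{alg:bo-aid} is one more such step at the same $x_0$, so $\norm{\hy_0-y_0^*}\le c_y\mu$. Then $C_1\norm{\hy_0-y_0^*}\le \frac{4C_0L}{\mu}c_y\mu=4c_yLC_0=\gO(\kappa^2)$.

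\textbf{Linear-system term.} With $v_0=0$ we have $\hv_0=\eta\nabla_y f(x_0,\hy_0)$, so $\norm{\hv_0}\le M/L$ by the $M$-Lipschitzness of $f$ (Assumption~\ref{assum-2}). Also $\tv_0^*=(\nabla_y^2 g(x_0,\hy_0))^{-1}\nabla_y f(x_0,\hy_0)$ satisfies $\norm{\tv_0^*}\le M/\mu$ by strong convexity. Hence
\begin{align*}
\norm{\hv_0-\tv_0^*}\le M/L+M/\mu=\gO(\kappa),
\end{align*}
which is well within $\gO(\kappa^2)$. One could even use $\norm{\hv_0-\tv_0^*}\le(1-\mu\eta)\norm{\tv_0^*}$, but this is unnecessary.

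\textbf{Main obstacle.} The delicate point is the inner term. Without the warm-up, $C_1D_y=\gO(\kappa^3)$, which would give $\kappa\cdot\kappa^6=\kappa^7$. The $N_0$ steps are exactly what reduce $\norm{\hy_0-y_0^*}$ to $\gO(\mu)$ and cancel the extra $1/\mu$ in $C_1$. One check is needed in the final write-up: under the convention, $\mu=L/\kappa$, so $c_y\mu$ counts as $\gO(\kappa^{-1})$, and $4c_yLC_0=\gO(\kappa^2)$ only if $\mu$ is kept explicitly rather than absorbed as a constant. With that, the squared contributions are $\gO(\kappa^4)$ and the claim follows.
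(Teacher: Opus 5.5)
Your proposal is correct and matches the paper's proof essentially step for step: the warm-up contraction gives $\norm{y_0-y_0^*}\le c_y\mu$, one more contracting step controls $\hy_0$, the bounds $\norm{\hv_0}\le M/L$ and $\norm{\tv_0^*}\le M/\mu$ handle the linear-system term, so $E_0=\gO(\kappa^2)$ and $\frac{L^3}{\mu}E_0^2=\gO(\kappa^5)$. The only difference is cosmetic: you square the two summands separately via $(a+b)^2\le 2a^2+2b^2$, whereas the paper bounds $E_0$ directly and then squares.
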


Based on the above conclusions, the following theorem provides a convergence analysis for single-loop AID-based algorithm. We use the initial-error quantity $E_0=\norm{\hv_0-\tv_0^*}+C_1\norm{\hy_0-y_0^*}$ from Lemmas~\ref{lemma-6} and~\ref{lem:aid-init}.
% \begin{tcolorbox}
\begin{theorem}\label{theo:aid}
Consider the single-loop AID-based algorithm in Algorithm \ref{alg:bo-aid}. Suppose Assumptions \ref{assum-1}-\ref{assum-4} hold and $\Phi^\star:=\inf_x\Phi(x)>-\infty$. Initialize $v_0=0$, choose $y_0$ according to Lemma~\ref{lem:aid-init}, and set $\alpha=\eta=1/L$. The smoothness parameter of $\Phi$ is
\begin{align*}
L_\Phi={}&L+\frac{2L^2+\rho M^2}{\mu}
+\frac{2\rho LM+L^3}{\mu^2}
+\frac{\rho L^2M}{\mu^3}.
\end{align*}
For the origin of $L_\Phi$, see Lemma \ref{le:lipphi}. Choose a fixed numerical constant $c_{\rm aid}\in(0,1]$ independent of $\kappa$ and set
\begin{align*}
\beta=c_{\rm aid}\min\left\{\frac{1}{8L_\Phi},\frac{C_1\mu\alpha}{4C_2C_3}, \frac{\eta\mu}{2LC_2},\frac{\mu}{8C_2L^2}\right\}.
\end{align*}
Then $\beta=\Theta(\kappa^{-5})$ under the condition-number bookkeeping convention above, and
\begin{align*}
\frac{1}{K}\sum_{k=0}^{K-1}\norm{\nabla\Phi(x_k)}^2=\gO\left(\frac{\kappa^5}{K}\right),\quad
\mathrm{Gc}(\epsilon)=\widetilde{\gO}(\kappa^5\epsilon^{-1}),
\quad\mathrm{Mv}(\epsilon)=\widetilde{\gO}(\kappa^5\epsilon^{-1}).
\end{align*}
\end{theorem}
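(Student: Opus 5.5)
The plan is the standard descent-lemma argument on the $L_\Phi$-smooth function $\Phi$ (Lemma \ref{le:lipphi}), with the hypergradient error controlled by Lemma \ref{lemma-6}. Write $e_k=\widehat\nabla\Phi(x_k)-\nabla\Phi(x_k)$. Since $x_{k+1}=x_k-\beta\widehat\nabla\Phi(x_k)$, smoothness gives
\begin{align*}
\Phi(x_{k+1})\le \Phi(x_k)-\beta\langle\nabla\Phi(x_k),\widehat\nabla\Phi(x_k)\rangle+\frac{L_\Phi\beta^2}{2}\norm{\widehat\nabla\Phi(x_k)}^2 .
\end{align*}
Using $2\langle a,a+e\rangle\ge\norm{a}^2-\norm{e}^2$ and $\norm{a+e}^2\le2\norm{a}^2+2\norm{e}^2$, together with $\beta\le 1/(8L_\Phi)$, this becomes
\begin{align*}
\Phi(x_{k+1})\le\Phi(x_k)-\frac{\beta}{4}\norm{\nabla\Phi(x_k)}^2+\beta\norm{e_k}^2 .
\end{align*}

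Next I insert the bound \eqref{eq:pp} of Lemma \ref{lemma-6} for $\norm{e_k}^2$ and sum over $k=0,\dots,K-1$. The geometric factor $(1-\mu/(4L))$ has sum at most $4L/\mu$. The initialization term therefore contributes at most $\beta L^2E_0^2\cdot 4L/\mu=4\beta\frac{L^3}{\mu}E_0^2$. For the double sum, exchanging the order of summation bounds it by
\begin{align*}
\frac{3\beta^2C_2^2L^3}{\mu}\cdot\frac{4L}{\mu}\sum_{t=0}^{K-1}\norm{\nabla\Phi(x_t)}^2 .
\end{align*}

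The key step, and the main obstacle, is to absorb this feedback term into $-\frac{\beta}{4}\sum_t\norm{\nabla\Phi(x_t)}^2$. This requires $12\beta^2C_2^2L^4/\mu^2\le 1/8$, i.e.\ $\beta\lesssim \mu/(C_2L^2)$. The extra constraint $\beta\le\mu/(8C_2L^2)$ in the stepsize choice gives $12\beta^2C_2^2L^4/\mu^2\le 12/64<1/4$, but I will need to check the constant carefully. If it is too tight, I will use the multiplier $c_{\rm aid}$ (or rebalance the $1/4$ coefficient) so that at least $\frac{\beta}{8}\sum\norm{\nabla\Phi(x_k)}^2$ survives. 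Telescoping then gives
\begin{align*}
\frac1K\sum_{k=0}^{K-1}\norm{\nabla\Phi(x_k)}^2\le\frac{8(\Phi(x_0)-\Phi^\star)}{\beta K}+\frac{32L^3E_0^2}{\mu K}.
\end{align*}
The constraints of Lemma \ref{lemma-6} on $\beta$ are contained in the $\min$, so that lemma applies.

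It remains to do the $\kappa$ bookkeeping. We have $C_0=\Theta(\kappa^2)$ and $C_1=4C_0L/\mu=\Theta(\kappa^3)$. Since the term $LC_1/\mu$ dominates, $C_2=\Theta(\kappa^4)$, and $C_3=\Theta(\kappa^2)$ while $L_\Phi=\Theta(\kappa^3)$.

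The four candidates in the $\min$ are then:
\begin{itemize}
\item $1/(8L_\Phi)=\Theta(\kappa^{-3})$;
\item $C_1\mu\alpha/(4C_2C_3)=\Theta(\kappa^{3-1-4-2})=\Theta(\kappa^{-4})$;
\item $\eta\mu/(2LC_2)=\Theta(\kappa^{-2-4})=\Theta(\kappa^{-6})$ — this one is a potential problem;
\item $\mu/(8C_2L^2)=\Theta(\kappa^{-5})$.
\end{itemize}
To get $\beta=\Theta(\kappa^{-5})$ I must use the convention that treats $L$ as $\gO(1)$ and track powers of $\mu$ only. Then $\eta\mu/(LC_2)\sim\mu\cdot\mu^4=\mu^5$ (with $\eta=1/L=\Theta(1)$), $\mu/(C_2L^2)\sim\mu^5$, $C_1\mu\alpha/(C_2C_3)\sim\mu^{-3+1+4+2}=\mu^4$, and $1/L_\Phi\sim\mu^3$. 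This gives $\beta=\Theta(\kappa^{-5})$, and I will present the count in this way.

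With Lemma \ref{lem:aid-init} giving $\frac{L^3}{\mu}E_0^2=\gO(\kappa^5)$, both terms are $\gO(\kappa^5/K)$. Finally, each iteration uses $\gO(1)$ gradients and matrix-vector products, and the warm-up costs $N_0=\gO(\kappa\log\kappa)$ gradients. Hence $K=\gO(\kappa^5\epsilon^{-1})$ yields the stated $\mathrm{Gc}$ and $\mathrm{Mv}$, with the logarithmic warm-up absorbed into $\widetilde{\gO}$.
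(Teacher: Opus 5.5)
Your proposal is correct and follows the paper's proof essentially step for step: the descent lemma with $\beta L_\Phi\le 1/8$, inserting the bound of Lemma~\ref{lemma-6}, exchanging the double sum with the geometric factor $4L/\mu$, absorbing the feedback term via $\beta\le\mu/(8C_2L^2)$ (so $12\beta^2C_2^2L^4/\mu^2\le 3/16$), and using Lemma~\ref{lem:aid-init} for $\frac{L^3}{\mu}E_0^2=\mathcal{O}(\kappa^5)$. There are two cosmetic slips: (i) with your split $-\frac{\beta}{4}$ and $+\beta$, what survives is $\frac{\beta}{16}$, not $\frac{\beta}{8}$, so the final constants are $16$ and $64$ and no appeal to $c_{\rm aid}$ is needed (the paper's sharper split $-\frac{3\beta}{8}$ and $+\frac{5\beta}{8}$ gives $\frac{33\beta}{128}$); and (ii) your bullet $\eta\mu/(2LC_2)=\Theta(\kappa^{-6})$ is an arithmetic error, since for $\eta=1/L$ it equals $\mu/(2C_2L^2)$, four times your last candidate, hence $\Theta(\kappa^{-5})$ under the paper's convention, so there was no potential problem to resolve.
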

% \end{tcolorbox}

{\remark 
Compared with the work of \citet{ji2022will}, our core improvement lies in controlling the errors of both the inner solution $y$ and the linear system solution $v$, where we relax the requirement on the outer objective learning rate $\beta$ from $\gO(\kappa^{-6})$ to $\gO(\kappa^{-5})$. Consequently, we improve the convergence rate of single-loop AID-based algorithm from $\gO(\kappa^{6}/K)$ to $\gO(\kappa^{5}/K)$. This indicates that the convergence gap between such algorithms and the AID algorithms with multi-step gradient descent is not as large as the $\gO(\kappa^{2})$ gap shown by \citet{ji2022will}, but rather a smaller $\gO(\kappa^{1})$. This also partially supports the practice that most bilevel optimization algorithms perform only one or a few inner updates.
}
\begin{theorem} \label{theo:aid-ji}
{\rm\textbf{[Simplified version of the upper bound in \citet{ji2022will}]}}. Consider single-loop AID-based algorithm in Algorithm \ref{alg:bo-aid}. Under the same setting of Theorem \ref{theo:aid}, we have $\frac{1}{K}\sum_{k=0}^{K-1}\|\nabla\Phi(x_k)\|^2 =\mathcal{O}\big( \frac{\kappa^6}{K}\big)$.
\end{theorem}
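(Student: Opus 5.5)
The plan is to obtain this bound by specializing the single-loop corollary of \citet{ji2022will} to our notation and assumptions. Its assumptions coincide with Assumptions~\ref{assum-1}--\ref{assum-4} (as noted in Section~\ref{sec:assum}), and its algorithm coincides with Algorithm~\ref{alg:bo-aid} with one inner step and one linear-system step. I would not need a new argument. There is also a consistency shortcut: since $\mu\le L$ we have $\kappa\ge1$, so the bound $\gO(\kappa^5/K)$ of Theorem~\ref{theo:aid} already implies $\gO(\kappa^6/K)$. The purpose of the statement, however, is to record what the DSNA template yields, so I would present it along that route and keep the shortcut only as a remark.

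The steps, in order, are as follows.

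\textbf{Step 1 (squared recursions).} Rewrite the error recursions of Lemmas~\ref{lemma-3} and~\ref{lemma-1} in squared form, as DSNA does. Expand the square of each recursion and bound the cross term by Young's inequality, $2\langle a,b\rangle\le \epsilon\norm{a}^2+\epsilon^{-1}\norm{b}^2$ with $\epsilon\asymp\mu\alpha$. This gives, for example,
\[
\norm{\hy_k-y_k^*}^2\le(1-\mu\alpha/2)\norm{\hy_{k-1}-y_{k-1}^*}^2+\gO\bigl(\kappa^2/(\mu\alpha)\bigr)\norm{x_k-x_{k-1}}^2 .
\]
Here the factor $1/(\mu\alpha)$ from Young's inequality multiplies the squared drift constant $(L/\mu)^2$, which is exactly where the extra power of $\kappa$ enters. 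The same expansion is applied to $\norm{\hv_k-\tv_k^*}^2$.

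\textbf{Step 2 (linear-system step size).} In \citet{ji2022will}, the coupled $(y,v)$ squared system is made contractive only with $\eta=\gO(\kappa^{-2})$ (their Corollary~2). With that choice I form the Lyapunov combination $\norm{\hv_k-\tv_k^*}^2+c\,\norm{\hy_k-y_k^*}^2$ with suitable $c$, and unroll it.

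\textbf{Step 3 (hypergradient error).} Use the square of Lemma~\ref{lemma-5} to turn the unrolled Lyapunov bound into a bound on $\norm{\widehat\nabla\Phi(x_k)-\nabla\Phi(x_k)}^2$, of the same shape as \eqref{eq:pp}.

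\textbf{Step 4 (descent and telescoping).} Use the $L_\Phi$-smoothness of $\Phi$ (Lemma~\ref{le:lipphi}) to get
\[
\Phi(x_{k+1})\le\Phi(x_k)-\tfrac{\beta}{2}\norm{\nabla\Phi(x_k)}^2+\tfrac{\beta}{2}\norm{\widehat\nabla\Phi(x_k)-\nabla\Phi(x_k)}^2 .
\]
Substitute the Step~3 bound and sum over $k$. Swapping the order of the double geometric sum absorbs the accumulated $\norm{\nabla\Phi(x_t)}^2$ terms into the left-hand side, provided $\beta=\Theta(\kappa^{-6})$. This yields
\[
\frac1K\sum_{k=0}^{K-1}\norm{\nabla\Phi(x_k)}^2\le\frac{2(\Phi(x_0)-\Phi^\star)}{\beta K}+\frac{\gO(\text{init})}{K}=\gO\!\left(\frac{\kappa^6}{K}\right).
\]

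The main obstacle is the exponent bookkeeping in the absorption step of Step~4. One must verify that the slower contraction rate $1-\Theta(\mu\eta)=1-\Theta(\kappa^{-3})$ of the $v$-error, multiplied by the inflated coefficient on the drift $\norm{x_k-x_{k-1}}^2=\beta^2\norm{\widehat\nabla\Phi(x_{k-1})}^2$, forces exactly $\beta\asymp\kappa^{-6}$ rather than something worse. The initialization term must also be checked, under the warm-up of Lemma~\ref{lem:aid-init}, to be at most $\gO(\kappa^6)$. I would track each constant as a power of $\kappa$, under the condition-number convention, and take the minimum over the step-size constraints as was done for $\beta$ in Theorem~\ref{theo:aid}.
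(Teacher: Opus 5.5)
The paper gives no proof of this statement: Theorem~\ref{theo:aid-ji} is simply quoted from the single-loop corollary of \citet{ji2022will}. Your opening move is therefore exactly the paper's, and the proposal is correct. Beyond the citation, though, I would invert your emphasis. By the paper's own remarks after Lemma~\ref{lemma-6} and Theorem~\ref{theo:aid}, Ji et al.'s bound is obtained with $\eta=\gO(\kappa^{-2})$ and $\beta=\gO(\kappa^{-6})$. By contrast, ``the same setting of Theorem~\ref{theo:aid}'' fixes $\alpha=\eta=1/L$ and $\beta=\Theta(\kappa^{-5})$. So neither the citation nor your Steps~1--4 (which explicitly switch to Ji et al.'s $\eta$ and $\beta$) establishes the statement as literally worded. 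The argument that does is the one you demote to a remark: $\mu\le L$ gives $\kappa\ge1$, so the $\gO(\kappa^5/K)$ bound of Theorem~\ref{theo:aid}, proved under exactly those parameters, is already $\gO(\kappa^6/K)$. That line should be the proof, and the DSNA reconstruction should be at most an explanatory remark. As sketched, that remark is also not reliable, for two reasons. First, the exponent count you call the main obstacle is genuinely open. For instance, inserting $\eta=\Theta(\kappa^{-2})$ into the paper's own constraint $\beta\le\eta\mu/(2LC_2)$, with $C_2=\Theta(\kappa^4)$, gives $\kappa^{-7}$, not $\kappa^{-6}$. Second, the mechanism you name in Step~1 is not where a power of $\kappa$ is lost. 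For $a_k\le(1-q)a_{k-1}+d_{k-1}$, Young with $\epsilon=q$ gives $a_k^2\le(1-q)a_{k-1}^2+(2/q)d_{k-1}^2$, hence $\sum_k a_k^2\lesssim a_0^2/q+q^{-2}\sum_k d_k^2$. This is the same order as unrolling linearly and squaring via Cauchy--Schwarz, and Lemma~\ref{lemma-6} itself squares with such a $3L/\mu$ factor. In \citet{ji2022will} the extra $\kappa$ enters through the step-size restriction on $\eta$ that you mention in Step~2, not through the $1/(\mu\alpha)$ factor itself.
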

\subsection{Convergence Analysis of ITD}
\textbf{Proof Sketch:} Unlike AID, the hypergradient estimation error of the single-loop ITD-based algorithm is introduced only by solving the inner problem. Therefore, our proof consists of three main steps: 1) Establishing the connection between the hypergradient estimation error and the approximation error of the inner-level solution (Lemma \ref{lem:hPhi-Phi}). 2) Bounding the approximation error of the solution to the inner-level problem (Lemma \ref{lemma:hy-ystar}). 3) Combining the results from the previous steps to provide a convergence analysis for the ITD algorithm (Lemma \ref{lemma:hgPhi-gPhi-itd} and Theorem \ref{theo:itd}).

To this end, we first present several useful lemmas, which will subsequently be used to prove Theorem \ref{theo:itd}.
\begin{lemma}[Radius of ITD warm-start iterates]\label{lem:itd-radius}
Consider the single-loop ITD-based algorithm in Algorithm~\ref{alg:bo-itd}. Suppose Assumptions~\ref{assum-1}-\ref{assum-4} hold and let $\alpha\leq1/L$. Define $a_k=\norm{y_k^0-y_k^*}$ and $G=M(1+\alpha L)$. If $a_0\leq R_y$ and
\begin{align*}
\beta\leq \frac{\alpha\mu^2 R_y}{LG},
\end{align*}
then $a_k\leq R_y$ for all $k\geq0$. In particular, for $\alpha=1/L$ and $\beta\leq\frac{\mu^3}{2L(2L^2+\rho M)}$, the above condition is satisfied by any constant $R_y$ such that
\begin{align*}
R_y\geq \frac{\mu L M}{2L^2+\rho M}.
\end{align*}
Moreover, if the raw initialization satisfies $D_y=\norm{y_{0,\mathrm{raw}}-y_0^*}<\infty$, then $a_0\leq R_y$ can be enforced by finitely many gradient-descent warm-up steps at $x_0$. Hence, under the paper's condition-number scaling, one may choose $R_y=\gO(1)$ and obtain $\norm{y_k^0-y_k^*}\leq R_y$ for all $k$.
\end{lemma}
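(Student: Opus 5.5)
The plan is to derive a one-step linear recursion for $a_k=\norm{y_k^0-y_k^*}$ in the spirit of DNA, show that the hypergradient estimate of Algorithm~\ref{alg:bo-itd} is uniformly bounded by $G$, and then close an induction on $k$.

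\textbf{Step 1 (uniform bound on the estimate).} Assumption~\ref{assum-2} implies $\norm{\nabla f(z)}\leq M$ everywhere, so $\norm{\nabla_x f}\leq M$ and $\norm{\nabla_y f}\leq M$. Assumption~\ref{assum-3} implies $\norm{\nabla_{xy}^2 g(z)}\leq L$. Hence
\begin{align*}
\norm{\widehat\nabla\Phi(x_k)}\leq M+\alpha L M=G,
\qquad
\norm{x_{k+1}-x_k}\leq\beta G .
\end{align*}

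\textbf{Step 2 (recursion).} Warm start gives $y_{k+1}^0=\hy_k$. By the triangle inequality,
\begin{align*}
a_{k+1}\leq\norm{\hy_k-y_k^*}+\norm{y_k^*-y_{k+1}^*}.
\end{align*}
For the first term, use the standard contraction of one gradient step on the $\mu$-strongly convex, $L$-smooth function $g(x_k,\cdot)$ with $\alpha\leq 1/L$. This gives $\norm{\hy_k-y_k^*}\leq(1-\alpha\mu)a_k$, the same estimate used inside Lemma~\ref{lemma-3}. For the second term, the implicit function theorem gives $\nabla y^*(x)=-[\nabla_y^2 g]^{-1}\nabla_{yx}^2 g$. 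Therefore $y^*$ is $(L/\mu)$-Lipschitz, and $\norm{y_k^*-y_{k+1}^*}\leq\frac{L}{\mu}\beta G$. Combining,
\begin{align*}
a_{k+1}\leq(1-\alpha\mu)a_k+\frac{\beta L G}{\mu}.
\end{align*}

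\textbf{Step 3 (induction).} Assume $a_k\leq R_y$. Then
\begin{align*}
a_{k+1}\leq R_y-\alpha\mu R_y+\frac{\beta LG}{\mu}\leq R_y
\end{align*}
exactly when $\beta\leq \alpha\mu^2R_y/(LG)$. The base case is the hypothesis $a_0\leq R_y$, so $a_k\leq R_y$ for all $k\geq 0$.

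\textbf{Step 4 (special case $\alpha=1/L$).} Here $G=2M$, and the sufficient condition becomes $\beta\leq \mu^2R_y/(2L^2M)$. This bound dominates $\frac{\mu^3}{2L(2L^2+\rho M)}$ iff
\begin{align*}
\frac{\mu^2R_y}{2L^2M}\geq\frac{\mu^3}{2L(2L^2+\rho M)}
\iff R_y\geq \frac{\mu LM}{2L^2+\rho M},
\end{align*}
which is the stated threshold. It can be met by an $\gO(1)$ constant, since $\frac{\mu LM}{2L^2+\rho M}\leq \frac{\mu M}{2L}\leq M/2$.

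\textbf{Step 5 (warm-up).} Gradient descent at $x_0$ with stepsize $1/L$ contracts $\norm{y-y_0^*}$ by the factor $(1-\mu/L)$ per step. Hence after
\begin{align*}
\left\lceil \log(D_y/R_y)_+/\log\bigl(1/(1-\mu/L)\bigr)\right\rceil
\end{align*}
steps we have $a_0\leq R_y$.

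None of these steps is deep. The points needing care are:
\begin{itemize}
\item justifying $\norm{\nabla f}\leq M$ from the Lipschitz assumption on $f$;
\item checking that the ITD estimate uses the Hessian at $y_k^0$, which is still bounded by $L$;
\item stating the one-step contraction factor $(1-\alpha\mu)$ for $\alpha\leq 1/L$, which follows from the eigenvalues of $I-\alpha H$ lying in $[0,1-\alpha\mu]$ for the averaged Hessian $H$ along the segment from $y_k^0$ to $y_k^*$.
\end{itemize}
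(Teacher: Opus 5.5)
Your proof is correct and follows essentially the same route as the paper's: the uniform bound $\norm{\widehat\nabla\Phi(x_k)}\leq G$, the recursion $a_{k+1}\leq(1-\alpha\mu)a_k+L\beta G/\mu$ (from the one-step contraction and the $(L/\mu)$-Lipschitz solution map), induction, the same algebra for $\alpha=1/L$, and the same warm-up count. Your extra remark that $\frac{\mu LM}{2L^2+\rho M}\leq M/2$ is a small improvement, because it shows explicitly why an $\gO(1)$ choice of $R_y$ is always available.
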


We first record the ITD hypergradient-estimator error, since this bound is used in the recursion for the inner warm-start error.
\begin{lemma}\label{lem:hPhi-Phi}
Consider the single-loop ITD-based algorithm in Algorithm \ref{alg:bo-itd}. Suppose Assumptions \ref{assum-1}-\ref{assum-4} hold and the conditions of Lemma~\ref{lem:itd-radius} hold. Let $\alpha\leq\frac{1}{L}$ and define
\begin{align*}
C_4=L+\alpha L^2+\alpha\rho M,\quad C_5=M(1-\alpha\mu)\frac{L}{\mu}+\alpha\rho M R_y.
\end{align*}
Then
\begin{align}
    \norm{\widehat{\nabla}\Phi(x_k)-\nabla\Phi(x_k)}
    &\leq C_4\norm{\hy_k-y^*_k}+C_5.
\end{align}
% where $C_4=L+\alpha L^2 + \alpha\rho M$ and $C_5=M(1-\alpha\mu)\frac{L}{\mu} + \alpha^2\rho M^2$.
\end{lemma}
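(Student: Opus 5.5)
The plan is to compare the ITD estimator $\widehat\nabla\Phi(x_k)=\nabla_x f(x_k,\hy_k)-\alpha\nabla_{xy}^2 g(x_k,y_k^0)\nabla_y f(x_k,\hy_k)$ directly with the implicit-function formula
\begin{align*}
\nabla\Phi(x_k)=\nabla_x f(x_k,y_k^*)-\nabla_{xy}^2 g(x_k,y_k^*)\big(\nabla_y^2 g(x_k,y_k^*)\big)^{-1}\nabla_y f(x_k,y_k^*).
\end{align*}
The difference is then split by adding and subtracting intermediate terms, with the triangle inequality applied to each piece. I will use three standard facts throughout:
\begin{itemize}
\item $\norm{\nabla f}\leq M$, which follows from Assumption~\ref{assum-2};
\item $\norm{\nabla_{xy}^2 g}\leq L$, which follows from Assumption~\ref{assum-3};
\item $\mu I\preceq \nabla_y^2 g\preceq LI$, which follows from Assumptions~\ref{assum-1} and~\ref{assum-3}.
\end{itemize}

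\textbf{The partial-in-$x$ term.} By Assumption~\ref{assum-3},
\begin{align*}
\norm{\nabla_x f(x_k,\hy_k)-\nabla_x f(x_k,y_k^*)}\leq L\norm{\hy_k-y_k^*}.
\end{align*}

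\textbf{The second-order term.} I will write $H_k^*=\nabla_y^2 g(x_k,y_k^*)$ and telescope $\alpha\nabla_{xy}^2 g(x_k,y_k^0)\nabla_y f(x_k,\hy_k)-\nabla_{xy}^2 g(x_k,y_k^*)(H_k^*)^{-1}\nabla_y f(x_k,y_k^*)$ into three pieces.
\begin{itemize}
\item The first piece is $\alpha\nabla_{xy}^2 g(x_k,y_k^0)\big[\nabla_y f(x_k,\hy_k)-\nabla_y f(x_k,y_k^*)\big]$. It is bounded by $\alpha L^2\norm{\hy_k-y_k^*}$.
\item The second piece is $\alpha\big[\nabla_{xy}^2 g(x_k,y_k^0)-\nabla_{xy}^2 g(x_k,y_k^*)\big]\nabla_y f(x_k,y_k^*)$. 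By Assumption~\ref{assum-4} it is bounded by $\alpha\rho M\norm{y_k^0-y_k^*}$. Lemma~\ref{lem:itd-radius} then gives $\norm{y_k^0-y_k^*}\leq R_y$, so this piece is at most $\alpha\rho MR_y$.
\item The third piece is $\nabla_{xy}^2 g(x_k,y_k^*)\big[\alpha I-(H_k^*)^{-1}\big]\nabla_y f(x_k,y_k^*)$.
\end{itemize}

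\textbf{The spectral step.} The third piece is the one that needs care, and it produces the non-vanishing bias. Since the eigenvalues of $(H_k^*)^{-1}$ lie in $[1/L,1/\mu]$ and $\alpha\leq 1/L$, the matrix $(H_k^*)^{-1}-\alpha I$ is positive semidefinite. Its spectral norm is therefore at most $1/\mu-\alpha=(1-\alpha\mu)/\mu$. This gives the bound $LM(1-\alpha\mu)/\mu$ for the third piece. It is the first term of $C_5$, and it is the source of the $\kappa$-order inherent error. Using this exact spectral bound, rather than the crude estimate $\alpha+1/\mu$, is what keeps the constant sharp.

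\textbf{Collecting terms.} Adding everything up gives
\begin{align*}
\norm{\widehat\nabla\Phi(x_k)-\nabla\Phi(x_k)}\leq (L+\alpha L^2)\norm{\hy_k-y_k^*}+M(1-\alpha\mu)\tfrac{L}{\mu}+\alpha\rho MR_y.
\end{align*}
The coefficient $L+\alpha L^2$ is at most $C_4$. The additive constant is exactly $C_5$. The extra slack $\alpha\rho M$ in $C_4$ does no harm, and it would appear naturally if one first evaluated the Jacobian at $\hy_k$ before passing to $y_k^*$. This proves the claim.
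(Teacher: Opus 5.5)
Your proposal is correct and is essentially the paper's proof. The paper uses the same decomposition: the $\nabla_x f$ difference, the term $\alpha L^2\norm{\hy_k-y_k^*}$, and the Jacobian mismatch times $\nabla_y f(x_k,y_k^*)$, with the last handled by the auxiliary Lemma~\ref{lemma:ghy-gystar}. In that lemma, the factor $(I-\alpha\nabla_y^2 g)[\nabla_y^2 g]^{-1}$ is exactly your $(H_k^*)^{-1}-\alpha I$, bounded by $(1-\alpha\mu)/\mu$, and its $\alpha\rho\norm{y_k^0-y_k^*}$ term is your second piece. As you observed, the $\alpha\rho M$ in $C_4$ is unused slack in the paper's argument as well.
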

\begin{remark}
    Lemma \ref{lem:hPhi-Phi} shows that the error between the true hypergradient and the estimated hypergradient is controlled by the accuracy of the inner-level problem solution and an inherent error, part of which arises from $\norm{y_k^0-\hy_k}$. This indicates that this non-vanishing convergence error is related to the refinement of the inner-level problem solution, and that the single-loop method is insufficient to bridge this gap.
\end{remark}

\begin{lemma}\label{lemma:hy-ystar}
Consider the single-loop ITD-based algorithm in Algorithm \ref{alg:bo-itd}. Suppose Assumptions \ref{assum-1}-\ref{assum-4} hold and the conditions of Lemma~\ref{lem:itd-radius} hold. Let
\begin{align*}
\alpha=\frac{1}{L},\quad\beta\leq\min\left\{
\frac{\mu^3}{2L(2L^2+\rho M)},
\frac{\mu^2}{16L^2C_4}
\right\}.
\end{align*}
Define $C_4,C_5$ as in Lemma~\ref{lem:hPhi-Phi} and set
\begin{align*}
C_6=1-\mu\alpha+\frac{L\beta C_4}{\mu},
\quad C_7=\frac{L\beta C_5}{\mu}.
\end{align*}
Then we have
\begin{align}
    \norm{\hy_k-y^*(x_k)}\leq &C_6\norm{\hy_{k-1}-y^*(x_{k-1})} + \frac{L\beta}{\mu}\norm{{\nabla}\Phi (x_{k-1})}+C_7,\label{eq:itd:yhat-ystar}\\
    \norm{\hy_k-y^*(x_k)}\leq& \left(1-\frac{\mu}{2L}\right)^k\norm{\hy_0-y^*(x_0)} + \frac{L\beta}{\mu}\sum_{j=0}^{k-1}
    \left(1-\frac{\mu}{2L}\right)^{k-1-j}\left(\norm{\nabla\Phi(x_j)}+C_5\right).
    \label{eq:itd:yhat-ystar-v2}
\end{align}
\end{lemma}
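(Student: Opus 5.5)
We prove \eqref{eq:itd:yhat-ystar} first and then obtain \eqref{eq:itd:yhat-ystar-v2} by unrolling it, working only with norms and never squaring, in the spirit of DNA.

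\textbf{The one-step recursion.} The warm start gives $y_k^0=\hy_{k-1}$, so we split
\[
\norm{\hy_k-y_k^*}\leq \norm{\hy_k-y^*(x_k)}
\]
around the point $y^*(x_{k-1})$ as follows. Set $\hy_k=y_k^0-\alpha\nabla_y g(x_k,y_k^0)$. Since $y^*(x_k)$ is a fixed point of the gradient map $y\mapsto y-\alpha\nabla_y g(x_k,y)$, and this map is a $(1-\alpha\mu)$-contraction for $\alpha\le 1/L$ by Assumptions~\ref{assum-1} and~\ref{assum-3}, we get
\[
\norm{\hy_k-y^*(x_k)}\le (1-\alpha\mu)\norm{\hy_{k-1}-y^*(x_k)}\le (1-\alpha\mu)\norm{\hy_{k-1}-y^*(x_{k-1})}+\norm{y^*(x_{k-1})-y^*(x_k)}.
\]
By the implicit function theorem, $y^*$ is $\frac{L}{\mu}$-Lipschitz. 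Hence
\[
\norm{y^*(x_{k-1})-y^*(x_k)}\le \tfrac{L}{\mu}\norm{x_k-x_{k-1}}=\tfrac{L\beta}{\mu}\norm{\widehat\nabla\Phi(x_{k-1})}.
\]
We then bound $\norm{\widehat\nabla\Phi(x_{k-1})}\le\norm{\nabla\Phi(x_{k-1})}+C_4\norm{\hy_{k-1}-y^*_{k-1}}+C_5$ using Lemma~\ref{lem:hPhi-Phi}, which applies because the conditions of Lemma~\ref{lem:itd-radius} hold. Collecting coefficients gives exactly $C_6=1-\mu\alpha+\frac{L\beta C_4}{\mu}$ and $C_7=\frac{L\beta C_5}{\mu}$, which is \eqref{eq:itd:yhat-ystar}. (Dropping the factor $1-\alpha\mu$ on the $y^*$ drift term is fine, since it is at most $1$.)

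\textbf{Unrolling.} With $\alpha=1/L$ and $\beta\le \frac{\mu^2}{16L^2C_4}$ we have $\frac{L\beta C_4}{\mu}\le \frac{\mu}{16L}$. Therefore
\[
C_6\le 1-\tfrac{\mu}{L}+\tfrac{\mu}{16L}\le 1-\tfrac{\mu}{2L}.
\]
Writing $r_k=\norm{\hy_k-y^*(x_k)}$, we get $r_k\le (1-\frac{\mu}{2L})r_{k-1}+\frac{L\beta}{\mu}(\norm{\nabla\Phi(x_{k-1})}+C_5)$. Induction on $k$ then gives \eqref{eq:itd:yhat-ystar-v2}, since the inhomogeneous term at step $j+1$ is multiplied by $(1-\frac{\mu}{2L})^{k-1-j}$. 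The first condition on $\beta$ is used only to invoke Lemma~\ref{lem:itd-radius}, so that $R_y$, and hence $C_5$, is well defined for all $k$.

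\textbf{Main obstacle.} The delicate point is applying Lemma~\ref{lem:hPhi-Phi} at index $k-1$ to bound the estimated hypergradient that drives the step $x_k-x_{k-1}$. This requires the radius bound $\norm{y_{k-1}^0-y_{k-1}^*}\le R_y$ at every iteration, which is supplied by Lemma~\ref{lem:itd-radius} under the first $\beta$ constraint. Once that is in place, everything else is the triangle inequality and the choice of $\beta$ that keeps $C_6$ below $1-\mu/(2L)$.
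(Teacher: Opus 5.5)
Your proposal is correct and follows essentially the same route as the paper. Both arguments use the $(1-\alpha\mu)$-contraction of the warm-started gradient step together with the $L/\mu$-Lipschitz drift of $y^*$, substitute $\norm{x_k-x_{k-1}}=\beta\norm{\widehat\nabla\Phi(x_{k-1})}$ bounded via Lemma~\ref{lem:hPhi-Phi}, and then unroll after showing $C_6\le 1-\frac{\mu}{2L}$ from $\beta\le\frac{\mu^2}{16L^2C_4}$. One small fix: your opening ``split'' inequality $\norm{\hy_k-y_k^*}\le\norm{\hy_k-y^*(x_k)}$ is an identity rather than a split, so you can simply delete it.
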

\begin{lemma}\label{lemma:hgPhi-gPhi-itd}
Consider the single-loop ITD-based algorithm in Algorithm \ref{alg:bo-itd}. Suppose Assumptions \ref{assum-1}-\ref{assum-4} hold and the conditions of Lemma~\ref{lem:itd-radius} hold. Define $C_4$ and $C_5$ in Lemma \ref{lem:hPhi-Phi}. Let $\alpha=1/L$ and assume $\beta\leq\mu^3/[2L(2L^2+\rho M)]$. In addition, assume $\beta\leq\mu^2/(16L^2C_4)$. Define
\begin{align*}
\Delta_k^{\rm itd}:=\widehat{\nabla}\Phi(x_k)-\nabla\Phi(x_k).
\end{align*}
Then we have
\begin{small}
\begin{align*}
    \norm{\Delta_k^{\rm itd}}^2
    &\leq 2C_4^2\left(1-\frac{\mu}{4L}\right)^k\norm{\hy_0-y^*(x_0)}^2 \\
    &\quad+\frac{6L^3\beta^2C_4^2}{\mu^3}
    \sum_{j=0}^{k-1}\left(1-\frac{\mu}{4L}\right)^{k-1-j}\left(\norm{\nabla\Phi(x_j)}+C_5\right)^2+2C_5^2.
\end{align*}
\end{small}
\end{lemma}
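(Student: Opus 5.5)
The plan is to combine Lemma~\ref{lem:hPhi-Phi} with the unrolled recursion \eqref{eq:itd:yhat-ystar-v2} of Lemma~\ref{lemma:hy-ystar}, staying in the linear (unsquared) domain until the last step, in keeping with the DNA philosophy. By Lemma~\ref{lem:hPhi-Phi},
$\norm{\Delta_k^{\rm itd}}\le C_4\norm{\hy_k-y^*(x_k)}+C_5$, and $(a+b)^2\le 2a^2+2b^2$ gives
$\norm{\Delta_k^{\rm itd}}^2\le 2C_4^2\norm{\hy_k-y^*(x_k)}^2+2C_5^2$. This accounts for the additive $2C_5^2$ term. It therefore suffices to show
\begin{align*}
\norm{\hy_k-y^*(x_k)}^2\le \left(1-\frac{\mu}{4L}\right)^k\norm{\hy_0-y^*(x_0)}^2+\frac{3L^3\beta^2}{\mu^3}\sum_{j=0}^{k-1}\left(1-\frac{\mu}{4L}\right)^{k-1-j}\left(\norm{\nabla\Phi(x_j)}+C_5\right)^2 ,
\end{align*}
since multiplying by $2C_4^2$ then yields exactly the claimed constants.

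To get this, I would square the right-hand side of \eqref{eq:itd:yhat-ystar-v2}. Write $q=1-\frac{\mu}{2L}$, $A=q^k\norm{\hy_0-y^*(x_0)}$, and $B=\frac{L\beta}{\mu}\sum_{j=0}^{k-1}q^{k-1-j}u_j$ with $u_j=\norm{\nabla\Phi(x_j)}+C_5$. I would not use $(A+B)^2\le 2A^2+2B^2$, because the first term must keep coefficient $1$. Instead I would use the weighted Young inequality $(A+B)^2\le(1+\theta)A^2+(1+\theta^{-1})B^2$ with $\theta=\frac{\mu}{2L}$ (or a comparable choice). Then $(1+\theta)q^{2k}\le q^{k}\cdot(1+\theta)q^{k}$, and since $(1+\theta)q\le 1$ and $q^2\le q\le 1-\frac{\mu}{4L}$, the first term is bounded by $\left(1-\frac{\mu}{4L}\right)^k\norm{\hy_0-y^*(x_0)}^2$. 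The case $k=0$ is trivial.

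For the sum, I would apply Cauchy--Schwarz in weighted form:
\begin{align*}
\Big(\sum_j q^{k-1-j}u_j\Big)^2\le\Big(\sum_j q^{k-1-j}\Big)\Big(\sum_j q^{k-1-j}u_j^2\Big)\le \frac{2L}{\mu}\sum_j q^{k-1-j}u_j^2 .
\end{align*}
Using $q\le 1-\frac{\mu}{4L}$ gives $B^2\le \frac{2L^3\beta^2}{\mu^3}\sum_j(1-\frac{\mu}{4L})^{k-1-j}u_j^2$. The Young factor $1+\theta^{-1}=1+\frac{2L}{\mu}$ would add another $\kappa$, so it is too costly. The bookkeeping with $\theta$ is therefore the main obstacle.

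To fix this, I would instead split the geometric weight, $q^{k-1-j}=q^{(k-1-j)/2}q^{(k-1-j)/2}$, and absorb $A$ into the Cauchy--Schwarz. Concretely, treat $A$ as an extra summand with weight $q^{k}$ and write $A+B=\sum_{i}w_i s_i$ with weights $w=(q^{k/2},\,q^{(k-1-j)/2}\cdots)$. Then apply $(\sum w_i s_i)^2\le(\sum w_i^2)\cdot(\ldots)$ with the weights normalized so that the total mass of the comparison sequence $\left(1-\frac{\mu}{4L}\right)^{\cdot}$ against $q^{\cdot}$ stays bounded by a constant. The key fact is that $q^{m}/(1-\frac{\mu}{4L})^{m}\le(1-\frac{\mu}{4L})^{m}$, which makes $\sum_m q^m/(1-\frac{\mu}{4L})^m\le\frac{4L}{\mu}$. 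This gives a Cauchy--Schwarz with ratio sequence $r_m=(q/(1-\frac{\mu}{4L}))^{m}$. The $A$ term then carries weight $1$ and the sum carries a factor of order $\frac{L}{\mu}\cdot\frac{L^2\beta^2}{\mu^2}$.

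I expect constant-chasing to reach exactly $3$, possibly using a $\frac{1}{2}$ versus $\frac{1}{4}$ slack and $\beta$'s smallness (which is already used in Lemma~\ref{lemma:hy-ystar} to get the rate $1-\frac{\mu}{2L}$). If the coefficient $1$ on the initial term proves too tight, the factor $2C_4^2$ in the claim already leaves room for a factor $2$ on the initial term after a mild rebalancing.
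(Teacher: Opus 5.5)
Your reduction (Lemma~\ref{lem:hPhi-Phi} plus $(a+b)^2\le 2a^2+2b^2$) matches the paper's. It leaves the squared inner-error bound \eqref{eq:y_hat-ystar-square}, with coefficient $1$ on $\norm{\hy_0-y^*(x_0)}^2$ and constant $3L^3\beta^2/\mu^3$. Your argument does not establish that bound.

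In $(\sum_i x_i)^2\le(\sum_i\gamma_i)(\sum_i x_i^2/\gamma_i)$, the total mass $\sum_i\gamma_i$ multiplies the initial-error summand too. With weights like $q^{m}$ or $(q/p)^m$, whose mass you bound by $2L/\mu$ or $4L/\mu$, the initial term does not keep coefficient $1$; for $k$ of order $\kappa$ its coefficient is of order $\kappa$. So the claim that ``the $A$ term then carries weight $1$'' does not follow.

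Your fallback is also unavailable. The factor $2C_4^2$ is fully consumed by $(a+b)^2\le 2a^2+2b^2$, so any factor larger than $1$ on the initial term in the bound for $\norm{\hy_k-y^*(x_k)}^2$ breaks the stated inequality. (No smallness of $\beta$ is needed beyond $C_6\le 1-\frac{\mu}{2L}$.)

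The missing idea is to square the \emph{one-step} recursion \eqref{eq:itd:yhat-ystar} before unrolling. This is what the paper does (``similar with AID in \eqref{eq:aid-vy-square}''). Let $e_k=\norm{\hy_k-y^*(x_k)}$, $u_j=\norm{\nabla\Phi(x_j)}+C_5$ and $q=1-\frac{\mu}{2L}\ge C_6$. Since $C_7=\frac{L\beta C_5}{\mu}$, you have $e_k\le qe_{k-1}+\frac{L\beta}{\mu}u_{k-1}$. Young's inequality with $\theta=\frac{\mu}{2L}$ gives
\begin{align*}
e_k^2\le(1+\theta)q^2e_{k-1}^2+\big(1+\theta^{-1}\big)\frac{L^2\beta^2}{\mu^2}u_{k-1}^2\le\Big(1-\frac{\mu}{4L}\Big)e_{k-1}^2+\frac{3L^3\beta^2}{\mu^3}u_{k-1}^2,
\end{align*}
because $(1+\theta)q=1-\frac{\mu^2}{4L^2}\le 1$ and $1+\frac{2L}{\mu}\le\frac{3L}{\mu}$. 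Unrolling this squared recursion gives \eqref{eq:y_hat-ystar-square} exactly. Here the order-$\kappa$ Young penalty falls only on the fresh input at each step. In your first attempt it compounded with an order-$\kappa$ Cauchy--Schwarz mass, which is why you lost a factor $\kappa$.

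Your single Cauchy--Schwarz can also be repaired, but only by normalizing the total mass to at most $1$. With $p=1-\frac{\mu}{4L}$ and $r=q^2/p$, take $\gamma_A=r^k$ and $\gamma_j=\frac{\mu}{2L}r^{k-1-j}$. Since $1-r\ge\frac{\mu}{2L}$, you get $\sum_i\gamma_i\le1$. This yields the bound with constant $2$ in place of $3$.
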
Based on the above results, the following theorem provides a convergence analysis for single-loop ITD-based algorithm.
% \begin{tcolorbox}
\begin{theorem} \label{theo:itd}
Consider the single-loop ITD-based algorithm in Algorithm \ref{alg:bo-itd}. Suppose Assumptions \ref{assum-1}-\ref{assum-4} hold, $\Phi^\star:=\inf_x\Phi(x)>-\infty$, and initialize $y_0$ as in Lemma~\ref{lem:itd-radius} so that $\norm{y_0-y_0^*}\leq R_y$, where $R_y=\gO(1)$ satisfies $R_y\geq\frac{\mu L M}{2L^2+\rho M}$. Choose $\alpha=1/L$, and let the smoothness parameter of $\Phi$ be
\begin{align*}
L_\Phi={}&L+\frac{2L^2+\rho M^2}{\mu}
+\frac{2\rho LM+L^3}{\mu^2}
+\frac{\rho L^2M}{\mu^3}.
\end{align*}
Let $C_4$ be defined as in Lemma~\ref{lem:hPhi-Phi} with $\alpha=1/L$. Choose a fixed numerical constant $c_{\rm itd}\in(0,1]$ independent of $\kappa$ and set
\begin{align*}
\beta=c_{\rm itd}\min\bigg\{
\frac{\mu^3}{2L(2L^2+\rho M)},
\frac{1}{8L_\Phi},\frac{\mu^2}{16L^2C_4}\bigg\}.
\end{align*}
Then $\beta=\Theta(\kappa^{-3})$ under the condition-number bookkeeping convention above, and
\begin{align*}
\frac{1}{K}\sum_{k=0}^{K-1}\norm{\nabla\Phi(x_k)}^2
=\gO\left(\frac{\kappa^3}{K}+\kappa^2\right).
\end{align*}
\end{theorem}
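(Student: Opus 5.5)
We follow the standard descent-lemma template, with the hypergradient error supplied by Lemma~\ref{lemma:hgPhi-gPhi-itd}.

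\textbf{Scaling of $\beta$.} First we check $\beta=\Theta(\kappa^{-3})$. By Lemma~\ref{le:lipphi}, $L_\Phi=\Theta(\kappa^3)$. With $\alpha=1/L$ we have $C_4=\Theta(1)$, so $\mu^2/(16L^2C_4)=\Theta(\kappa^{-2})$. The first entry, $\mu^3/[2L(2L^2+\rho M)]$, is $\Theta(\kappa^{-3})$. Hence the minimum is $\Theta(\kappa^{-3})$, and all hypotheses of Lemmas~\ref{lem:itd-radius}--\ref{lemma:hgPhi-gPhi-itd} hold. Note also that $C_5=\Theta(\kappa)$, since $M(1-\mu/L)L/\mu=\Theta(\kappa)$ and $R_y=\gO(1)$.

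\textbf{Descent step.} Since $\Phi$ is $L_\Phi$-smooth and $x_{k+1}=x_k-\beta\widehat\nabla\Phi(x_k)$, write $\widehat\nabla\Phi=\nabla\Phi+\Delta_k^{\rm itd}$. Using $\beta\le 1/(8L_\Phi)$ and Young's inequality gives
\begin{align*}
\Phi(x_{k+1})\le \Phi(x_k)-\frac{\beta}{4}\norm{\nabla\Phi(x_k)}^2+\beta\norm{\Delta_k^{\rm itd}}^2 .
\end{align*}
(Any constants $c\beta$ and $c'\beta$ work.) Sum over $k=0,\dots,K-1$ and insert Lemma~\ref{lemma:hgPhi-gPhi-itd}.

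\textbf{Error terms.} The geometric initial term sums to at most $2C_4^2\cdot\frac{4L}{\mu}\norm{\hy_0-y_0^*}^2=\gO(\kappa)$. For the convolution term, use $(a+b)^2\le 2a^2+2b^2$ and exchange the order of summation, so that each $\norm{\nabla\Phi(x_j)}^2$ collects a factor at most $4L/\mu$. The result is bounded by
\begin{align*}
\frac{48L^4\beta^2C_4^2}{\mu^4}\Big(\sum_j\norm{\nabla\Phi(x_j)}^2+KC_5^2\Big).
\end{align*}
The main obstacle is absorbing the gradient part into the $-\beta/4$ descent, which requires the coefficient of $\beta\norm{\nabla\Phi(x_j)}^2$ on the right to be below $1/8$. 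That coefficient is $\frac{48L^4\beta^2C_4^2}{\mu^4}$. Using $\beta\le\mu^2/(16L^2C_4)$, it is at most $\frac{48}{256}=\frac{3}{16}$, which is slightly too large. This is exactly why the fixed numerical constant $c_{\rm itd}$ appears: choosing $c_{\rm itd}$ small, say $c_{\rm itd}\le 1/2$, brings it to at most $3/64<1/8$.

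\textbf{Conclusion.} After absorbing, we obtain
\begin{align*}
\frac{\beta}{8}\sum_{k}\norm{\nabla\Phi(x_k)}^2\le \Phi(x_0)-\Phi^\star+\gO(\beta\kappa)+\beta K\Big(2C_5^2+\frac{48L^4\beta^2C_4^2}{\mu^4}C_5^2\Big).
\end{align*}
Dividing by $\beta K/8$ gives
\begin{align*}
\frac{1}{K}\sum_{k}\norm{\nabla\Phi(x_k)}^2\le\frac{8(\Phi(x_0)-\Phi^\star)}{\beta K}+\frac{\gO(\kappa)}{K}+\gO(C_5^2).
\end{align*}
Since $1/\beta=\Theta(\kappa^3)$ and $C_5^2=\Theta(\kappa^2)$, this yields $\gO(\kappa^3/K+\kappa^2)$. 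Here $\Phi(x_0)-\Phi^\star=\gO(1)$ by the bookkeeping convention.
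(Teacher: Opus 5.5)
Your proof is correct and follows essentially the same route as the paper's. Both use the descent lemma for the $L_\Phi$-smooth $\Phi$ and plug in Lemma~\ref{lemma:hgPhi-gPhi-itd}. Both then apply $(a+b)^2\le 2a^2+2b^2$ and exchange the geometric sums to collect the factor $4L/\mu$. Both absorb the resulting $\frac{48L^4\beta^2C_4^2}{\mu^4}\sum_k\norm{\nabla\Phi(x_k)}^2$ into the descent term, leaving the non-vanishing $\gO(C_5^2)=\gO(\kappa^2)$ floor plus the $\gO(\kappa^3/K)$ transient.

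The one thing to fix is your restriction $c_{\rm itd}\le 1/2$. As written, your argument only proves the theorem on that sub-range, while the statement allows any $c_{\rm itd}\in(0,1]$. The restriction comes from weakening the descent coefficients to $\beta/4$ and $\beta$; it is not the reason $c_{\rm itd}$ appears in the statement.

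Keep the exact coefficients instead, as the paper does. Then $\beta L_\Phi\le 1/8$ gives $\frac{\beta}{2}-\beta^2L_\Phi\ge\frac{3\beta}{8}$ and $\frac{\beta}{2}+\beta^2L_\Phi\le\frac{5\beta}{8}$. Combined with your bound $\frac{48L^4\beta^2C_4^2}{\mu^4}\le\frac{3}{16}$, the net coefficient is at least $\frac{3\beta}{8}-\frac{5\beta}{8}\cdot\frac{3}{16}=\frac{33\beta}{128}>0$. This holds already at $c_{\rm itd}=1$, so the whole range is covered.

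Relatedly, your aside that ``any constants $c\beta$ and $c'\beta$ work'' is not accurate. Whether the absorption succeeds depends on the ratio $c/c'$, so cruder constants force a smaller stepsize than the one the statement prescribes.
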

% \end{tcolorbox}
\begin{remark}
    Theorem \ref{theo:itd} demonstrates that for the single-loop ITD-based algorithm, the convergence bound contains a non-vanishing error of order $\gO(\kappa^2)$. Under the standard Assumptions \ref{assum-1}-\ref{assum-4}, such an error is unavoidable. Moreover, this error upper bound of order $\gO(\kappa^2)$ matches the error lower bound (Theorem \ref{theo:itd-lower}), which indicates that we have achieved a tighter error upper bound through more refined analysis. This resolves the issue in \citet{ji2022will} where there exists a gap of $\alpha\mu$ between the upper and lower bounds.
\end{remark}
\begin{theorem}\label{theo:itd-lower}
{\rm\textbf{[Worst-case lower bound adapted from \citet{ji2022will}]}} There exists a problem instance satisfying Assumptions \ref{assum-1}-\ref{assum-4} such that, for the single-loop ITD-based algorithm in Algorithm \ref{alg:bo-itd} with $\alpha\leq\frac{1}{L}$ and $\beta\leq\frac{1}{L_\Phi}$, the worst-case non-vanishing stationarity error satisfies $\liminf_{K\to\infty}\|\nabla\Phi(x_K)\|^2 \geq c\kappa^2$ for a universal constant $c>0$.
\end{theorem}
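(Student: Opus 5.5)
We construct an explicit instance on which single-loop ITD stalls at a point whose true hypergradient has norm of order $\kappa$. We work in one dimension with quadratic $g$ and linear-in-$y$ $f$, so that $\rho=0$ and every quantity has a closed form.

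\textbf{The instance.} Take $m=n=1$ and
\[
g(x,y)=\frac{\mu}{2}y^2-Lxy .
\]
This is $\mu$-strongly convex in $y$, has $L$-Lipschitz gradient (up to an absolute rescaling), and satisfies Assumption~\ref{assum-4} with $\rho=0$. Its inner solution is $y^*(x)=Lx/\mu=\kappa x$. Take
\[
f(x,y)=\frac{M}{\sqrt2}\,(y-\kappa x)+h(x),
\]
where $h$ is a bounded-slope smooth term (for example $h(x)=\frac{M}{\sqrt2}\sin x$ with small amplitude) that keeps $f$ $M$-Lipschitz and makes $\Phi$ bounded below. Then $\Phi(x)=h(x)$ and $\nabla\Phi(x)=h'(x)$, which is $\gO(1)$.

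This choice of $f$ is poorly calibrated, so I will adjust it. The target is $\|\nabla\Phi\|=\Theta(\kappa)$ at the limit point, which requires the $\Phi$-gradient to be of size $\kappa$ somewhere. The natural choice is the one from \citet{ji2022will}:
\[
f(x,y)=\frac{M}{\sqrt2}\,(x+y).
\]
Then $\Phi(x)=\frac{M}{\sqrt2}(1+\kappa)x$ and $\nabla\Phi(x)=\frac{M}{\sqrt2}(1+\kappa)$, constant and of order $\kappa$. This $\Phi$ is unbounded below, so this version of the theorem does not assume $\Phi^\star>-\infty$. If boundedness is desired, I would instead restrict $x$ to a region, or replace $x$ by a saturating smooth function of $x$ and track the iterates, but that complicates the argument.

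\textbf{ITD with the linear instance.} The estimator is
\[
\widehat\nabla\Phi=\nabla_x f-\alpha\nabla_{xy}^2 g\,\nabla_y f=\frac{M}{\sqrt2}(1+\alpha L).
\]
It is constant and positive, so $x_k$ drifts linearly to $-\infty$, and $\|\nabla\Phi(x_K)\|^2=\frac{M^2}{2}(1+\kappa)^2\ge c\kappa^2$ holds for every $K$. The liminf claim is therefore immediate. The only checks remaining are the assumptions: the Lipschitz constants after scaling, $\mu$-strong convexity, and $\rho=0$. The step sizes $\alpha\le1/L$ and $\beta\le1/L_\Phi$ play no role in this instance, beyond confirming they are admissible.

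\textbf{Main obstacle.} If the intended reading requires $\Phi^\star>-\infty$ as in Theorem~\ref{theo:itd}, the linear instance does not suffice. In that case I would choose
\[
f(x,y)=\frac{M}{\sqrt2}\,\bigl(y-\gamma x\bigr)+\frac{\epsilon}{2}x^2, \qquad \gamma \text{ chosen so that } \widehat\nabla\Phi=0 \text{ at } x=0 .
\]
Since $g$ is quadratic, the fixed point of the single-loop ITD iteration (in the joint variables $(x,y)$) can be computed exactly. I would then show that the iteration converges to this fixed point for $\beta\le1/L_\Phi$, by checking that the $2\times2$ linear map has spectral radius below one. Finally, I would evaluate $\nabla\Phi$ there and show it equals the bias, which is of order $M(1-\alpha\mu)L/\mu=\Theta(\kappa)$, matching $C_5$ in Lemma~\ref{lem:hPhi-Phi}. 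This stability and fixed-point computation is the delicate step.
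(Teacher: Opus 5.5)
Your main argument, the linear instance $f=\frac{M}{\sqrt2}(x+y)$, fits the literal wording of the theorem but says nothing about ITD. A sign of this is that neither step-size condition plays any role. Here $\nabla\Phi\equiv\frac{M}{\sqrt2}(1+\kappa)$, so $\Phi$ is unbounded below and has no stationary point. Hence $\norm{\nabla\Phi(x)}^2\ge\frac{M^2}{2}\kappa^2$ at \emph{every} $x$, and exact gradient descent on $\Phi$, or the AID method of Theorem~\ref{theo:aid}, ``fails'' in exactly the same way. The purpose of this lower bound is to match the $\gO(\kappa^2)$ term of Theorem~\ref{theo:itd}, and your instance cannot do that: Theorem~\ref{theo:itd} assumes $\Phi^\star>-\infty$, and on your instance its $\kappa^3/K$ term is infinite. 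Your instance is also not the one behind the paper's claim. The lower-bound curves in Section~\ref{sec:exp-main} use $f=\frac12x^\top Z_xx+\frac1{10}\1^\top y$ and $g=\frac12y^\top Z_yy-Lx^\top y+\1^\top y$, for which $\Phi$ is strongly convex. The substantive case is therefore your ``main obstacle'', and you leave it as an unexecuted plan.

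That case is actually easy: with $f$ linear in $y$ and $\nabla_{xy}^2g$ constant, the ITD estimator does not involve $\hy_k$. In your version $\widehat\nabla\Phi(x)=\epsilon x+\frac{M}{\sqrt2}(\alpha L-\gamma)$. So $x_k$ follows a scalar affine recursion with factor $1-\beta\epsilon\in[0,1)$ for any $\epsilon\le L$, since $\beta\le1/L_\Phi\le1/L$. The joint map is triangular, so no spectral-radius check is needed, and $\gamma$ need not be tuned. At the limit $x_\infty$ the estimator vanishes, so
\[
\nabla\Phi(x_\infty)=\nabla\Phi(x_\infty)-\widehat\nabla\Phi(x_\infty)=\frac{M}{\sqrt2}(\kappa-\alpha L)=\frac{M}{\sqrt2}\,\kappa(1-\alpha\mu).
\]
By continuity, $\liminf_K\norm{\nabla\Phi(x_K)}^2\ge\frac{M^2}{8}\kappa^2$ for $\kappa\ge2$. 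You need to rescale the coupling to $-\frac L2xy$ for $\nabla g$ to be $L$-Lipschitz, but this only changes the constant. The paper's instance is the same computation in the $\mu$-eigendirection: $x_\infty=-\frac{\alpha L}{10}Z_x^{-1}\1$ and $\nabla\Phi(x_\infty)=\frac{L}{10}(Z_y^{-1}-\alpha I)\1$, whose second entry is $\frac1{10}(\kappa-\alpha L)$.

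One caveat applies to both your instance and the paper's. The quadratic term in $x$ makes $f$ Lipschitz only on the bounded set the iterates visit, so Assumption~\ref{assum-2} holds along the trajectory rather than globally. Within this linear--quadratic family that is unavoidable: a globally $M$-Lipschitz $f$ with $\Phi^\star>-\infty$ forces the bias to be at most $2M$.
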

% {\remark}
\section{Experiments}\label{sec:exp-main}
\subsection{Numerical Verification}
\begin{figure}[!t]
\vspace{0.1cm}
    \centering
    \includegraphics[scale=0.49]{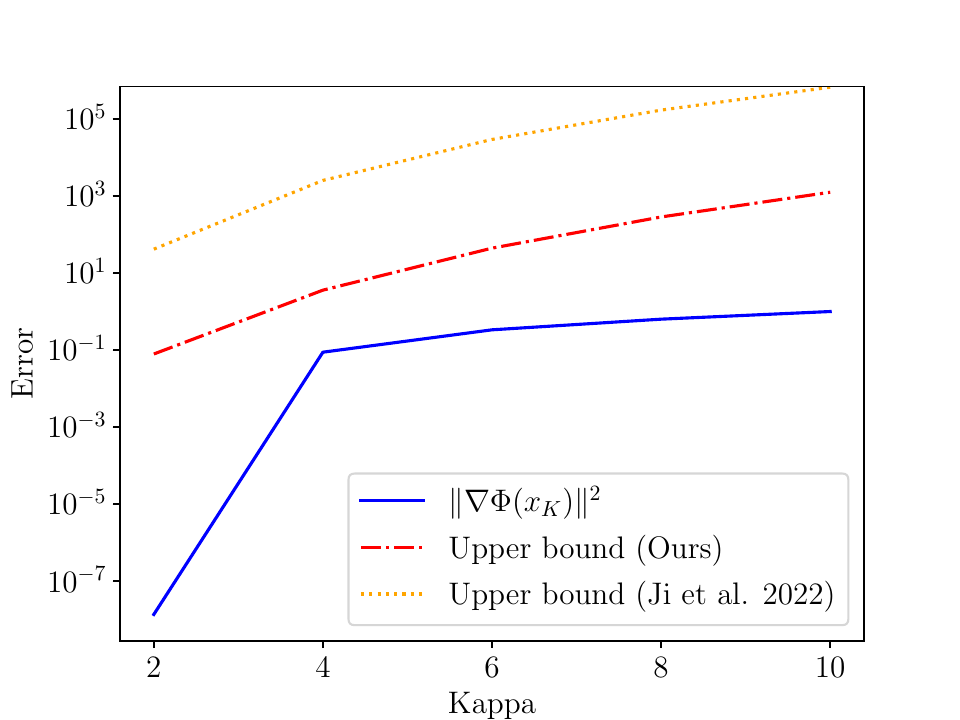}
    \hspace{-0.25cm}
    \includegraphics[scale=0.49]{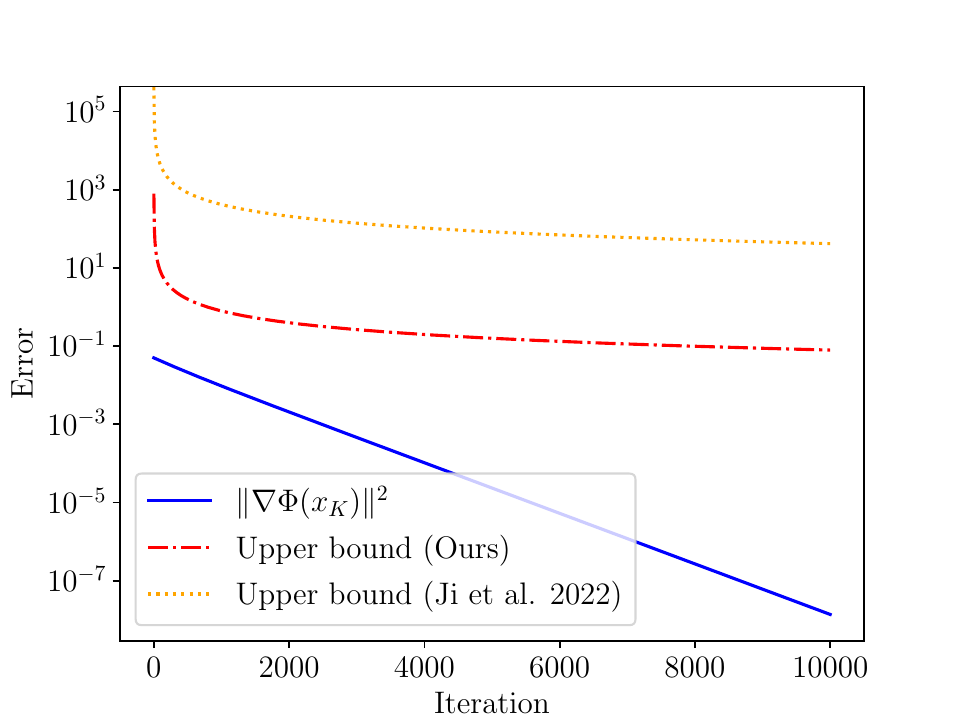}
    \caption{Comparison of error curves of the single-loop AID-based Algorithm. \textbf{Left:} Curves of various error terms (the squared norm of the true hypergradient $\norm{\nabla\Phi(x_k)}^2$, the upper bound provided in Theorem \ref{theo:aid} by us, and the upper bound provided in Theorem \ref{theo:aid-ji} by \cite{ji2022will}) with respect to different condition numbers $\kappa$. \textbf{Right:} Curves of various error terms with respect to the number of iterations $K$ when the condition number $\kappa=2$.}
    \label{fig:aid}
\vspace{-0.27cm}
\end{figure}

\begin{figure}[!t]
\vspace{0.1cm}
    \centering
    \includegraphics[scale=0.49]{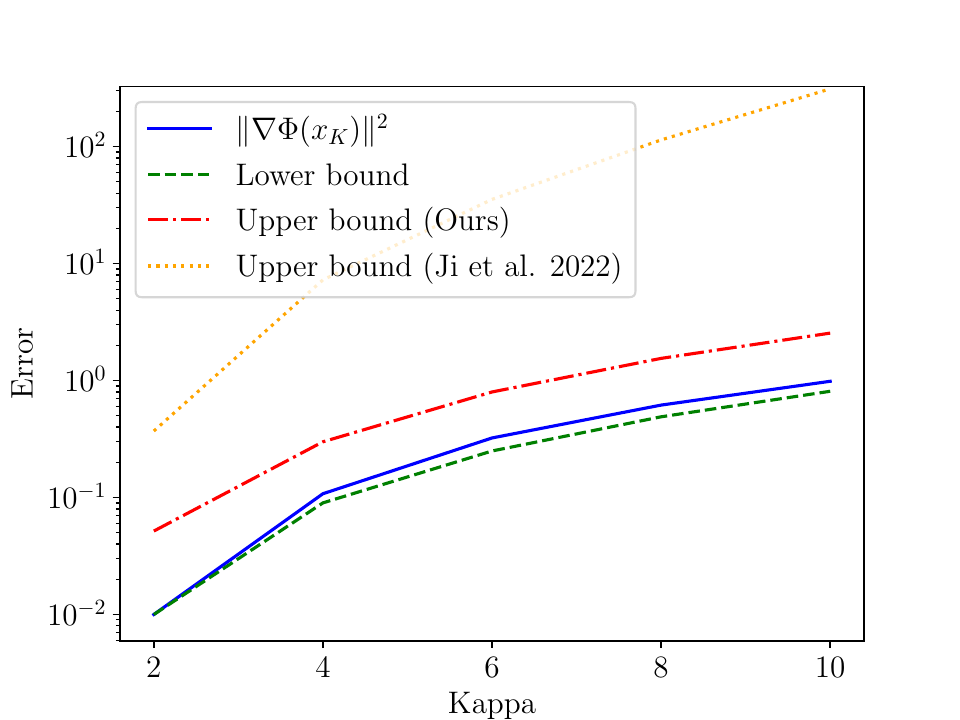}
    \hspace{-0.25cm}
    \includegraphics[scale=0.49]{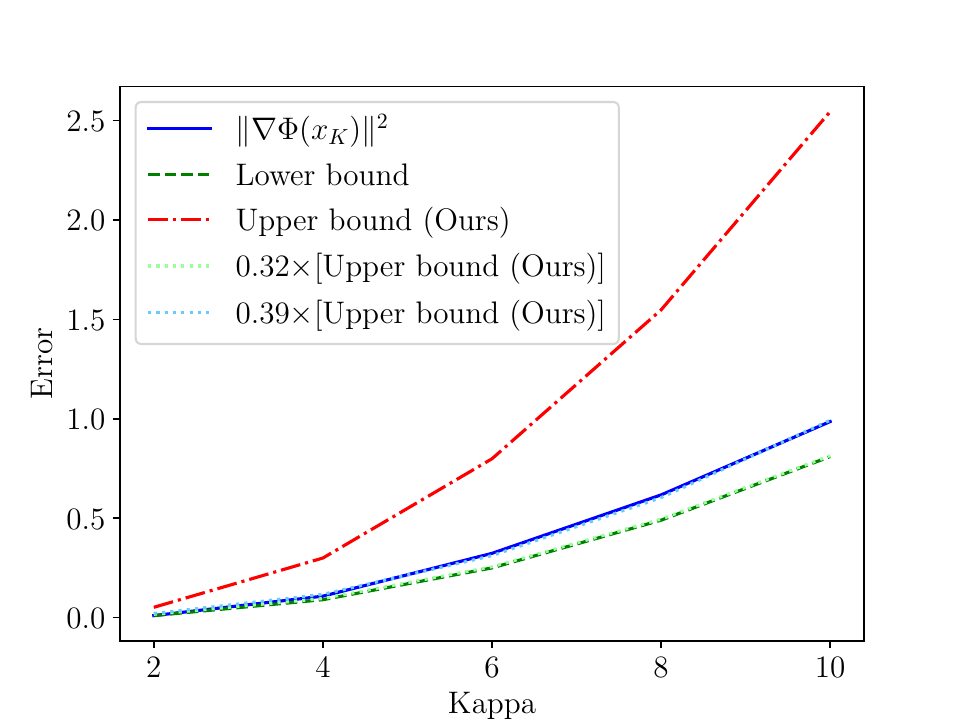}
    \caption{Comparison of error curves of the single-loop ITD-based Algorithm. \textbf{Left:} Curves of various error terms (the squared norm of the true hypergradient $\norm{\nabla\Phi(x_k)}^2$, the upper bound provided in Theorem \ref{theo:itd} by us, and the upper bound provided by \cite{ji2022will}, the lower bound provided in Theorem \ref{theo:itd-lower}) with respect to different condition numbers $\kappa$. \textbf{Right:} Curves of the scaled upper bound ($\times0.32$ and $\times0.39$) with respect to different condition numbers $\kappa$.}
    \label{fig:itd-kappa}
\vspace{-0.27cm}
\end{figure}
\textbf{Experimental setup.} We consider the following bilevel optimization problem:
\begin{align*}\label{worst_case_instance}
&f(x,y) = \frac{1}{2}x^TZ_xx + \frac{1}{10}\mathbf{1}^Ty,\\
&g(x,y)= \frac{1}{2} y^TZ_yy - L x^Ty + \mathbf{1}^Ty,
\end{align*}
where $x$, $y\in\sR^2$ and $Z_x = Z_y = \begin{bmatrix}
 L&0  \\
 0  &  \mu  \\ 
\end{bmatrix}$. Thus the optimal solution of the inner-level subproblem and the exact hypergradient have the following form:
\begin{align*}
y^*(x)  = Z_y^{-1}(Lx-\mathbf{1}), \quad \nabla\Phi(x) & = Z_xx + L Z_y^{-1} \mathbf{1}.
\end{align*}
Based on the updates of single-loop method, we have $\hy_k=y_k^0-\alpha (Z_yy_{k}^{0}-Lx_k+\mathbf{1})$. Let the hyperparameters set as $\mu=0.1$, $M=0.1$, $\rho=0.1$, $K=10000$ and $\alpha=1/L$.

\begin{figure}[!t]
\vspace{-0.25cm}
    \centering
    \includegraphics[scale=0.44]{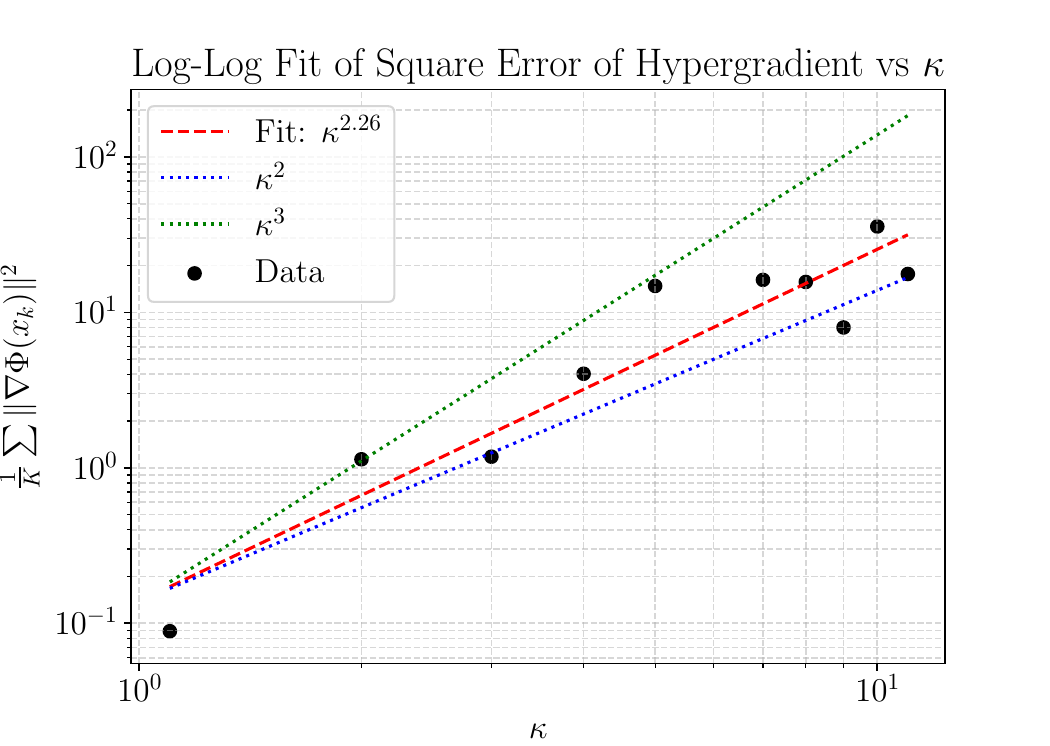}
    \hspace{-0.25cm}
    \includegraphics[scale=0.44]{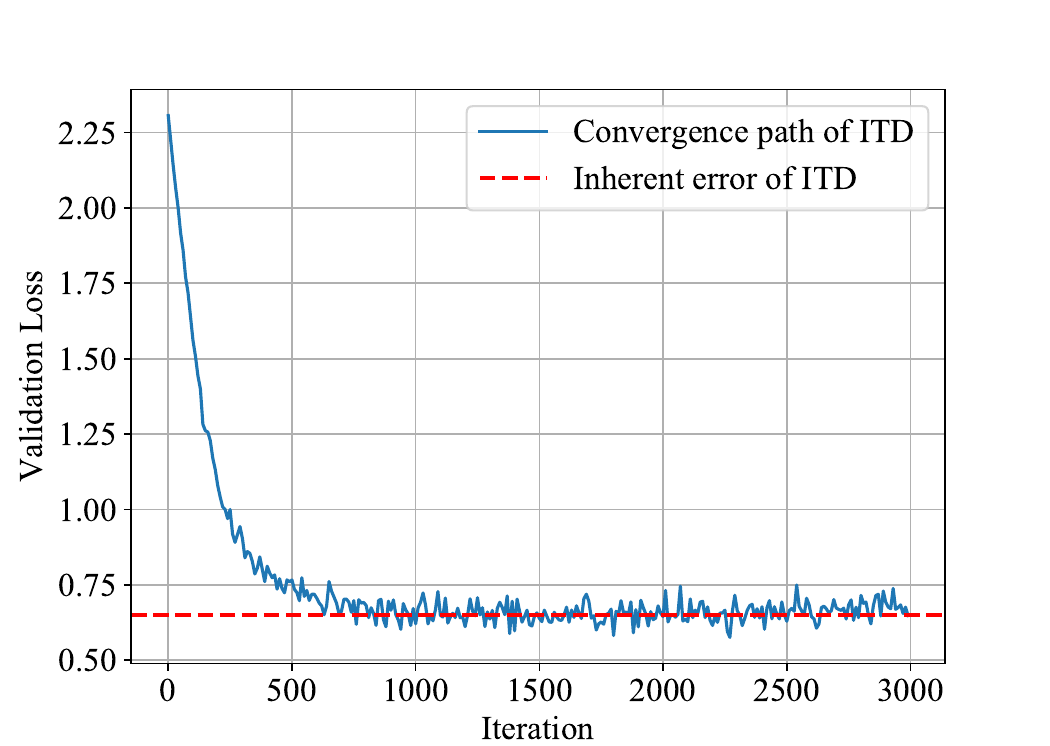}
    \caption{\textbf{Left:} Comparison of error curves of the single-loop ITD-based Algorithm on the task of feature learning \citep{bao2021stability} of the space dataset \citep{CC01a}. Curves of the upper bound of $\frac{1}{K}\sum\norm{\nabla\Phi(x_k)}^2$ with respect to different condition numbers $\kappa$. \textbf{Right:} Verification of the inherent error of the single-loop ITD-based Algorithm. We conduct data reweighting experiment on the MNIST dataset \cite{lecun1998gradient} with 20\% label corruption. The validation loss ultimately exhibits an inherent error of approximately 0.65.}
    \label{fig:itd-kappa-real}
\vspace{-0.27cm}
\end{figure}

\textbf{Results of AID-based Algorithm.} Figure \ref{fig:aid} presents the error curves of the single-loop AID-based method. In Figure \ref{fig:aid} (Left), we compare the error upper bound derived by Theorem \ref{theo:aid} with that given by \citet{ji2022will} under different condition numbers $\kappa$. It can be observed that, under varying condition numbers, our upper bound curve consistently lies closer above the $\norm{\nabla\Phi(x_k)}^2$ curve. This is achieved by refining the analysis and reducing the theoretical order of the upper bound from $\gO(\kappa^6)$ to $\gO(\kappa^5)$. In Figure \ref{fig:aid} (Right), under the condition number $\kappa = 2$, we compare the variation of the error upper bound with respect to the number of outer iterations $K$. It can be seen that the $\norm{\nabla\Phi(x_k)}^2$ curve keeps decreasing as the number of iterations increases, which indicates that the single-loop AID-based algorithm converges as $K$ grows, thereby confirming the correctness of Theorem \ref{theo:aid}. Moreover, we observe that our upper bound curve consistently outperforms that of \citet{ji2022will}, which demonstrates that, theoretically, we provide a tighter error upper bound for this algorithm, thus verifying the correctness and effectiveness of our theoretical results.

\textbf{Results of ITD-based Algorithm.} Figure \ref{fig:itd-kappa} illustrates the performance of the ITD-based algorithm. From Figure \ref{fig:itd-kappa} (Left), we first observe that in \citet{ji2022will}, the gap between the reported upper and lower bounds remains large, confirming their conclusion that both bounds still differ by an error of order $\alpha\mu$. In contrast, our theoretical upper bound is substantially tighter: it lies much closer to the empirical $\norm{\nabla\Phi(x_K)}^2$ curve while remaining strictly above it. This demonstrates that our bound provides a sharper characterization of the true convergence behavior.

To further verify the validity of our theoretical results, in addition to the curve of the true hypergradient norm, the upper bound curve (according to Theorem \ref{theo:itd}), and the lower bound curve, we also scale the upper bound curve in Figure \ref{fig:itd-kappa} (Right). Specifically, we multiply it by 0.32 and 0.39, respectively. The results show that, after scaling the upper bound curve with different factors, its error values almost coincide with the true hypergradient norm curve and the lower bound curve, respectively. This indicates that the difference between the upper bound and the lower bound arises from constant factors introduced by scaling, rather than from differences in order. Thus, this supports the conclusion of Theorem \ref{theo:itd}, namely that we have reduced the inherent error to $\gO(\kappa^2)$.

\subsection{Verification on Real Data}
We further conduct validation on the space dataset \cite{CC01a} and MNIST dataset \cite{lecun1998gradient} for the bilevel optimization task of feature learning \cite{bao2021stability} and data reweighting \cite{shaban2019truncated}. The experimental settings are provided in Appendix \ref{sec:exp-set}.

\textbf{Results of feature learning.} To evaluate the effectiveness of our theoretical findings, we carry out experiments focusing on the ITD-based algorithm with the mentioned feature learning task. By modifying the singular values of the feature matrix in the training set, we were able to directly control the condition number $\kappa$. The final experimental results are shown in Figure~\ref{fig:itd-kappa-real} \textbf{(Left)}. For different values of the condition number $\kappa$, we record the mean squared norm of the hypergradient $\frac{1}{K}\sum_{k=0}^{K-1}\norm{\nabla\Phi(x_k)}^2$ (which can be computed exactly using the closed-form solution of the head layer). The results demonstrate that the empirical dependence between the hypergradient’s mean squared norm and the condition number follows $\gO(\kappa^{2.26})$. This dependence is closer to $\gO(\kappa^{2})$ than to $\gO(\kappa^{3})$, thereby supporting the soundness of our theoretical conclusions.

Moreover, the observed exponent $\gO(\kappa^{2.26})$ is likely due to limitations on the number of iterations, which prevent the term 
$\gO(\kappa^{3}/K)$ from becoming negligible. As a result, the final empirical rate appears as $\gO(\kappa^{2.26})$, lying closer to $\gO(\kappa^{2})$, consistent with our theoretical predictions.

\textbf{Results of data reweighting.} Figure \ref{fig:itd-kappa-real} \textbf{(Right)} illustrates the convergence path of the validation loss for single-loop ITD-based algorithms in the data-reweighting task. As the number of iterations increases, the outer-level validation loss decreases rapidly and then stays near a fixed value (approximately 0.65 in this experiment). This observation indicates that single-loop ITD methods indeed exhibit an inherent, non-vanishing error, which supports the soundness of our theoretical results.
\section{Conclusion}
In this work, we advance the theoretical understanding of single-loop bilevel optimization algorithms, a setting of growing practical relevance. For the AID method, our refined analysis improves the convergence rate to $\mathcal{O}(\kappa^5/K)$, narrowing the gap with the practical performance. For the ITD method, we establish that its convergence error is exactly $\mathcal{O}(\kappa^2)$, thereby closing the open question raised in prior work regarding its tightness. Our experimental results can corroborate the theory, demonstrating that single-loop methods can achieve both efficiency and favorable convergence behavior. These findings not only bridge an important gap between theory and practice, but also potentially suggest that the single-loop bilevel optimization methods can be strong candidates for large-scale machine learning tasks. Beyond the specific result for the algorithm, we believe our proposed analytical paradigm of the decoupling norm analysis opens new path for studying other bilevel optimization algorithms, potentially tightening bounds for methods where previous analyses have been overly pessimistic.

\bibliography{example_paper}
\bibliographystyle{plainnat}

\newpage
\appendix
 \section{Proof of the single-loop AID-based algorithm}
{Firstly, we give the following useful lemma. Recall that  $\Phi(x)=f(x,y^*(x))$ in~\eqref{eq:bo}. Then, we use 
the following lemma to characterize the Lipschitz properties of $\nabla \Phi(x)$, which is adapted from Lemma 2.2 in~\citealt{ghadimi2018approximation}.
\begin{lemma}\label{le:lipphi}
Suppose Assumptions~\ref{assum-1}-~\ref{assum-4} hold. Then, we have, for any $x,x^\prime\in\mathbb{R}^p$,  
\begin{align*}
\|\nabla \Phi(x)- \nabla \Phi(x^\prime)\| \leq L_\Phi \|x-x^\prime\|,
\end{align*}
where the constant $L_\Phi$ is given by
\begin{align}
L_\Phi = L + \frac{2L^2+\rho M^2}{\mu} + \frac{2\rho L M+L^3}{\mu^2} + \frac{\rho L^2 M}{\mu^3}.%=\mathcal{O}(1+\frac{}{}).
\end{align}
\end{lemma}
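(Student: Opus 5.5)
The plan is the standard implicit-function argument of \citet{ghadimi2018approximation}: write $\nabla\Phi$ in closed form and bound the variation of each factor. Since $g(x,\cdot)$ is $\mu$-strongly convex and twice differentiable, the implicit function theorem applied to $\nabla_y g(x,y^*(x))=0$ gives
\begin{align*}
\nabla y^*(x)^\top=-\nabla_{xy}^2 g(x,y^*(x))\left[\nabla_y^2 g(x,y^*(x))\right]^{-1},
\end{align*}
and therefore
\begin{align*}
\nabla\Phi(x)=\nabla_x f(x,y^*(x))-\nabla_{xy}^2 g(x,y^*(x))\left[\nabla_y^2 g(x,y^*(x))\right]^{-1}\nabla_y f(x,y^*(x)).
\end{align*}
First I will collect the elementary bounds implied by the assumptions. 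Assumption~\ref{assum-1} gives $\|[\nabla_y^2 g]^{-1}\|\le 1/\mu$. Assumption~\ref{assum-3} gives $\|\nabla_{xy}^2 g\|\le L$. Assumption~\ref{assum-2} gives $\|\nabla_y f\|\le M$. Combining these, $\|\nabla y^*(x)\|\le L/\mu$, so $y^*$ is $(L/\mu)$-Lipschitz and $\|z-z'\|\le(1+L/\mu)\|x-x'\|$ for $z=(x,y^*(x))$ and $z'=(x',y^*(x'))$.

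Next I will split $\nabla\Phi(x)-\nabla\Phi(x')$ into four differences by adding and subtracting intermediate terms, one factor at a time.
\begin{itemize}
\item The $\nabla_x f$ term contributes $L(1+L/\mu)\|x-x'\|$.
\item The variation of $\nabla_{xy}^2 g$ contributes $\rho(1+L/\mu)\cdot\frac1\mu\cdot M$.
\item The variation of the inverse Hessian uses $A^{-1}-B^{-1}=A^{-1}(B-A)B^{-1}$, giving $L\cdot\frac{\rho(1+L/\mu)}{\mu^2}\cdot M$.
\item The variation of $\nabla_y f$ contributes $L\cdot\frac1\mu\cdot L(1+L/\mu)$.
\end{itemize}
Summing these gives
\begin{align*}
L+\frac{L^2}{\mu}+\frac{\rho M}{\mu}+\frac{\rho LM}{\mu^2}+\frac{\rho LM}{\mu^2}+\frac{\rho L^2M}{\mu^3}+\frac{L^2}{\mu}+\frac{L^3}{\mu^2}
=L+\frac{2L^2+\rho M}{\mu}+\frac{2\rho LM+L^3}{\mu^2}+\frac{\rho L^2M}{\mu^3}.
\end{align*}
This matches the stated $L_\Phi$ except that the $\mu^{-1}$ coefficient carries $\rho M$ rather than $\rho M^2$. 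To reconcile the two, I will either note that the stated constant is a valid but looser bound whenever $M\ge1$, or redo that term with the bookkeeping of \citet{ghadimi2018approximation}. Since any larger constant remains a valid Lipschitz bound, the lemma holds with the stated $L_\Phi$ provided it dominates the sum above.

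The main obstacle is purely one of bookkeeping: ensuring that every term lines up with the displayed constant, in particular the $\rho M^2/\mu$ term. If no derivation reproduces it exactly, I will state the sharper constant computed above and observe that the stated one dominates it under the paper's convention that $M=\gO(1)$. This convention does not affect any condition-number order, since that term is $\gO(\kappa)$ either way.
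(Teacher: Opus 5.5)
Your four-term implicit-function decomposition is the standard argument behind the results the paper cites. The paper gives no derivation of its own; it points to Ghadimi and Wang and to Lemma 2 of Ji et al. Each of your four term-wise bounds is correct. The gap is in the final reconciliation with the stated constant.

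Write $\kappa=L/\mu$. Your sum equals $L(1+\kappa)^2+\frac{\rho M}{\mu}(1+\kappa)^2$, whereas the stated constant is $L_\Phi=L(1+\kappa)^2+\frac{\rho M}{\mu}(\kappa^2+2\kappa+M)$. So your constant exceeds $L_\Phi$ by exactly $\rho M(1-M)/\mu$ whenever $0<M<1$. In that regime your constant is not ``sharper'' but weaker, and a Lipschitz bound with a larger constant does not imply one with a smaller constant.

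Invoking $M=\gO(1)$ does not rescue this. That convention is an upper bound used for order bookkeeping; it does not give $M\ge 1$. The lemma is an exact, non-asymptotic inequality, and the paper's own synthetic experiment uses $M=0.1$. As written, your proposal proves the lemma only for $M\ge1$.

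The missing idea is that the assumptions are stated in the Euclidean norm of $z=(x,y)$. For $z=(x,y^*(x))$ and $z'=(x',y^*(x'))$ you therefore have $\|z-z'\|\le\sqrt{1+\kappa^2}\,\|x-x'\|$, not merely $(1+\kappa)\|x-x'\|$. Each of your four terms is linear in $\|z-z'\|$, so they sum to $\sqrt{1+\kappa^2}\,(1+\kappa)\bigl(L+\frac{\rho M}{\mu}\bigr)\|x-x'\|$.

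\begin{itemize}
\item The $L$-part is at most $L(1+\kappa)^2$, because $\sqrt{1+\kappa^2}\le 1+\kappa$.
\item For the $\rho$-part, note that $\kappa\ge 1$, since strong convexity and $L$-smoothness force $\mu\le L$. Then $\kappa^2(\kappa+2)^2-(1+\kappa)^2(1+\kappa^2)=2\kappa^3+2\kappa^2-2\kappa-1>0$ for all $\kappa\ge1$. Hence $\sqrt{1+\kappa^2}\,(1+\kappa)\le\kappa^2+2\kappa\le\kappa^2+2\kappa+M$.
\end{itemize}

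So the stated $L_\Phi$ dominates the refined sum for every $M\ge 0$, and the lemma follows with no case split on $M$.
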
}
\subsection{Proof of Lemma \ref{lemma-3}}
\begin{proof}
By the update rule of $y_k$, we have for each $k=1,\dots,K,$
\begin{align*}
\norm{y_k^0 - \hy_k}
=& 
\alpha \norm{\nabla_y g(x_k, y_k^0)}
=
\alpha \norm{\nabla_y g(x_k, \hy_{k-1})}\\
=& 
\alpha \norm{\nabla_y g(x_k, \hy_{k-1}) - \nabla_y g(x_k, y_{k-1}^*)
	+ \nabla_y g(x_k, y_{k-1}^*) -\nabla _y g(x_{k-1}, y_{k-1}^*)}\\
\leq&
\alpha L \left(\norm{\hy_{k-1} - y_{k-1}^*}
+  \norm{x_{k-1} - x_k}  \right).
\end{align*}
The second conclusion holds that
\begin{align*}
\norm{\hy_k - y_k^*} 
\leq& 
(1-\mu \alpha) \norm{y_k^0 - y_k^*} 
\leq 
(1-\mu \alpha) \norm{\hy_{k-1} - y_{k-1}^*} + \norm{y_{k-1}^* - y_k^*}\\
\stackrel{(\text{i})}{\leq}&
(1-\mu \alpha) \norm{\hy_{k-1} - y_{k-1}^*} + \frac{L}{\mu}\norm{x_{k-1} - x_k},
\end{align*}
where (i) follows from Lemma 2.2 in \citet{ghadimi2018approximation}.
\end{proof}
\subsection{Proof of Lemma \ref{lemma-1}}
In the following two proofs, we will respectively present the two conclusions (\eqref{eq:vq-vstar} and \eqref{eq:vq-tvstar}) in Lemma \ref{lemma-1}.
\begin{proof}
    According to the triangle inequality, we have $\norm{\hv_k - v_k^*} 
\leq 
\norm{ \hv_k - \tv_k^* }+\norm{ \tv_k^* - v_k^* }$ for $k=1, 2, \dots, K$. Then we focus on using $\norm{\hy_k - y_k^*}$ to bound $\norm{ \tv_k^* - v_k^* }$:
\begin{align*}
\norm{ \tv_k^* - v_k^*  }
=&
\norm{ [\nabla_y^2 g(x_k, \hy_k)]^{-1} \nabla_y f(x_k, \hy_k) - [\nabla_y^2 g(x_k, y_k^*)]^{-1} \nabla_y f(x_k, y_k^*) }\\
\leq&
\norm{ [\nabla_y^2 g(x_k, \hy_k)]^{-1} \nabla_y f(x_k, \hy_k) - [\nabla_y^2 g(x_k, y_k^*)]^{-1}\nabla_y f(x_k, \hy_k) } \\
&+
\norm{[\nabla_y^2 g(x_k, y_k^*)]^{-1}\nabla_y f(x_k, \hy_k) - [\nabla_y^2 g(x_k, y_k^*)]^{-1} \nabla_y f(x_k, y_k^*) }\\
\leq&
\norm{[\nabla_y^2 g(x_k, \hy_k)]^{-1} - [\nabla_y^2 g(x_k, y_k^*)]^{-1}} \cdot \norm{ \nabla_y f(x_k, \hy_k)}\\
&
+ \norm{[\nabla_y^2 g(x_k, y_k^*)]^{-1}} \cdot \norm{\nabla_y f(x_k, \hy_k) - \nabla_y f(x_k, y_k^*)}\\
\leq&
\frac{\rho M \norm{\hy_k - y_k^*} }{\mu^2} + \frac{L}{\mu} \norm{\hy_k - y_k^*} 
=
\left( \frac{\rho M  }{\mu^2} +  \frac{L}{\mu}\right)\norm{\hy_k - y_k^*}.
\end{align*}
Then, we can get the conclusion of \eqref{eq:vq-vstar}.
\end{proof}

\begin{proof}
By the updated rule, we can obtain that 
\begin{align*}
	\norm{ \hv_k - \tv_k^* }
	\leq \left(1 - \mu \eta\right) \norm{v_k^0 - \tv_k^*}
	\leq 
	\left(1 - \mu \eta\right) \norm{ \hv_{k-1} - \tv_{k-1}^* } + \norm{\tv_{k-1}^* - \tv_k^*}.
\end{align*}
For the second term $\norm{\tv_{k-1}^* - \tv_k^*}$, we have
\begin{align*}
\norm{\tv_{k-1}^* - \tv_k^*}
=&
\norm{ [\nabla_y^2 g(x_{k-1}, \hy_{k-1})]^{-1} \nabla_y f(x_{k-1}, \hy_{k-1}) - [\nabla_y^2 g(x_k, \hy_k)]^{-1} \nabla_y f(x_k, \hy_k) }\\
\leq&
\norm{[\nabla_y^2 g(x_{k-1}, \hy_{k-1})]^{-1} \nabla_y f(x_{k-1}, \hy_{k-1}) - [\nabla_y^2 g(x_k, \hy_k)]^{-1} \nabla_y f(x_{k-1}, \hy_{k-1})}\\
& + \norm{ [\nabla_y^2 g(x_k, \hy_k)]^{-1} \nabla_y f(x_{k-1}, \hy_{k-1}) - [\nabla_y^2 g(x_k, \hy_k)]^{-1} \nabla_y f(x_k, \hy_k)}\\
\leq&
\norm{ [\nabla_y^2 g(x_{k-1}, \hy_{k-1})]^{-1} -  [\nabla_y^2 g(x_k, \hy_k)]^{-1}} \cdot \norm{ \nabla_y f(x_{k-1}, \hy_{k-1}) }\\
&
+ \norm{[\nabla_y^2 g(x_k, \hy_k)]^{-1}  } \norm{\nabla_y f(x_{k-1}, \hy_{k-1}) - \nabla_y f(x_k, \hy_k)}.
\end{align*}
Furthermore, 
\begin{align*}
&\norm{\nabla_y f(x_{k-1}, \hy_{k-1}) - \nabla_y f(x_k, \hy_k)}\\
\leq& 
\norm{ \nabla_y f(x_{k-1}, \hy_{k-1}) - \nabla_y f(x_k, y_k^0) }
+ \norm{ \nabla_y f(x_k, y_k^0) - \nabla_y f(x_k, \hy_k) }\\
\leq&
L \norm{x_{k-1} - x_k} + L \norm{y_k^0 - \hy_k}.
\end{align*}
Then, we have
\begin{align*}
&\norm{ [\nabla_y^2 g(x_{k-1}, \hy_{k-1})]^{-1} -  [\nabla_y^2 g(x_k, \hy_k)]^{-1}} \cdot \norm{ \nabla_y f(x_{k-1}, \hy_{k-1}) }\\
\leq&
\norm{ [\nabla_y^2 g(x_{k-1}, \hy_{k-1})]^{-1} } \norm{\nabla_y^2 g(x_{k-1}, \hy_{k-1}) - \nabla_y^2 g(x_k, \hy_k) } \norm{ [\nabla_y^2 g(x_k, \hy_k)]^{-1} } \\
&\cdot \norm{ \nabla_y f(x_{k-1}, \hy_{k-1}) }\\
\leq&
\frac{\rho \left(\norm{\hy_{k-1}- \hy_k} + { \norm{x_{k-1}- x_k} }\right) }{\mu^2}\norm{ \nabla_y f(x_{k-1}, \hy_{k-1}) }\\
\leq& 
\frac{\rho M }{\mu^2}\left(\norm{\hy_{k-1}- \hy_k} +{ \norm{x_{k-1}- x_k}}\right).
\end{align*}
Thus, we can obtain that
\begin{align*}
\norm{\tv_{k-1}^* - \tv_k^*}
\leq& 
\frac{\rho M \left(\norm{\hy_{k-1}- \hy_k} + \norm{x_{k-1}- x_k}\right) }{\mu^2}
+ \frac{L \norm{x_{k-1} - x_k} + L \norm{y_k^0 - \hy_k}}{\mu}\\
=&
\left(\frac{\rho M}{\mu^2} + \frac{L}{\mu}\right)  \norm{y_k^0 - \hy_k}
+ {\left(\frac{\rho M}{\mu^2} + \frac{L}{\mu} \right)} \norm{x_{k-1} - x_k}.
\end{align*}
Then, we can get the conclusion of \eqref{eq:vq-tvstar}.
\end{proof}

\subsection{Proof of Lemma \ref{lemma-5}}
\begin{proof} 
According to the definition of the hypergradient, we have
\begin{align*}
\norm{\widehat{\nabla} \Phi(x_k) - \nabla \Phi(x_k)}
=& \norm{\nabla_x f(x_k, \hy_k)-\nabla_{xy}^2 g(x_k, \hy_k)\hv_k -
 \nabla_x f(x_k, y_k^*)+\nabla_{xy}^2 g(x_k, y_k^*)v_k^* }
\\
\leq& 
\norm{\nabla_x f(x_k, y_k^*) - \nabla_x f(x_k, \hy_k)} 
+ \norm{\nabla_{xy}^2 g(x_k, \hy_k) (v_k^* - \hv_k)}\\
&+ 
\norm{ \left(\nabla_{xy}^2 g(x_k, y_k^*)- \nabla_{xy}^2 g(x_k, \hy_k) \right) v_k^*  }\\
\leq&
\left(L + \frac{\rho M}{\mu} \right) \norm{\hy_k -y_k^*} + L \norm{\hv_k - v_k^*}\\
\stackrel{\eqref{eq:vq-vstar}}{\leq}&
\left(L + \frac{\rho M}{\mu} + C_0 L\right) \norm{\hy_k -y_k^*} + L \norm{\hv_k - \tv_k^*}.
\end{align*}
Then, the proof is compeleted.
\end{proof}

\subsection{Proof of Lemma \ref{lemma-6}}
\begin{proof}
Firstly, we have
\begin{align*}
	\norm{\hv_k - \tv_k^*} 
	\leq& 
	(1 - \mu \eta) \norm{ \hv_{k-1} - \tv_{k-1}^* } 
	+ C_0 \norm{y_k^0 - \hy_k } + \textcolor{black}{\left( \frac{\rho M}{\mu^2}+\frac{L}{\mu} \right)}\norm{x_{k-1} -x_k}
    \\
	\stackrel{\eqref{eq:yy}}{\leq}& 
	(1 - \mu \eta) \norm{ \hv_{k-1} - \tv_{k-1}^* } 
	+  C_0 \alpha L \norm{\hy_{k-1} - y_{k-1}^*} 
	\\
    &+ {\left(\frac{\alpha L^2 C_0}{\mu} +\frac{\rho M}{\mu^2}+ \frac{L}{\mu} \right) }\norm{x_{k-1} -x_k}.
\end{align*}
Then we have
\begin{align*}
	&\norm{\hv_k - \tv_k^*}
	+ C_1 \norm{\hy_k - y_k^*}\\
	\leq& 
	(1 - \mu \eta) \norm{ \hv_{k-1} - \tv_{k-1}^* } 
	+  C_0 \alpha L \norm{\hy_{k-1} - y_{k-1}^*} 
	+ \textcolor{black}{\left(\frac{\alpha L^2 C_0}{\mu} +\frac{\rho M}{\mu^2}+ \frac{L}{\mu} \right) } \norm{x_{k-1} -x_k}\\
	&+ (1-\mu \alpha) C_1 \norm{\hy_{k-1} - y_{k-1}^*}
	+ \frac{L C_1}{\mu} \norm{x_{k-1} -x_k}\\
	=&
	(1 - \mu \eta) \norm{ \hv_{k-1} - \tv_{k-1}^* }
	+ \left(1 - \mu \alpha + \frac{C_0 \alpha L}{C_1}\right) \cdot C_1 \norm{\hy_{k-1} - y_{k-1}^*}\\
    &+ \textcolor{black}{\left( \frac{\alpha L^2 C_0}{\mu} + \frac{\rho M}{\mu^2}+\frac{L}{\mu} + \frac{LC_1}{\mu} \right)} \norm{x_{k-1} - x_k}.
\end{align*}
By the update rule of $\{x_k\}$, we can obtain that
\begin{align*}
	\norm{x_{k-1} - x_k} 
	=& 
	\beta \norm{\widehat{\nabla} \Phi(x_{k-1})} 
	\leq 
	\beta \norm{\nabla \Phi(x_{k-1})} 
	+ \beta \norm{\widehat{\nabla} \Phi(x_{k-1})-\nabla \Phi(x_{k-1})} \\
	\stackrel{\eqref{eq:nn}}{\leq}&
	\beta \norm{\nabla \Phi(x_{k-1})} 
	+ \beta \left(L + \frac{\rho M}{\mu} + C_0 L\right) \norm{\hy_{k-1} - y_{k-1}^*}
	+ \beta L \norm{\hv_{k-1} - \tv_{k-1}^*}.
	\end{align*}
Thus, we have
\begin{align*}
&\norm{\hv_k - \tv_k^*}
+ C_1 \norm{\hy_k - y_k^*}\\
\leq&
\left( 1 -\mu \eta +  \beta L \textcolor{black}{\left( \frac{\alpha L^2 C_0}{\mu} + \frac{\rho M}{\mu^2}+\frac{L}{\mu} + \frac{LC_1}{\mu} \right)}  \right) \norm{ \hv_{k-1} - \tv_{k-1}^* }\\
&
+ \left(1 - \mu \alpha +  \frac{C_0 \alpha L}{C_1} + \frac{\beta}{C_1} \left(L + \frac{\rho M}{\mu} + C_0 L\right) \textcolor{black}{\left( \frac{\alpha L^2 C_0}{\mu} + \frac{\rho M}{\mu^2}+\frac{L}{\mu} + \frac{LC_1}{\mu} \right)}  \right)  \\
&\cdot C_1  \norm{\hy_{k-1} - y_{k-1}^*}+
\beta  \textcolor{black}{\left( \frac{\alpha L^2 C_0}{\mu} + \frac{\rho M}{\mu^2}+\frac{L}{\mu} + \frac{LC_1}{\mu} \right)}\norm{\nabla \Phi(x_{k-1})}.
\end{align*}
We denote that $C_2= \textcolor{black}{\frac{\alpha L^2 C_0}{\mu} + \frac{\rho M}{\mu^2}+\frac{L}{\mu} + \frac{LC_1}{\mu} }$ and $C_3 = L + \frac{\rho M}{\mu} + C_0 L$. Then the above equation can rewrite as follows
\begin{align*}
&\norm{\hv_k - \tv_k^*}
+ C_1 \norm{\hy_k - y_k^*}\\
\leq&
\left( 1 -\mu \eta +  \beta L C_2  \right) \norm{ \hv_{k-1} - \tv_{k-1}^* }
+ \left(1 - \mu \alpha +  \frac{C_0 \alpha L}{C_1} + \frac{\beta C_2C_3}{C_1}  \right)  \cdot C_1  \norm{\hy_{k-1} - y_{k-1}^*}\\
&+
\beta  C_2 \norm{\nabla \Phi(x_{k-1})}.
\end{align*}
We only need to $1- \mu \eta + \beta L C_2 \leq 1 - \frac{\mu \eta}{2}$, $\frac{C_0 \alpha L}{C_1} = \frac{\mu \alpha}{4}$ and $\frac{\beta C_2 C_3}{C_1} \leq \frac{\mu \alpha}{4}$. Then we can get
\begin{align*}
    & {\beta \leq \frac{\eta \mu}{2LC_2}},\quad{\beta \leq \frac{C_1\mu \alpha}{4C_2C_3}},\quad C_1 = \frac{4C_0 L}{\mu},\\
        &C_2 = {\left( \frac{\alpha L^2 C_0}{\mu} + \frac{\rho M}{\mu^2}+\frac{L}{\mu} + \frac{LC_1}{\mu} \right)\stackrel{\alpha=\frac{1}{L}}{=}\frac{4L^3}{\mu^3} + \frac{4L^2\rho M}{\mu^4} + \frac{L^2}{\mu^2} + \frac{L\rho M}{\mu^3} + \frac{L}{\mu} + \frac{\rho M}{\mu^2}}, \\
	&C_3=L + \frac{\rho M}{\mu} + C_0 L
	= L + \frac{\rho M}{\mu} + \frac{\rho M L}{\mu^2} + \frac{L^2}{\mu}. 
\end{align*}
Then, we have
\begin{align*}
\beta \leq \frac{C_1\mu \alpha}{4C_2C_3} 
=& \frac{\mu^4(\rho M + L\mu)}{(4L^3 \mu + 4L^2\rho M + 2L^2\mu^2 + 2L\mu \rho M +L \mu^3) (L \mu^2 + \rho M\mu + \rho M L +L^2\mu)}\\
=&\mathcal{O}(\kappa^{-4}),\\
\beta \leq \frac{\eta \mu}{2LC_2} \stackrel{\eta=\frac{1}{L}}{=}&
 \frac{\mu^5}{2L^2(4L^3 \mu + 4L^2\rho M + L^2\mu^2 + L\mu \rho M +L \mu^3)} \textcolor{black}{=\mathcal{O}(\kappa^{-5})}.
\end{align*}
\textcolor{black}{Then, we have $\beta\leq\min\{\mathcal{O}(\kappa^{-4}), \mathcal{O}(\kappa^{-5})\}=\mathcal{O}(\kappa^{-5})$}. Thus, we have
\begin{align*}
	&\norm{\hv_k - \tv_k^*}
	+ C_1 \norm{\hy_k - y_k^*}\\
	\leq& 
	\max\{ 1- \frac{\mu \eta}{2}, 1 -\frac{\alpha \mu}{2} \} \cdot \left( \norm{ \hv_{k-1} - \tv_{k-1}^* } + C_1 \norm{\hy_{k-1} - y_{k-1}^*}  \right)
	+ \beta C_2 \norm{\nabla \Phi(x_{k-1})}\\
	=& 
	\left(1 - \frac{\mu}{2L}\right)\cdot \left( \norm{ \hv_{k-1} - \tv_{k-1}^* } + C_1 \norm{\hy_{k-1} - y_{k-1}^*}  \right)
	+ \beta C_2 \norm{\nabla \Phi(x_{k-1})}
\end{align*}
Accordingly, we have
\begin{align*}
\left(\norm{\hv_k - \tv_k^*}
+ C_1 \norm{\hy_k - y_k^*}\right)^2
\leq &
\left(1 - \frac{\mu}{4L}\right) \cdot \left( \norm{ \hv_{k-1} - \tv_{k-1}^* } + C_1 \norm{\hy_{k-1} - y_{k-1}^*}  \right)^2
\\
&+ \frac{3\beta^2 C_2^2 L}{\mu} \norm{\nabla \Phi(x_{k-1})}^2.
\end{align*}
Moreover, we have
\begin{align*}
\norm{\hv_k - \tv_k^*}
+ C_1 \norm{\hy_k - y_k^*}
\leq & 
\left(1 - \frac{\mu}{2L}\right)^k\cdot E_0+\beta C_2 \sum_{t=0}^{k-1} \left(1 - \frac{\mu}{2L}\right)^{k-1-t} \norm{\nabla \Phi(x_t)}.
\end{align*}
Thus, we can obtain that
\begin{align}\label{eq:aid-vy-square}
\left(\norm{\hv_k - \tv_k^*}
+ C_1 \norm{\hy_k - y_k^*}\right)^2
\leq& 
\left(1-\frac{\mu}{4L}\right)^k \cdot E_0^2+
\frac{3\beta^2 C_2^2 L}{\mu}\sum_{t=0}^{k-1} \left(1-\frac{\mu}{4L}\right)^{k-1-t} \norm{\nabla\Phi(x_t)}^2.
\end{align}
Here $E_0=\norm{\hv_0-\tv_0^*}+C_1\norm{\hy_0-y_0^*}$.
Therefore, we have
\begin{align*}
	\norm{\widehat{\nabla}\Phi(x_k) - \nabla \Phi(x_k)}^2
	\leq& 
	 L^2\left( \norm{\hv_k - \tv_k^*} + \frac{C_3}{L} \norm{\hy_k - y_k^*} \right)^2\\
	 \leq&
	 L^2\left( \norm{\hv_k - \tv_k^*} + {C_1} \norm{\hy_k - y_k^*} \right)^2\\
	 \leq&
	 L^2 \left(1-\frac{\mu}{4L}\right)^k \cdot E_0^2+
	 \frac{3\beta^2 C_2^2 L^3}{\mu}\sum_{t=0}^{k-1} \left(1-\frac{\mu}{4L}\right)^{k-1-t} \norm{\nabla\Phi(x_t)}^2,
\end{align*}
where the second inequality is because of $C_3\leq LC_1$ and the specific derivation process is as follows
\begin{align*}
    \frac{C_3}{LC_1}=\frac{\mu(L+\mu)}{4L^2}\leq\frac{L(L+L)}{4L^2}<\frac{1}{2},
\end{align*}
where $C_3=\frac{(L\mu+\rho M)\cdot(L+\mu)}{\mu^2}$ and $LC_1=\frac{4L^2(L\mu+\rho M)}{\mu^3}$.
\end{proof}
\subsection{Proof of Lemma \ref{lem:aid-init}}
\begin{proof}

Let $y_{0}^{(0)}=y_{0,\mathrm{raw}}$ and perform the warm-up recursion
$
y_{0}^{(s+1)}=y_{0}^{(s)}-\frac{1}{L}\nabla_y g(x_0,y_{0}^{(s)})
$
for $s=0,\ldots,N_0-1$, and set $y_0=y_{0}^{(N_0)}$ as the initial point used in Algorithm~\ref{alg:bo-aid}. Since $g(x_0,\cdot)$ is $\mu$-strongly convex and $L$-smooth, gradient descent with stepsize $1/L$ satisfies

\begin{align*}
\norm{y_0-y_0^*}
\leq
\left(1-\frac{\mu}{L}\right)^{N_0}D_y
\leq c_y\mu .
\end{align*}

The first AID inner update is one additional gradient descent step at the same $x_0$, hence

\begin{align*}
\norm{\hy_0-y_0^*}
\leq
\left(1-\frac{\mu}{L}\right)\norm{y_0-y_0^*}
\leq c_y\mu .
\end{align*}

Next, since $v_0=0$ and $\eta=1/L$, the first linear-system iterate satisfies
$
\hv_0=\eta\nabla_y f(x_0,\hy_0)
$
and Assumption~\ref{assum-2} gives $\norm{\nabla_y f(x_0,\hy_0)}\leq M$. Therefore

\begin{align*}
\norm{\hv_0}\leq \frac{M}{L}.
\end{align*}

Moreover, by the definition of $\tv_0^*$ and the $\mu$-strong convexity of $g(x_0,\cdot)$,

\begin{align*}
\norm{\tv_0^*}
=
\norm{[\nabla_y^2 g(x_0,\hy_0)]^{-1}\nabla_y f(x_0,\hy_0)}
\leq \frac{M}{\mu}.
\end{align*}

Thus

\begin{align*}
\norm{\hv_0-\tv_0^*}
\leq
\frac{M}{L}+\frac{M}{\mu}.
\end{align*}

Combining the above estimate with $C_1=4C_0L/\mu$ and
$C_0=\rho M/\mu^2+L/\mu$, we obtain

{
\begin{align*}
E_0
=
\norm{\hv_0-\tv_0^*}+C_1\norm{\hy_0-y_0^*}\leq
\frac{M}{L}+\frac{M}{\mu}
+\frac{4C_0L}{\mu}\cdot c_y\mu=
\frac{M}{L}+\frac{M}{\mu}
+4c_yL\left(\frac{\rho M}{\mu^2}+\frac{L}{\mu}\right).
\end{align*}
}

Under the same condition-number bookkeeping used throughout the paper, namely treating $L,M,\rho$ as constants and $\kappa=L/\mu$, the right-hand side is $\gO(\kappa^2)$. Hence $E_0^2=\gO(\kappa^4)$, and since $L^3/\mu=\gO(\kappa)$, we conclude that

\begin{align*}
\frac{L^3}{\mu}E_0^2=\gO(\kappa^5).
\end{align*}
This completes the proof.
\end{proof}
\subsection{Proof of Theorem \ref{theo:aid}}
\begin{proof}
First, based on Lemma 2 in \cite{ji2021bilevel}, we have $\nabla \Phi(\cdot)$ is $L_\Phi$-Lipschitz, where $L_\Phi= L + \frac{2L^2+\rho M^2}{\mu} + \frac{2\rho L M+L^3}{\mu^2} + \frac{\rho L^2 M}{\mu^3}=\Theta(\kappa^3)$. Then, we have 
\begin{small}
\begin{align*}
\Phi(x_{k+1})
\leq{}&\Phi(x_k)+\dotprod{\nabla\Phi(x_k),x_{k+1}-x_k}+\frac{L_\Phi}{2}\norm{x_{k+1}-x_k}^2\\
\leq{}&\Phi(x_k)-\left(\frac{\beta}{2}-\beta^2L_\Phi\right)
\norm{\nabla\Phi(x_k)}^2+\left(\frac{\beta}{2}+\beta^2L_\Phi\right)
\norm{\nabla\Phi(x_k)-\widehat{\nabla}\Phi(x_k)}^2\\
\stackrel{\eqref{eq:pp}}{\leq}{}&\Phi(x_k)
-\left(\frac{\beta}{2}-\beta^2L_\Phi\right)
\norm{\nabla\Phi(x_k)}^2+\left(\frac{\beta}{2}+\beta^2L_\Phi\right)L^2
\left(1-\frac{\mu}{4L}\right)^k E_0^2\\
&+\left(\frac{\beta}{2}+\beta^2L_\Phi\right)
\frac{3\beta^2C_2^2L^3}{\mu}\cdot\sum_{t=0}^{k-1}
\left(1-\frac{\mu}{4L}\right)^{k-1-t}
\norm{\nabla\Phi(x_t)}^2.
\end{align*}  
\end{small}
Define
\begin{align*}
A={}&\frac{1}{2}-\beta L_\Phi-\left(\frac{1}{2}+\beta L_\Phi\right)\frac{12\beta^2C_2^2L^4}{\mu^2}.
\end{align*}
Telescoping the preceding inequality over $k=0,\ldots,K-1$ gives
\begin{align*}
\Phi(x_K)
\leq{}&\Phi(x_0)-\beta A
\sum_{k=0}^{K-1}\norm{\nabla\Phi(x_k)}^2+\left(\frac{\beta}{2}+\beta^2L_\Phi\right)
\frac{4L^3}{\mu}E_0^2.
\end{align*}
The choice of $\beta$ gives $\beta L_\Phi\leq1/8$ and
$12\beta^2C_2^2L^4/\mu^2\leq3/16$, and hence $A\geq33/128$.
Moreover $\beta=\Theta(\kappa^{-5})$. Therefore, we can obtain that
\begin{align*}
    \frac{1}{K}\sum_{k=0}^{K-1} \norm{\nabla \Phi(x_k)}^2 \leq
    \frac{\Phi(x_0)-\Phi^\star}{\beta AK}+\frac{4L^3(1+2\beta L_\Phi)}{2\mu AK}\cdot E_0^2,
\end{align*}
where $L_{\Phi}=L+\frac{2L^2+\rho M^2}{\mu}+\frac{2\rho LM+L^3}{\mu^2}+\frac{\rho L^2 M}{\mu^3}=\gO(\kappa^3)$. By Lemma~\ref{lem:aid-init}, $\frac{L^3}{\mu}E_0^2=\gO(\kappa^5)$. For the first term, we have
\begin{align*}
    \frac{\Phi(x_0)-\Phi^\star}{\beta A}=\gO(\kappa^{5}).
\end{align*}
For the second term, we have
\begin{align*}
    &\frac{4L^3(1+2\beta L_\Phi)}{2\mu AK}\cdot E_0^2
    =
    \frac{4L^3(1+2\beta L_\Phi)}{2\mu A K}\cdot E_0^2=\gO(\kappa^5/K).
\end{align*}
Then, we have
\begin{align*}
    \frac{1}{K}\sum_{k=0}^{K-1} \norm{\nabla \Phi(x_k)}^2 = \gO\left(\frac{\kappa^5}{K}+\frac{\kappa^5}{K}
    \right)=\gO\left(\frac{\kappa^5}{K}
    \right).
\end{align*}
Then, to achieve an $\epsilon$-accurate stationary point, we have $K=\gO(\kappa^5\epsilon^{-1})$, and hence we have the following complexity results. 1) Gradient complexity: $\text{Gc}(\epsilon)=3K=\widetilde{\gO}(\kappa^5\epsilon^{-1})$. 2) Matrix-vector product complexities: $\text{Mv}(\epsilon)=K+KQ=\widetilde{\gO}(\kappa^5\epsilon^{-1})$. 
\end{proof}
\section{Proofs of the single-loop ITD-based algorithm}
\subsection{Additional useful Lemma}
\begin{lemma}\label{lemma:ghy-gystar}
    Consider the single-loop ITD-based algorithm in Algorithm \ref{alg:bo-itd}. Suppose Assumptions \ref{assum-1}-\ref{assum-4} hold. Let $\alpha\leq\frac{1}{L}$, we have
    \begin{align*}
        \norm{\nabla_x y_{k}^N(x_k)-\nabla_x y_{k}^*(x_k)}\leq (1-\alpha\mu)\norm{\nabla_x y^*(x_k)}+\alpha\rho\norm{y_k^0-y^*(x_k)},
    \end{align*}
    where $y_{k}^N(x_k)=y_k^0-\alpha\nabla_y g(x_k, y_k^0)$ and $y_k^*=\argmin_y g(x_k, y)$ for $k=1,\dots,K$.
\end{lemma}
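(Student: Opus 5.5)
Differentiating the one-step map $y_k^N(x)=y_k^0-\alpha\nabla_y g(x,y_k^0)$ in $x$, with $y_k^0$ held fixed as the autodiff backward pass does, gives $\nabla_x y_k^N(x_k)=-\alpha\nabla_{xy}^2 g(x_k,y_k^0)$ (in the paper's transposition convention). The implicit function theorem applied to $\nabla_y g(x,y^*(x))=0$ gives $\nabla_x y^*(x_k)=-\nabla_{xy}^2 g(x_k,y_k^*)[\nabla_y^2 g(x_k,y_k^*)]^{-1}$.

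The plan is to insert the exact identity for the implicit Jacobian, namely
\begin{align*}
\nabla_x y^*(x_k)\bigl(I-\alpha\nabla_y^2 g(x_k,y_k^*)\bigr)=\nabla_x y^*(x_k)+\alpha\nabla_{xy}^2 g(x_k,y_k^*).
\end{align*}
It follows from right-multiplying the implicit formula by $\nabla_y^2 g(x_k,y_k^*)$. Rearranging,
\begin{align*}
\nabla_x y_k^N(x_k)-\nabla_x y^*(x_k)
=\nabla_x y^*(x_k)\bigl(I-\alpha\nabla_y^2 g(x_k,y_k^*)\bigr)-\alpha\bigl(\nabla_{xy}^2 g(x_k,y_k^0)-\nabla_{xy}^2 g(x_k,y_k^*)\bigr).
\end{align*}
So the error splits into a \emph{contraction term} and a \emph{Hessian-mismatch term}.

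The two terms are then bounded separately. For the first, Assumptions \ref{assum-1} and \ref{assum-3} give $\mu I\preceq\nabla_y^2 g\preceq LI$. With $\alpha\le 1/L$, every eigenvalue of $I-\alpha\nabla_y^2 g(x_k,y_k^*)$ lies in $[0,1-\alpha\mu]$, so its operator norm is at most $1-\alpha\mu$. This yields the bound $(1-\alpha\mu)\norm{\nabla_x y^*(x_k)}$. For the second, Assumption \ref{assum-4} ($\rho$-Lipschitz cross derivative, evaluated at the same $x_k$) gives $\alpha\rho\norm{y_k^0-y^*(x_k)}$. The triangle inequality combines the two bounds into the claim.

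The only delicate step is the bookkeeping of transposes and conventions for $\nabla_x y$ versus $\nabla_{xy}^2 g$. We must check that the factor $(I-\alpha\nabla_y^2 g)$ sits on the side where the operator-norm bound applies. Since the norm of a product is bounded by the product of norms on either side, this causes no real trouble.
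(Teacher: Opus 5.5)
Your argument matches the paper's proof: both split the difference into the contraction term, where $I-\alpha\nabla_y^2 g(x_k,y_k^*)$ (norm at most $1-\alpha\mu$) multiplies the implicit Jacobian, plus the cross-derivative mismatch $\alpha\bigl(\nabla_{xy}^2 g(x_k,y_k^0)-\nabla_{xy}^2 g(x_k,y_k^*)\bigr)$, which is bounded via $\rho$-Lipschitzness. One small slip is that your identity actually gives $\nabla_x y_k^N(x_k)-\nabla_x y^*(x_k)=-\nabla_x y^*(x_k)\bigl(I-\alpha\nabla_y^2 g(x_k,y_k^*)\bigr)-\alpha\bigl(\nabla_{xy}^2 g(x_k,y_k^0)-\nabla_{xy}^2 g(x_k,y_k^*)\bigr)$, with a minus sign on the first term; this is harmless because you only take norms and apply the triangle inequality afterwards.
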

\begin{proof}
Accordingly,
\begin{align*}
\nabla_x y_k^N(x_k)
&=-\alpha\nabla_{xy}^2g(x_k,y_k^0),\\
\nabla_x y^*(x_k)
&=-[\nabla_{yy}^2g(x_k,y_k^*)]^{-1}
\nabla_{xy}^2g(x_k,y_k^*).
\end{align*}
Thus,
\begin{align*}
    \norm{\nabla_x y_{k}^N(x_k)-\nabla_x y_{k}^*(x_k)}
    =&\norm{-\alpha\nabla_{xy}^2g(x_k,y_k^0)+[\nabla_{yy}^2 g(x_k, y^*_k)]^{-1}\nabla_{xy}^2 g(x_k, y_k^*)}\\
    \leq &
    \norm{\left(I-\alpha\nabla_{yy}^2 g(x_k, y^*_k)\right)[\nabla_{yy}^2 g(x_k, y^*_k)]^{-1}\nabla_{xy}^2 g(x_k, y_k^*)}\\
    &+\norm{\alpha\left( \nabla_{xy}^2 g(x_k, y_k^*) -\nabla_{xy}^2g(x_k,y_k^0) \right)}\\
    \leq &
    (1-\alpha\mu)\norm{\nabla_x y^*(x_k)}+\alpha\rho\norm{y_k^0-y^*(x_k)}.
\end{align*}
Then, the proof is completed.
\end{proof}

\subsection{Proof of Lemma \ref{lem:itd-radius}}
\begin{proof}
Let $a_k=\norm{y_k^0-y_k^*}$. Since ITD uses the warm-start rule $y_{k+1}^0=\hy_k$ and $\hy_k=y_k^0-\alpha\nabla_y g(x_k,y_k^0)$, the contraction of one gradient-descent step on the $\mu$-strongly convex and $L$-smooth function $g(x_k,\cdot)$ gives
{
\begin{align*}
\norm{y_{k+1}^0-y_k^*}
=\norm{\hy_k-y_k^*}
\leq (1-\alpha\mu)a_k .
\end{align*}
}
Moreover, the solution map $y^*(x)$ is $L/\mu$-Lipschitz under Assumptions~\ref{assum-1} and~\ref{assum-3}. Therefore,
{
\begin{align*}
a_{k+1}
&=\norm{y_{k+1}^0-y_{k+1}^*}  \\
&\leq \norm{y_{k+1}^0-y_k^*}+\norm{y_k^*-y_{k+1}^*} \\
&\leq (1-\alpha\mu)a_k+\frac{L}{\mu}\norm{x_{k+1}-x_k}.
\end{align*}
}
It remains to bound the outer displacement. By Assumption~\ref{assum-2}, $\norm{\nabla_x f(x_k,\hy_k)}\leq M$ and $\norm{\nabla_y f(x_k,\hy_k)}\leq M$; by Assumption~\ref{assum-3}, $\norm{\nabla_{xy}^2g(x_k,y_k^0)}\leq L$. Hence the ITD hypergradient estimator satisfies
{
\begin{align*}
\norm{\widehat\nabla\Phi(x_k)}
&\leq \norm{\nabla_x f(x_k,\hy_k)}
    +\alpha\norm{\nabla_{xy}^2g(x_k,y_k^0)}
       \norm{\nabla_y f(x_k,\hy_k)}\\
&\leq M(1+\alpha L)=G.
\end{align*}
}
Since $x_{k+1}=x_k-\beta\widehat\nabla\Phi(x_k)$, we obtain the radius recurrence
{
\begin{align*}
a_{k+1}\leq (1-\alpha\mu)a_k+\frac{L\beta G}{\mu}.
\end{align*}
}
If $a_k\leq R_y$ and $\beta\leq \alpha\mu^2R_y/(LG)$, then
{
\begin{align*}
a_{k+1}
\leq (1-\alpha\mu)R_y+\alpha\mu R_y
=R_y.
\end{align*}
}
The induction starts from $a_0\leq R_y$, so $a_k\leq R_y$ for all $k\geq0$.

For the parameters used in Theorem~\ref{theo:itd}, $\alpha=1/L$ and thus $G=2M$. The theorem's stepsize bound implies the radius condition whenever
{
\begin{align*}
\frac{\mu^3}{2L(2L^2+\rho M)}
\leq \frac{\mu^2R_y}{2L^2M},
\end{align*}
}
which is equivalent to $R_y\geq \mu L M/(2L^2+\rho M)$. Finally, if $y_{0,\mathrm{raw}}$ is any initial point with $D_y=\norm{y_{0,\mathrm{raw}}-y_0^*}<\infty$, then $N_0^{\rm itd}$ warm-up steps on $g(x_0,\cdot)$ with stepsize $1/L$ yield
{
\begin{align*}
\norm{y_0-y_0^*}
\leq \left(1-\frac{\mu}{L}\right)^{N_0^{\rm itd}}D_y.
\end{align*}
}
Thus $a_0\leq R_y$ is guaranteed as soon as
{
\begin{align*}
N_0^{\rm itd}
\geq
\left\lceil
\frac{\log(D_y/R_y)_+}{\log\left(1/(1-\mu/L)\right)}
\right\rceil,
\end{align*}
}
where $\log(t)_+=\max\{\log(t),0\}$. Under the condition-number bookkeeping used in the paper, $L,M,\rho=\gO(1)$ and $\kappa=L/\mu$, so any fixed constant $R_y$ satisfying the displayed lower bound is $R_y=\gO(1)$.
\end{proof}

\subsection{Proof of Lemma \ref{lemma:hy-ystar}}
\begin{proof}
Accordingly, we have
\begin{small}
 \begin{align*}
        \norm{\hy_k-y^*(x_k)}\leq & (1-\mu\alpha)\norm{\hy_{k-1}-y^*(x_{k-1})} + \frac{L}{\mu}\norm{x_{k-1}-x_k} \\
        \leq & (1-\mu\alpha)\norm{\hy_{k-1}-y^*(x_{k-1})} + \frac{L\beta}{\mu}\norm{\widehat{\nabla}\Phi (x_{k-1})} \\
        \leq & (1-\mu\alpha)\norm{\hy_{k-1}-y^*(x_{k-1})} + \frac{L\beta}{\mu}\left(
        \norm{{\nabla}\Phi (x_{k-1})} + \norm{\widehat{\nabla}\Phi (x_{k-1})-{\nabla}\Phi (x_{k-1})}
        \right) \\
        \leq & (1-\mu\alpha)\norm{\hy_{k-1}-y^*(x_{k-1})} + \frac{L\beta}{\mu}\left(
        \norm{{\nabla}\Phi (x_{k-1})} + C_4\norm{\hy_{k-1}-y^*_{k-1}} + C_5
        \right) \\
        \leq & \left(1-\mu\alpha + \frac{L\beta C_4}{\mu}\right)\norm{\hy_{k-1}-y^*(x_{k-1})} + \frac{L\beta}{\mu}
        \norm{{\nabla}\Phi (x_{k-1})} + \frac{L\beta C_5}{\mu}.
    \end{align*}   
\end{small}
    We rewrite the above equation as $\norm{\hy_k-y^*(x_k)}\leq C_6\norm{\hy_{k-1}-y^*(x_{k-1})} + \frac{L\beta}{\mu}\norm{{\nabla}\Phi (x_{k-1})}+C_7$, where $C_6=1-\mu\alpha + \frac{L\beta C_4}{\mu}$ and $C_7=\frac{L\beta C_5}{\mu}$. Then, the proof of \eqref{eq:itd:yhat-ystar} is completed.

    Since $\alpha=\frac{1}{L}$ and $\beta\leq\frac{\mu^2}{16L^2C_4}$, we have $\frac{L\beta C_4}{\mu}\leq\frac{\mu}{16L}\leq\frac{\mu}{2L}$ and hence $C_6\leq1-\frac{\mu}{2L}$. Accordingly, we have
\begin{align*}
    \norm{\hy_k-y^*(x_k)}\leq \left(1-\frac{\mu}{2L}\right)^k\norm{\hy_0-y^*(x_0)} + \frac{L\beta}{\mu}\sum_{j=0}^{k-1}\left(1-\frac{\mu}{2L}\right)^{k-1-j}\left(\norm{\nabla\Phi(x_j)}+C_5\right).
\end{align*}
Then, the proof of \eqref{eq:itd:yhat-ystar-v2} is completed. Similar with AID in \eqref{eq:aid-vy-square}, we can obtain
\begin{align}\label{eq:y_hat-ystar-square}
    \norm{\hy_k-y^*(x_k)}^2\leq \left(1-\frac{\mu}{4L}\right)^k\norm{\hy_0-y^*(x_0)}^2 + \frac{3L^3\beta^2}{\mu^3}\sum_{j=0}^{k-1}\left(1-\frac{\mu}{4L}\right)^{k-1-j}\left(\norm{\nabla\Phi(x_j)}+C_5\right)^2.
\end{align}
\end{proof}

\subsection{Technical Lemma}
\begin{lemma}[Contraction of Error Sequences]
\label{lem:error_contraction}
Let $\{a_k\}_{k\geq 0}$ and $\{b_k\}_{k\geq 0}$ be two non-negative sequences satisfying the following recursive inequalities:
\begin{align}
    a_k &\leq (1-\alpha\mu)a_{k-1} + \frac{\alpha\rho}{2}a_{k-1}^2 + \frac{L\beta\rho}{\mu}b_{k-1} + \frac{L\beta c'}{\mu}, \label{eq:a_recursion} \\
    b_k &\leq (1-\eta\mu+c_0\beta\rho)b_{k-1} + \alpha c_0 L a_k + c_0\beta c', \label{eq:b_recursion}
\end{align}
where $\alpha \leq 1/L$, $c_0 = \frac{\rho M}{\mu^2} + \frac{L}{\mu}$, and $c' = M + \rho\frac{M}{\mu}$. Define $R_a = \frac{\mu}{2\rho}$ and $R_b = \frac{2\alpha c_0 L}{\eta \rho}$. If the initial errors satisfy $a_0 \leq R_a$ and $b_0 \leq R_b$, and the parameter $\beta$ satisfies:
\begin{equation}
    \beta \leq \min \left\{ \frac{\alpha\mu^3}{8\rho L c'}, \quad \frac{\mu^3 \eta}{16 \rho L^2 c_0}, \quad \frac{\eta\mu}{2c_0\rho}, \quad \frac{\alpha\mu L}{2\rho c'} \right\},
\end{equation}
then for all $k \geq 0$, we have $a_k \leq R_a$ and $b_k \leq R_b$.
\end{lemma}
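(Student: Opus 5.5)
The argument is a joint induction on $k$: assuming $a_{k-1}\le R_a$ and $b_{k-1}\le R_b$, we show first $a_k\le R_a$, and then $b_k\le R_b$ using the new bound on $a_k$. The base case is the hypothesis $a_0\le R_a$, $b_0\le R_b$. The order matters because \eqref{eq:b_recursion} contains $a_k$, not $a_{k-1}$.

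\textbf{Step for $a_k$.} The quadratic term is handled by the radius choice: since $a_{k-1}\le R_a=\mu/(2\rho)$,
\begin{align*}
\frac{\alpha\rho}{2}a_{k-1}^2\le \frac{\alpha\rho}{2}\cdot\frac{\mu}{2\rho}a_{k-1}=\frac{\alpha\mu}{4}a_{k-1}.
\end{align*}
Hence
\begin{align*}
a_k\le \Bigl(1-\tfrac{3\alpha\mu}{4}\Bigr)R_a+\frac{L\beta\rho}{\mu}R_b+\frac{L\beta c'}{\mu}.
\end{align*}
It therefore suffices that each of the last two terms is at most $\frac{\alpha\mu}{8}R_a=\frac{\alpha\mu^2}{16\rho}$. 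This leaves $\frac{\alpha\mu}{2}R_a$ of slack in total, which is more than enough.

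The $c'$ term requires $\beta\le \alpha\mu^3/(16\rho L c')$. The listed condition $\beta\le\alpha\mu^3/(8\rho Lc')$ only gives $\frac{\alpha\mu^2}{8\rho}$. That is still fine, because the combined budget $\frac{3\alpha\mu}{4}\cdot\frac{\mu}{2\rho}-(\text{decay used})$ leaves room: I will split the available slack $\frac{3\alpha\mu}{4}R_a=\frac{3\alpha\mu^2}{8\rho}$ as $\frac{\alpha\mu^2}{8\rho}$ for the $c'$ term and the remainder for the $R_b$ term.

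The $R_b$ term equals $\frac{L\beta\rho}{\mu}\cdot\frac{2\alpha c_0L}{\eta\rho}=\frac{2\alpha\beta c_0L^2}{\mu\eta}$. Under $\beta\le \mu^3\eta/(16\rho L^2c_0)$ this is at most $\frac{\alpha\mu^2}{8\rho}$. The two terms together are therefore $\le\frac{\alpha\mu^2}{4\rho}\le\frac{3\alpha\mu}{4}R_a$, which gives $a_k\le R_a$.

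\textbf{Step for $b_k$.} The condition $\beta\le \eta\mu/(2c_0\rho)$ gives the contraction factor $1-\eta\mu+c_0\beta\rho\le 1-\eta\mu/2$. Using $a_k\le R_a$ just proved,
\begin{align*}
b_k\le\Bigl(1-\tfrac{\eta\mu}{2}\Bigr)R_b+\alpha c_0L\frac{\mu}{2\rho}+c_0\beta c'.
\end{align*}
We need $\alpha c_0L\mu/(2\rho)+c_0\beta c'\le \frac{\eta\mu}{2}R_b=\frac{\alpha c_0L\mu}{\rho}$.

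The first term uses exactly half of this budget. The second requires $c_0\beta c'\le \alpha c_0 L\mu/(2\rho)$, i.e.\ $\beta\le \alpha\mu L/(2\rho c')$, which is precisely the fourth listed condition. This closes the induction.

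\textbf{Main obstacle.} The delicate point is the bookkeeping of constants in the $a$-step. The stepsize conditions are tuned so that the $R_b$-feedback term and the $c'$ drift term both fit within the slack produced by $(1-\alpha\mu)$ after the quadratic term has been absorbed. I would first verify the arithmetic with the exact constants as listed, and only reallocate the slack (e.g.\ using a $\frac{\alpha\mu}{2}$ vs.\ $\frac{\alpha\mu}{4}$ split) if a factor of 2 fails. Since $\frac{\alpha\mu^2}{8\rho}+\frac{\alpha\mu^2}{8\rho}<\frac{3\alpha\mu^2}{8\rho}$, there appears to be sufficient room.
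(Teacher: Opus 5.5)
Your proof is correct and follows the paper's argument essentially verbatim. It uses the same joint induction ordered $a_k$ before $b_k$, the same absorption of the quadratic term via $R_a=\mu/(2\rho)$, the same bound of $\frac{\alpha\mu^2}{8\rho}=\frac{\alpha\mu}{4}R_a$ on each drift term (so the detour toward an $\frac{\alpha\mu}{8}R_a$ budget was unnecessary), and the same split of $\frac{\eta\mu}{2}R_b$ into two quarters in the $b$-step.

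One tacit point you share with the paper: replacing $b_{k-1}$ by $R_b$ to get $(1-\eta\mu+c_0\beta\rho)\,b_{k-1}\le(1-\frac{\eta\mu}{2})R_b$ requires $1-\frac{\eta\mu}{2}\ge 0$. This holds, e.g., when $\eta\le 1/L$ as elsewhere in the paper, but the statement as written does not impose it.
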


\begin{proof}
We proceed by induction on $k$. The base case $k=0$ holds by the lemma's assumptions. Now, assume $a_{k-1} \leq R_a$ and $b_{k-1} \leq R_b$ for some $k \geq 1$.

\textbf{Step 1: Bounds for $a_k$.} 
Substituting the inductive hypothesis $a_{k-1} \leq R_a = \frac{\mu}{2\rho}$ into the quadratic term of \eqref{eq:a_recursion}:
\begin{equation*}
    \frac{\alpha\rho}{2}a_{k-1}^2 \leq \left( \frac{\alpha\rho}{2} \cdot \frac{\mu}{2\rho} \right) a_{k-1} = \frac{\alpha\mu}{4} a_{k-1}.
\end{equation*}
Then \eqref{eq:a_recursion} becomes $a_k \leq (1 - \frac{3}{4}\alpha\mu)a_{k-1} + \frac{L\beta\rho}{\mu}b_{k-1} + \frac{L\beta c'}{\mu}$. To ensure $a_k \leq R_a$, it suffices to show the remaining terms are bounded by $\frac{1}{2}\alpha\mu R_a$:
\begin{enumerate}
    \item Using $\beta \leq \frac{\mu^3 \eta}{16 \rho L^2 c_0}$, we have: $\frac{L\beta\rho}{\mu}b_{k-1} \leq \frac{L\rho}{\mu} \cdot \frac{2\alpha c_0 L}{\eta \rho} \cdot \frac{\mu^3 \eta}{16 \rho L^2 c_0} = \frac{\alpha\mu^2}{8\rho} = \frac{1}{4}\alpha\mu R_a$.
    \item Using $\beta \leq \frac{\alpha\mu^3}{8\rho L c'}$, we have: $\frac{L\beta c'}{\mu} \leq \frac{L c'}{\mu} \cdot \frac{\alpha\mu^3}{8\rho L c'} = \frac{\alpha\mu^2}{8\rho} = \frac{1}{4}\alpha\mu R_a$.
\end{enumerate}
Summing these yields $a_k \leq (1 - \frac{3}{4}\alpha\mu)R_a + \frac{1}{2}\alpha\mu R_a = (1 - \frac{1}{4}\alpha\mu)R_a \leq R_a$.

\textbf{Step 2: Bounds for $b_k$.}
Using $\beta \leq \frac{\eta\mu}{2c_0\rho}$, the coefficient in \eqref{eq:b_recursion} satisfies $(1 - \eta\mu + c_0\beta\rho) \leq (1 - \frac{1}{2}\eta\mu)$. Then:
\begin{equation*}
    b_k \leq (1 - \frac{1}{2}\eta\mu) R_b + \alpha c_0 L a_k + c_0\beta c'.
\end{equation*}
To ensure $b_k \leq R_b$, we bound the coupling and constant terms by $\frac{1}{2}\eta\mu R_b$:
\begin{enumerate}
    \item Since $a_k \leq R_a$, $\alpha c_0 L a_k \leq \alpha c_0 L \frac{\mu}{2\rho} = \frac{\alpha c_0 L \mu}{2\rho} = \frac{1}{4}\eta\mu R_b$ (by definition of $R_b$).
    \item Using $\beta \leq \frac{\alpha\mu L}{2\rho c'}$, $c_0\beta c' \leq c_0 c' \frac{\alpha\mu L}{2\rho c'} = \frac{\alpha c_0 L \mu}{2\rho} = \frac{1}{4}\eta\mu R_b$.
\end{enumerate}
Summing these yields $b_k \leq (1 - \frac{1}{2}\eta\mu) R_b + \frac{1}{2}\eta\mu R_b = R_b$. This completes the induction.
\end{proof}
\subsection{Proof of Lemma \ref{lem:hPhi-Phi}}
\begin{proof}
First, according to the definition of $\widehat{\nabla}\Phi(x_k)$ and $\nabla\Phi(x_k)$, we have
    \begin{align*}
        &\norm{\widehat{\nabla}\Phi(x_k)-\nabla\Phi(x_k)}\\
        \leq &
    \norm{\nabla_1 f(x_k, \hy_k)+\nabla_x \hy_k(x_k)\nabla_2 f(x_k, \hy_k)-
    \nabla_1 f(x_k, y_k^*)-\nabla_x y_k^*(x_k)\nabla_2 f(x_k, y_k^*)}
    \\
    \leq & L\norm{\hy_k-y^*_k} + \norm{\nabla_x \hy_k(x_k)\nabla_2 f(x_k, \hy_k) -\nabla_x \hy_k(x_k)\nabla_2 f(x_k, y_k^*)} \\
    & + \norm{\nabla_x \hy_k(x_k)\nabla_2 f(x_k, y_k^*)-\nabla_x y_k^*(x_k)\nabla_2 f(x_k, y_k^*)} \\
    \leq & L\norm{\hy_k-y^*_k} + \alpha L^2\norm{\hy_k-y_k^*} +M\left(
    (1-\alpha\mu)\frac{L}{\mu} +\alpha\rho\norm{y_k^0-y_k^*}
    \right).
    \end{align*}
Using Lemma~\ref{lem:itd-radius}, $\norm{y_k^0-y_k^*}\leq R_y$, and the truncation term is absorbed into a uniform constant.
Then, we have
\begin{align}
     \norm{\widehat{\nabla}\Phi(x_k)-\nabla\Phi(x_k)}\leq C_4\norm{\hy_k-y^*_k} + C_5,
\end{align}
where $C_4=L+\alpha L^2 + \alpha\rho M$ and $C_5=M(1-\alpha\mu)\frac{L}{\mu}+\alpha\rho M R_y$.
\end{proof}

\subsection{Proof of Lemma \ref{lemma:hgPhi-gPhi-itd}}
\begin{proof}
According to Lemma \ref{lem:hPhi-Phi}, we have
    \begin{align*}
        &\norm{\widehat{\nabla}\Phi(x_k)-\nabla\Phi(x_k)}^2\leq \left(C_4\norm{\hy_k-y^*_k} + C_5\right)^2\leq 2C_4^2\norm{\hy_k-y^*_k}^2 + 2C_5^2\\
        \leq & 
        2C_4^2\left(1-\frac{\mu}{4L}\right)^k\norm{\hy_0-y^*_0}^2 + \frac{6L^3\beta^2C_4^2}{\mu^3}\sum_{j=0}^{k-1}\left(1-\frac{\mu}{4L}\right)^{k-1-j}\left(\norm{\nabla\Phi(x_j)}+C_5\right)^2 + 2C_5^2,
    \end{align*}
where the last inequality holds since \eqref{eq:y_hat-ystar-square}.
\end{proof}
\subsection{Proof of Theorem \ref{theo:itd}}
\begin{proof}
First, based on Lemma 2 in \cite{ji2021bilevel}, we have $\nabla \Phi(\cdot)$ is $L_\Phi$-Lipschitz, where $L_\Phi= L + \frac{2L^2+\rho M^2}{\mu} + \frac{2\rho L M+L^3}{\mu^2} + \frac{\rho L^2 M}{\mu^3}=\Theta(\kappa^3)$. Then, we have 
\begin{small}
    \begin{align*}
        \Phi(x_{k+1}) \leq & \Phi(x_{k}) -\left(\frac{\beta}{2}-\beta^2 L_{\Phi}\right)\norm{\nabla\Phi(x_k)}^2+\left(\frac{\beta}{2}+\beta^2 L_{\Phi}\right)\norm{\widehat{\nabla}\Phi(x_k)-\nabla\Phi(x_k)}^2\\
        \leq & \Phi(x_{k}) -\left(\frac{\beta}{2}-\beta^2 L_{\Phi}\right)\norm{\nabla\Phi(x_k)}^2+\left(\frac{\beta}{2}+\beta^2 L_{\Phi}\right)2C_4^2\left(1-\frac{\mu}{4L}\right)^k\norm{\hy_0-y^*_0}^2\\
        & + \left(\frac{\beta}{2}+\beta^2 L_{\Phi}\right)\frac{6L^3\beta^2C_4^2}{\mu^3}\sum_{j=0}^{k-1}\left(1-\frac{\mu}{4L}\right)^{k-1-j}\left(\norm{\nabla\Phi(x_j)}+C_5\right)^2 + \left(\frac{\beta}{2}+\beta^2 L_{\Phi}\right)2C_5^2.
    \end{align*}
    \end{small}
    Telescoping the above equation over $k$ from $0$ to $K-1$ yields
    \begin{align*}
        \Phi(x_{K}) \leq & \Phi(x_{0}) -\left(\frac{\beta}{2}-\beta^2 L_{\Phi}\right)\sum_{k=0}^{K-1}\norm{\nabla\Phi(x_k)}^2+\left(\frac{\beta}{2}+\beta^2 L_{\Phi}\right)C_4^2\frac{8L}{\mu}\norm{\hy_0-y^*_0}^2\\
        & + \left(\frac{\beta}{2}+\beta^2 L_{\Phi}\right)\frac{6L^3\beta^2C_4^2}{\mu^3}\frac{4L}{\mu}\sum_{k=0}^{K-1}\left(\norm{\nabla\Phi(x_k)}+C_5\right)^2 + \left(\frac{\beta}{2}+\beta^2 L_{\Phi}\right)2C_5^2K\\
        \leq & \Phi(x_{0})-A\sum_{k=0}^{K-1}\norm{\nabla\Phi(x_k)}^2+B_1+B_2K+\left(\frac{\beta}{2}+\beta^2 L_{\Phi}\right)2C_5^2K,
    \end{align*}
    where 
    \begin{align*}
        &A=\left(\frac{\beta}{2}-\beta^2 L_{\Phi}\right)-\left(\frac{\beta}{2}+\beta^2 L_{\Phi}\right)\frac{48L^4\beta^2 C_4^2}{\mu^4},\quad B_1=\left(\frac{\beta}{2}+\beta^2 L_{\Phi}\right)C_4^2\frac{8L}{\mu}\norm{\hy_0-y^*_0}^2,
        \\
        &B_2=\left(\frac{\beta}{2}+\beta^2 L_{\Phi}\right)\frac{48L^4\beta^2 C_4^2C_5^2}{\mu^4}.
    \end{align*}
    The last inequality uses $(a+b)^2\leq2a^2+2b^2$. By the choice of $\beta$, $\beta L_\Phi\leq1/8$ and $\frac{48L^4\beta^2C_4^2}{\mu^4}\leq3/16$, hence $A\geq33\beta/128>0$. Thus we have
    \begin{align*}
        \frac{1}{K}\sum^{K-1}_{k=0}\norm{\nabla\Phi(x_k)}^2\leq\frac{\Phi(x_0)-\Phi^\star}{AK}+\frac{B_1}{AK}+\frac{B_2}{A}+\left(\frac{\beta}{2}+\beta^2 L_{\Phi}\right)\frac{2C_5^2}{A},
    \end{align*}
    where $\beta=\Theta(\kappa^{-3})$, $L_\Phi =\gO(\kappa^{3}),\quad C_4=\gO(1),\quad C_5 =\gO(\kappa^{1})$. Thus we have $\frac{1}{A}=\gO(\kappa^{3})$, $\frac{B_1}{A}=\gO(\kappa^{1})$, and $\frac{B_2}{A}=\gO(1)$. Moreover, $\left(\frac{\beta}{2}+\beta^2L_\Phi\right)/A=\gO(1)$, so the final non-vanishing term is $\gO(C_5^2)=\gO(\kappa^2)$. Therefore, we have
    \begin{align*}
        \frac{1}{K}\sum^{K-1}_{k=0}\norm{\nabla\Phi(x_k)}^2=\gO\left(\frac{\kappa^3}{K}+\kappa^2\right).
    \end{align*}
Therefore the proof is completed.
\end{proof}

% \section{Additional experiments}\label{sec:add-exp}
\section{Experimental Settings of Real Data}\label{sec:exp-set}

In our experiments, we consider two widely used tasks, feature learning \citep{franceschi2018bilevel} and data reweighting for noisy labels \citep{shaban2019truncated}. 
% Please refer to Section 2 for the formulation of the two tasks.p

For feature learning \citep{franceschi2018bilevel, bao2021stability}, we evaluate the regression problem on the space dataset \citep{CC01a}. We randomly select $500$ and $500$ images for training and validation respectively. The outer variable $x$ represents the parameters in a linear layer of size $6\rightarrow128$ to extract features. The inner variable $y$ represents the parameters in a linear layer of size $128\rightarrow1$ to predict the value. The total iteration size is $10000$. 

For data reweighting, we evaluate a classification problem on the MNIST dataset following \citet{shaban2019truncated, bao2021stability}. MNIST consists of grayscale handwritten digits of size $28\times 28$. We randomly select 2000 and 500 images for training and validation respectively. The label of a training sample is replaced by
a uniformly sampled wrong label with probability $0.2$. $x$ represents the logits of the weights of the
training data, and $y$ represents the parameters in an MLP of size $784\rightarrow256\rightarrow10$. The learning rate is $1e^{-3}$ in the outer-level and $5e^{-2}$ in the inner-level.  The total iteration size is $3000$.

\end{document}

%% file: math_commands.tex
\usepackage{amsmath,amsfonts,bm, amsthm}

\newcommand{\dotprod}[1]{\left\langle #1\right\rangle}

\newcommand{\tv}{\tilde{v}}
\newcommand{\hv}{\hat{v}}
\newcommand{\hy}{\hat{y}}
\newcommand{\norm}[1]{\left\|#1\right\|}
\def\eqref#1{Eq.~(\ref{#1})}
\def\1{\bm{1}}

\DeclareMathAlphabet{\mathsfit}{\encodingdefault}{\sfdefault}{m}{sl}
\SetMathAlphabet{\mathsfit}{bold}{\encodingdefault}{\sfdefault}{bx}{n}

\def\gO{{\mathcal{O}}}

\def\sR{{\mathbb{R}}}

\DeclareMathOperator*{\argmin}{arg\,min}